\documentclass{article}

\PassOptionsToPackage{numbers, compress}{natbib}
\usepackage[preprint]{neurips_2026}

\usepackage[utf8]{inputenc}
\usepackage[T1]{fontenc}
\usepackage{hyperref}
\usepackage{url}
\usepackage{booktabs}
\usepackage{amsfonts}
\usepackage{amsmath}
\usepackage{amssymb}
\usepackage{nicefrac}
\usepackage{microtype}
\usepackage{xcolor}
\usepackage{graphicx}
\graphicspath{{results/comprehensive_e2e/}{results/synthetic_continuous_hb_vs_dibs_er_r5/}{results/}{results/e_rho_b_recovery/}{figures/}{./}}
\usepackage{multirow}
\usepackage{subcaption}
\usepackage{bm}
\usepackage{amsthm}

\newcommand{\D}{\mathcal{D}}

\newcommand{\vtheta}{\bm{\Theta}}
\newcommand{\KG}{\mathcal{K}}

\newcommand{\E}{\mathbb{E}}

\newtheorem{proposition}{Proposition}

\title{Coupling Noisy Pairwise Knowledge to the DAG Posterior for Causal Discovery}

\author{
  Guoliang Xu \\
  Columbia University
  \And
  James E. Corter \\
  Columbia University
}

\begin{document}
\maketitle

\begin{abstract}
External causal reports can improve structure learning from limited observations, but their reliability varies across sources and variable pairs. We introduce \emph{HB-NoisyKG}, a Bayesian framework that combines observational data with repeated causal reports from sources such as large language models. Each report is a noisy observation of a direct pair state implied by one DAG. A feature-conditioned Beta prior pools information about pair reliability, and a shared error matrix captures systematic mistakes. Alternating inference uses the graph posterior to refine reliability estimates, which determine how reports influence subsequent graph updates. The report likelihood uses only graph pair-state marginals, so the same observation layer supports discrete and continuous likelihoods in graph-only and joint inference. Against an 80-restart no-KG baseline, HB uses at most 80 total restarts and lowers mean SHD from $22.39$ to $16.06$ on five discrete benchmarks. On a physical light tunnel with random variable IDs and retained descriptions, HB lowers SHD from $39.00$ for no-KG to $27.30$. On continuous Sachs, graph-only BGe raises AUROC by $.121$ over no-KG Top-$K$. In a controlled synthetic study, continued updating also reduces mean reliability estimation error and held-out report log loss compared with one-time estimation.
\end{abstract}


\section{Introduction}
\label{sec:intro}

Researchers often have partial causal information from pathway databases, mechanistic reports, the literature, or large language models (LLMs). These sources can help recover structure from limited observations, but the quality of their claims varies. Some reports identify a direct relation, others reverse its direction, and still others describe an influence mediated by another measured variable. Repeated queries provide additional evidence, including disagreements and abstentions. The practical task is to combine this information with data and estimate how much influence each reported relation should have.

Bayesian structure learning represents uncertainty over directed acyclic graphs (DAGs)~\citep{madigan1995bayesian,friedman2003bayesian,lorch2021dibs}. It supports model averaging for causal quantities and posterior-based intervention design~\citep{tong2001active}, but inference is difficult because the graph space is combinatorial. External knowledge can guide this inference. Existing expert models estimate or marginalize source accuracy when scoring structures~\citep{amirkhani2017experts}; LLM-based approaches also use structural priors with a global confidence parameter~\citep{ban2025harmonized}. We study repeated reports whose reliability can vary among variable pairs and whose error patterns can be shared across the source.

We propose \emph{HB-NoisyKG}, an observation model in the Dawid--Skene tradition~\citep{dawid1979,raykar2010}. A DAG assigns each unordered pair a direct state: no edge or one of the two orientations. Reports provide noisy observations of that state. Pair reliability determines the probability of a correct report, and a shared error matrix describes the alternatives reported when the source errs. The model combines feature-conditioned pair reliability with a shared error distribution, linking report learning and graph inference through the pair-state marginals of a single DAG. Joint estimation uses report features and agreement between reports and plausible graphs. This interface supports collapsed graph-only inference and joint inference over graphs and data-model parameters. We use ``KG'' as shorthand for the repeated causal reports.

The experiments evaluate both structural recovery and source estimation. Budget-controlled comparisons on five discrete benchmarks and a physical light-tunnel task establish gains over data-only inference. A controlled synthetic study tests the value of continued updating through reliability estimation error and prediction of independent held-out reports. Independent reliability models and post-fit pooling examine how the estimated trust levels and their assignment affect graph recovery. Accuracy, coverage, and elicitation studies identify conditions under which reports remain useful. Continuous Sachs~\citep{sachs2005causal} and generated ER graphs evaluate the interface across posterior families, and sparse binary experiments assess execution at 100 and 125 variables.

We make the following contributions.
\begin{enumerate}
    \item We formulate a likelihood for repeated causal reports with pair-specific reliability, a feature-conditioned prior, and a shared error matrix (Section~\ref{sec:reliability_posterior}). This makes report influence estimable within Bayesian graph inference.
    \item We derive a report reward that depends only on the graph marginal, together with alternating graph, reliability, and confusion updates (Section~\ref{sec:attribution}). Closed-form or low-dimensional non-graph updates allow one observation-layer implementation to serve BDeu, BGe, and linear and nonlinear joint posteriors.
    \item We demonstrate structural gains over data-only inference on five classical networks and a physical system. Controlled experiments show that continued updating reduces mean reliability estimation error and held-out report log loss compared with one-time estimation (Tables~\ref{tab:search_budget_main}, \ref{tab:optical_main}, and~\ref{tab:feedback_main}).
\end{enumerate}

\section{Related work}
\label{sec:related}

\paragraph{Bayesian structure learning and knowledge guidance.}
Bayesian network (BN) structure learning spans score-based, constraint-based, MCMC, and differentiable approaches~\citep{kitson2023survey}; methods such as DiBS~\citep{lorch2021dibs} move inference to continuous latent spaces and support both graph-only and joint graph-and-parameter inference. Prior knowledge has long been used through hard constraints, soft edge priors, and side constraints~\citep{heckerman1995learning,castelo2000priors,de2004bayesian,li2018bayesian,mokhtarian2022learning}. \citet{amirkhani2017experts} model forward, reverse, and absent-edge opinions, allow missing opinions, and estimate or marginalize expert-level accuracy when scoring structures. HB-NoisyKG instead assigns feature-conditioned reliability to repeatedly reported pairs and estimates a shared distribution over incorrect states alongside the graph posterior. VaMSL~\citep{bjorkman2026mixtures} extends DiBS to mixtures of causal graphs and incorporates actively elicited expert feedback through a graph prior. Our focus is the observation model for repeated reports of unknown reliability about a single graph.

\paragraph{LLMs, noisy labels, and Dawid--Skene coupling.}
Recent work uses large language models as sources of causal hypotheses or structural priors~\citep{kiciman2023causal,zevcevic2023causal,jiralerspong2024efficient,darvariu2024priors}. Harmonized-Prior~\citep{ban2025harmonized} converts LLM output into a structural prior with globally tuned confidence $c$. MosaCD~\citep{lyu2025mosacd} combines conditional-independence tests with shuffled LLM queries to select reliable orientation seeds and prioritize propagation. \citet{long2023imperfect} use acyclicity and Markov-equivalence consistency to amend expert orientations. \citet{vashishtha2025order} use causal order to address direct-versus-mediated ambiguity. HB-NoisyKG retains repeated reports in an observation likelihood with pair-specific reliability and a shared error matrix inferred alongside $G$. Its latent pair states are constrained by a common DAG, and the likelihood can be used with either collapsed or joint graph-posterior inference.

\section{Method}
\label{sec:method}

Bayesian structure learning~\citep{koller2009probabilistic} targets either a collapsed graph posterior $p(G\mid\D)\propto p_0(G)p(\D\mid G)$ or a joint posterior $p(G,\vtheta\mid\D)$ retaining data-model parameters. HB-NoisyKG adds a likelihood for repeated observations of the direct pair states implied by $G$. The same report model is used in both posterior families.

\subsection{Core hierarchical model}
\label{sec:calibrated_prior}

Let $G$ be a DAG on variables $x_{1:d}$, let $\D$ denote observed data, and let $\vtheta$ denote optional data-model parameters. HB-NoisyKG defines the hierarchical posterior
\begin{equation}
\begin{aligned}
    p(G, \vtheta, \rho, B, \beta, \tau \mid \D, Y, \Phi)
    \propto {} & p_0(G)\, p(\vtheta \mid G)\, p(\D \mid G, \vtheta) \\
    & \times p(B)\, p(\beta)\, p(\tau) \\
    & \times \prod_{i<j} p(\rho_{ij} \mid \phi_{ij}, \beta, \tau) \\
    & \times \prod_{i<j}\prod_r p\!\left(Y_{ij}^{(r)} \mid Z_{ij}(G), \rho_{ij}, B\right),
\end{aligned}
\label{eq:master_posterior}
\end{equation}
where $G$ is the graph structure, $\vtheta$ are optional data-model parameters, and $Z_{ij}(G)\in\{\text{none}, i\to j, j\to i\}$ is the direct state implied by $G$ for pair $\{i,j\}$. The report parameters are pair reliability $\rho_{ij}$, a shared error matrix $B$, and prior hyperparameters $(\beta,\tau)$. The features $\Phi=\{\phi_{ij}\}$ describe the pairs, and $Y=\{Y_{ij}^{(r)}\}$ contains their repeated reports. The data term $p(\D\mid G,\vtheta)$ is setting-specific; the report likelihood is shared.

When features are report summaries, we compute $\Phi(Y)$ once and hold it fixed as covariates in an empirical reliability prior; $(\beta,\tau)$ are estimated by the MAP empirical-Bayes step.

This master posterior induces two posterior families. If $\vtheta$ is integrated out, then
\begin{equation}
    p(\D\mid G) = \int p(\vtheta\mid G)\, p(\D\mid G,\vtheta)\, d\vtheta
\end{equation}
and inference targets the collapsed graph-only posterior over $(G,\rho,B,\beta,\tau)$. Retaining $\vtheta$ gives the joint posterior over $(G,\vtheta,\rho,B,\beta,\tau)$. In either case, reports observe the graph through $Z_{ij}(G)$.

\subsection{Noisy pairwise observation model}
\label{sec:reliability_posterior}

We index the direct states by $\{0,1,2\}\equiv\{\text{none},i\to j,j\to i\}$. A direct-edge prompt asks for immediate parents relative to the measured variable set. A path-existence prompt asks whether one variable is an ancestor of another~\citep{ban2025harmonized,vashishtha2025order}. For example, $i\to k\to j$ establishes a path from $i$ to $j$, and a separate direct edge remains a distinct claim. Our likelihood models the direct state. The prompt comparisons in Appendix~\ref{app:prompt_semantics} evaluate these targets explicitly.

Let $Y_{ij}^{(r)}$ denote the $r$-th \emph{non-abstain} report for pair $\{i,j\}$. The principal LLM studies use five query rounds per batch; the synthetic report-count studies vary $R$. We model observed labels as conditionally independent given $Z_{ij}(G)$, $\rho_{ij}$, and $B$. Abstentions are missing labels: observed reports supply the update counts, and the abstention rate enters the pair features. Pair reliability $\rho_{ij}\in[0,1]$ is the modeled probability that an observed report matches the graph-implied state.

The shared $3\times3$ error matrix $B$ describes the alternative labels reported after an error. Its diagonal is zero, $B_{zz}=0$, and each off-diagonal row sums to one, $\sum_{y\ne z}B_{zy}=1$. Thus $\rho_{ij}$ governs report accuracy and $B$ distributes the error probability among the other states. This permits different levels of accuracy across pairs together with a common source bias, such as reversing direction. The observation model is
\begin{equation}
    p(Y_{ij}^{(r)} = y \mid Z_{ij}(G)=z, \rho_{ij}, B)
    = \rho_{ij}\, \mathbb{I}[y=z] + (1-\rho_{ij})\, B_{zy}.
\end{equation}
We place a row-wise Dirichlet prior on the off-diagonal entries of $B$, with the diagonal fixed to zero.
A shared $B$ pools sparse off-diagonal counts across pairs; five reports per pair provide limited information for estimating separate error matrices.

To pool information across pairs, we place a feature-conditioned prior on reliability:
\begin{equation}
    \mu_{ij} = \sigma(\phi_{ij}^\top \beta),
    \qquad
    \rho_{ij} \sim \mathrm{Beta}(\tau \mu_{ij},\, \tau(1-\mu_{ij})).
\end{equation}
Here $\sigma(\cdot)$ is the logistic sigmoid. In our experiments, $\phi_{ij}$ contains twelve simple per-pair statistics, including mean reported confidence, coverage (fraction of rounds with a non-abstain report), directional agreement among rounds, explicit-none rate (fraction of ``no relation'' reports), and rate of contradictory direction reports. We place a Gaussian prior on $\beta$ and a Gamma prior on $\tau$, and update them by a maximum a posteriori (MAP) empirical-Bayes step.
Specifically, $\beta \sim \mathcal{N}(0, 4I)$ and $\tau \sim \mathrm{Gamma}(2.0, 0.25)$, both weakly informative. At a fixed warm strength, the tested hyperprior changes have comparatively small effects (Appendix~\ref{app:sensitivity}).

\subsection{Closed-form reward and alternating inference}
\label{sec:attribution}

Inference starts from a graph posterior based on the observations. Its pair-state marginals determine the expected number of matching and incorrect reports. These counts update reliability and the shared error matrix. Expected report log likelihoods then become graph rewards, which are supplied to the next graph update. We alternate these steps using the variational / empirical-Bayes factorization
\begin{equation}
    q(G,\vtheta)\, q(\rho)\, q(B)\, \delta_{\beta,\tau},
\end{equation}
where $\delta_{\beta,\tau}$ denotes a point mass at the current empirical-Bayes estimate of $(\beta,\tau)$; the graph-only case is recovered when $\vtheta$ is integrated out. The only quantity required by the noisy-knowledge layer is the graph marginal
\begin{equation}
    q(G) = \int q(G,\vtheta)\, d\vtheta,
\end{equation}
which makes the same observation model reusable across graph-only and joint settings that expose weighted graph posterior approximations.

Given weighted DAG samples, we first compute pair-state marginals
\begin{equation}
    q_{ij}(z) = \sum_G q(G)\, \mathbb{I}\!\big[Z_{ij}(G)=z\big], \qquad z\in\{\text{none}, i\to j, j\to i\}.
\end{equation}
These marginals produce closed-form Beta updates for $q(\rho_{ij})$ and Dirichlet-style updates for the off-diagonal rows of $q(B)$; Appendix~\ref{app:implementation} gives the explicit update rules. We then summarize the noisy-knowledge contribution by a per-pair reward.

Let $n_{ij}(y)$ be the non-abstain count of observed label $y\in\{\text{none}, i\to j, j\to i\}$ and let
\begin{equation}
    R_{ij} = \sum_y n_{ij}(y).
\end{equation}
For a candidate latent state $z$, the uncentered reward is
\begin{equation}
\begin{aligned}
    \psi^{\mathrm{raw}}_{ij}(z)
    = {} & n_{ij}(z)\, \E_q[\log \rho_{ij}]
    + \big(R_{ij} - n_{ij}(z)\big)\, \E_q[\log (1-\rho_{ij})] \\
    & + \sum_{y \neq z} n_{ij}(y)\, \E_q[\log B_{zy}],
\end{aligned}
\label{eq:psi_raw}
\end{equation}
with
\begin{equation}
    \E_q[\log \rho_{ij}] = \psi(a_{ij}) - \psi(a_{ij}+b_{ij}),
    \qquad
    \E_q[\log (1-\rho_{ij})] = \psi(b_{ij}) - \psi(a_{ij}+b_{ij}),
\end{equation}
and
\begin{equation}
    \E_q[\log B_{zy}]
    = \psi(\alpha_{zy}) - \psi\!\left(\sum_{y'\neq z}\alpha_{zy'}\right),
\end{equation}
where $\psi(\cdot)$ on the right-hand side denotes the digamma function and $(a_{ij}, b_{ij})$ and $\alpha$ are the current Beta/Dirichlet posterior parameters (prior pseudocounts plus expected counts from the current $q_{ij}$).

We then center rewards within each pair,
\begin{equation}
    \psi_{ij}(z) = \psi^{\mathrm{raw}}_{ij}(z)
    - \frac{1}{3}\sum_{z'} \psi^{\mathrm{raw}}_{ij}(z'),
\end{equation}
which removes state-independent offsets and preserves all graph rankings. The centered rewards enter the graph posterior update. In the collapsed graph-only family,
\begin{equation}
    q(G) \propto p_0(G)\, p(\D\mid G)\, \exp\!\left(\sum_{i<j} \psi_{ij}(Z_{ij}(G))\right),
    \label{eq:mstep_graph_only}
\end{equation}
and in the explicit joint family,
\begin{equation}
    q(G,\vtheta) \propto p_0(G)\, p(\vtheta\mid G)\, p(\D\mid G,\vtheta)\, \exp\!\left(\sum_{i<j} \psi_{ij}(Z_{ij}(G))\right).
    \label{eq:mstep_joint}
\end{equation}
The noisy-knowledge term depends only on $G$, not directly on $\vtheta$.

\subsection{Instantiating the alternating inference procedure}

The graph backend supplies weighted graph samples for $q(G)$. We use Stein variational gradient descent (SVGD)~\citep{liu2016stein} on DiBS~\citep{lorch2021dibs} for continuous joint inference, and Top-$K$ local search for discrete benchmarks and graph-only Sachs. Their data models are BDeu~\citep{heckerman1995learning}, BGe~\citep{geiger2002parameter,kuipers2014addendum}, and explicit linear or nonlinear SEM likelihoods.

Before the first reliability update, we blend the data-only pair-state marginal with empirical report frequencies, weighted by report agreement. The blend helps avoid premature downweighting of consistent reports when the initial graph posterior is diffuse. The strength $\lambda_{\mathrm{warm}}$ controls this one-time blend. It is $1$ in the benchmark suite and is specified separately for the controlled studies. After each reliability update, a low-dimensional L-BFGS-B MAP step estimates $(\beta,\tau)$ from $\E_q[\log p(\rho\mid\beta,\tau)]+\log p(\beta)+\log p(\tau)$. Appendix~\ref{app:implementation} gives the update rules and study settings. Exact or objective-improving block updates make the variational/MAP objective nondecreasing; Appendix~\ref{app:proofs} states the corresponding conditions and the finite-report reliability bound.

\subsection{Properties of the observation layer}

The observation layer has an exact data-only limit and a conditional reliability-concentration property. The propositions below characterize these cases. Centering invariance and finite per-pair influence are proved in Appendix~\ref{app:proofs}.

\begin{proposition}[Reduction to data-only]
\label{prop:data_only_reduction}
If $\psi_{ij}(z)=0$ for all pairs and all latent states, then Equations~\eqref{eq:mstep_graph_only} and~\eqref{eq:mstep_joint} reduce to ordinary data-only Bayesian structure learning. In particular, this occurs when all knowledge reports abstain, so $R_{ij}=0$ for every pair and the reward term vanishes.
\end{proposition}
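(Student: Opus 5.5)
The plan is to argue directly from the form of the graph updates in Equations~\eqref{eq:mstep_graph_only} and~\eqref{eq:mstep_joint}, and then to check that the abstention case meets the hypothesis.

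\textbf{First part.} Assume $\psi_{ij}(z)=0$ for every pair $\{i,j\}$ and every $z\in\{0,1,2\}$. Then for every DAG $G$ the exponent $\sum_{i<j}\psi_{ij}(Z_{ij}(G))$ is zero, so $\exp(\cdot)=1$. Equation~\eqref{eq:mstep_graph_only} becomes
\begin{equation*}
q(G)\propto p_0(G)\,p(\D\mid G),
\end{equation*}
which is the collapsed data-only posterior. Equation~\eqref{eq:mstep_joint} becomes
\begin{equation*}
q(G,\vtheta)\propto p_0(G)\,p(\vtheta\mid G)\,p(\D\mid G,\vtheta),
\end{equation*}
which is the joint data-only posterior. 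The normalizing constants are taken over the same space, so the normalized distributions coincide.

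\textbf{A subtlety about the report parameters.} Under the factorization $q(G,\vtheta)\,q(\rho)\,q(B)\,\delta_{\beta,\tau}$, the parameters $\rho$, $B$, $\beta$ and $\tau$ still get updated. However, the graph factor depends on them only through $\psi$. So once $\psi\equiv 0$, the graph marginal at every iteration equals the data-only posterior, whatever those blocks do. In the graph-only case the statement concerns $q(G)$, and in the joint case it concerns $q(G,\vtheta)$.

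\textbf{Second part: all reports abstain.} If every report abstains, then $n_{ij}(y)=0$ for all $y$, and hence $R_{ij}=0$. Substituting into Equation~\eqref{eq:psi_raw}, the first term is multiplied by $n_{ij}(z)=0$ and the second by $R_{ij}-n_{ij}(z)=0$. Each summand of the third term is multiplied by some $n_{ij}(y)=0$. Therefore $\psi^{\mathrm{raw}}_{ij}(z)=0$ for all $z$, and after centering $\psi_{ij}(z)=0$, so the first part applies.

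\textbf{Main obstacle.} The one step needing care is that these zero coefficients multiply finite expectations; otherwise a $0\cdot\infty$ term could appear. The digamma expressions are finite provided $a_{ij},b_{ij}>0$ and every off-diagonal $\alpha_{zy}>0$.
\begin{itemize}
\item With no reports, the expected counts are zero. The posterior parameters then equal the prior pseudocounts.
\item The Beta parameters are $\tau\mu_{ij}$ and $\tau(1-\mu_{ij})$. These are positive because $\tau>0$ under the Gamma prior and $\mu_{ij}\in(0,1)$ as a sigmoid output.
\item The Dirichlet prior pseudocounts on $B$ are positive.
\end{itemize}
This holds regardless of how the warm-start blend or the $(\beta,\tau)$ step behaves, since with no reports they cannot alter these positivity facts. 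Hence every expectation is finite and each reward term is exactly $0$.
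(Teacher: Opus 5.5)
Your proof is correct and follows the paper's argument. When $\psi\equiv 0$ the exponential reward factor is identically one, which leaves the data-only posteriors; when all reports abstain every $n_{ij}(y)$ and $R_{ij}$ vanishes, so $\psi^{\mathrm{raw}}_{ij}\equiv 0$ and centering preserves zero. Your extra check that the digamma expectations are finite, so no $0\cdot\infty$ term arises, is a harmless refinement that the paper leaves implicit here and relies on the same positivity conditions as its finite-influence proposition.
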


\noindent\emph{Proof sketch.} The exponential reward factor becomes identically one, leaving only the usual prior and data likelihood terms.

\begin{proposition}[Reliability concentration under pair-state convergence]
\label{prop:reliability_concentration}
Fix a pair $\{i,j\}$ with prior Beta parameters $a_{ij}, b_{ij} > 0$ and true reliability $\rho^*_{ij} \in (0,1)$. Assume (a)~the graph posterior has converged to the true pair state, $q_{ij}(z^*_{ij}) = 1$ with $z^*_{ij} = Z_{ij}(G^*)$, and (b)~the $R_{ij}$ non-abstain reports are conditionally i.i.d.\ under the observation model. Let $n^+_{ij}$ be the number of reports matching $z^*_{ij}$. Then the mean-field variational posterior $q(\rho_{ij}) = \mathrm{Beta}(a_{ij} + n^+_{ij},\, b_{ij} + R_{ij} - n^+_{ij})$ satisfies
\begin{enumerate}
    \item[(i)] $\E_q[\rho_{ij}] \to \rho^*_{ij}$ in probability as $R_{ij}\to\infty$;
    \item[(ii)] $\mathrm{Var}_q[\rho_{ij}] = O(1/R_{ij})$, so the posterior standard deviation is $O(1/\sqrt{R_{ij}})$.
\end{enumerate}
\end{proposition}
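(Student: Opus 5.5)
Under assumption (a), $q_{ij}$ places all mass on $z^*_{ij}$. The expected counts fed into the Beta update are therefore exactly $n^+_{ij}$ matches and $R_{ij}-n^+_{ij}$ mismatches. This gives the stated form $q(\rho_{ij})=\mathrm{Beta}(a_{ij}+n^+_{ij},\, b_{ij}+R_{ij}-n^+_{ij})$. With this closed form, both claims reduce to elementary facts about the Beta distribution combined with a law of large numbers for $n^+_{ij}$.

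\textbf{Distribution of $n^+_{ij}$.} Under the observation model with true state $z^*_{ij}$, a single report equals $z^*_{ij}$ with probability
\[
\rho^*_{ij} + (1-\rho^*_{ij})B_{z^*z^*}.
\]
Because $B$ has zero diagonal, $B_{z^*z^*}=0$, so this probability is exactly $\rho^*_{ij}$. The zero-diagonal constraint is what makes $\rho$ identifiable as match probability; I would state this explicitly. By (b), $n^+_{ij}\sim\mathrm{Binomial}(R_{ij},\rho^*_{ij})$. The weak law of large numbers (or Chebyshev with variance $\rho^*(1-\rho^*)/R_{ij}$) then gives $n^+_{ij}/R_{ij}\to\rho^*_{ij}$ in probability.

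\textbf{Part (i).} The posterior mean is
\[
\E_q[\rho_{ij}] = \frac{a_{ij}+n^+_{ij}}{a_{ij}+b_{ij}+R_{ij}} = \frac{a_{ij}/R_{ij} + n^+_{ij}/R_{ij}}{(a_{ij}+b_{ij})/R_{ij}+1}.
\]
The prior terms divided by $R_{ij}$ vanish deterministically. The continuous mapping theorem, or Slutsky, then yields convergence in probability to $\rho^*_{ij}$. Alternatively, one can bound $|\E_q[\rho_{ij}]-n^+_{ij}/R_{ij}|\le (a_{ij}+b_{ij})/(a_{ij}+b_{ij}+R_{ij})$ deterministically.

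\textbf{Part (ii).} Write $\alpha=a_{ij}+n^+_{ij}$ and $\beta'=b_{ij}+R_{ij}-n^+_{ij}$, so that $\alpha+\beta'=a_{ij}+b_{ij}+R_{ij}$. The Beta variance satisfies
\[
\mathrm{Var}_q[\rho_{ij}]=\frac{\alpha\beta'}{(\alpha+\beta')^2(\alpha+\beta'+1)}\le \frac{1}{4(\alpha+\beta'+1)}\le\frac{1}{4(R_{ij}+1)},
\]
using $\alpha\beta'\le(\alpha+\beta')^2/4$. This bound is deterministic, holding for every realization of the reports and not merely in probability. It gives $O(1/R_{ij})$ variance and $O(1/\sqrt{R_{ij}})$ standard deviation. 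One may also note the sharper asymptotic $\mathrm{Var}_q\approx\rho^*(1-\rho^*)/R_{ij}$ in probability, but it is not needed.

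\textbf{Main obstacle.} The only genuinely delicate point is interpretive rather than technical: confirming that the update in Appendix~\ref{app:implementation} really reduces to the stated Beta form when $q_{ij}$ is degenerate. In particular, I must check that the feature-conditioned prior parameters $a_{ij},b_{ij}$ are held fixed, i.e., not re-estimated as functions of $Y$ through $\Phi(Y)$ and $(\beta,\tau)$. If they did move with $R_{ij}$, I would need them to stay bounded so that the $a_{ij}/R_{ij}$ terms still vanish. I would take the statement's fixed $a_{ij},b_{ij}>0$ at face value and remark that boundedness suffices otherwise.
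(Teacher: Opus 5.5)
Your proposal is correct and follows the paper's proof essentially step for step. The zero diagonal of $B$ gives $n^+_{ij}\sim\mathrm{Binomial}(R_{ij},\rho^*_{ij})$, the weak law together with continuous mapping handles the posterior mean, and $\alpha\beta'\le(\alpha+\beta')^2/4$ gives the $O(1/R_{ij})$ variance bound. Your deterministic bounds, $|\E_q[\rho_{ij}]-n^+_{ij}/R_{ij}|\le (a_{ij}+b_{ij})/(a_{ij}+b_{ij}+R_{ij})$ and $\mathrm{Var}_q[\rho_{ij}]\le 1/(4(R_{ij}+1))$, are valid minor sharpenings of the same argument.
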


\noindent\emph{Proof sketch.} Given pair-state convergence, the observation model implies that each non-abstain report matches $z^*_{ij}$ with probability $\rho^*_{ij}$, so $n^+_{ij}\mid R_{ij}\sim\mathrm{Binomial}(R_{ij},\rho^*_{ij})$. The Beta posterior mean equals $(a_{ij}+n^+_{ij})/(a_{ij}+b_{ij}+R_{ij})$, which tends to $\rho^*_{ij}$ by the weak law of large numbers. The Beta variance $\alpha\beta/[(\alpha+\beta)^2(\alpha+\beta+1)]$ is bounded above by $1/[4(\alpha+\beta+1)] = O(1/R_{ij})$.

For an imperfect pair marginal with $q_{ij}(z^*_{ij})\geq1-\epsilon_{ij}$, the finite-report bound in Equation~\eqref{eq:finite_report_bound} separates three error sources: graph-state error, report-sampling variation, and prior shrinkage. With fixed prior parameters, their bounds scale as $O(\epsilon_{ij})$, $O(R_{ij}^{-1/2})$, and $O(R_{ij}^{-1})$, respectively. More reports reduce the latter two terms; accurate graph marginals remain important for reliable source estimation.

\paragraph{Roles of reliability and shared errors.}
Reliability uses match counts and pair features; $B$ pools the off-diagonal counts conditional on each latent state. This division permits accuracy to vary among pairs and learns a common error distribution. The known-graph experiment in Appendix~\ref{app:rho_b_recovery} evaluates recovery of both quantities as reports accumulate.

\section{Experiments}
\label{sec:experiments}

We first evaluate structure recovery on the five discrete benchmarks and the physical light tunnel (Sections~\ref{sec:exp_discrete} and~\ref{sec:exp_optical}). We then study reliability estimation and continued updating (Section~\ref{sec:exp_mechanism}), followed by source quality, abstention, report semantics, and inference stability (Section~\ref{sec:exp_robustness}). Continuous Sachs and generated ER graphs evaluate the interface across posterior families (Section~\ref{sec:exp_continuous}). Section~\ref{sec:exp_computational} reports computational cost.

\begin{figure}[tbp]
\centering
\includegraphics[width=\linewidth]{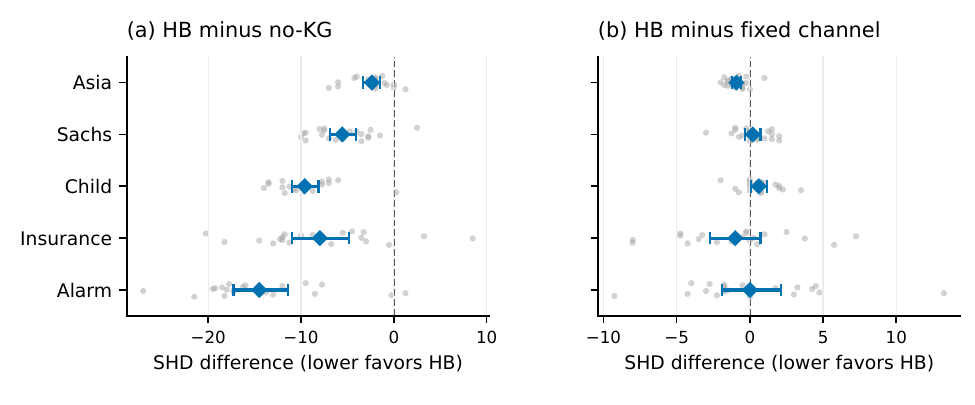}
\caption{Matched report integration on five networks. Dots are paired SHD differences for 20 data seeds, averaging four report batches within each seed. Diamonds show means; bars are unadjusted 95\% paired-bootstrap intervals, conditional on the benchmark graph and the four batches. Negative values favor HB. Panels use separate horizontal scales. All methods share the data and Top-$K$ backend; HB and the fixed-channel control receive identical reports.}
\label{fig:discrete_main}
\end{figure}

\subsection{Setup}
\label{sec:exp_setup}

\paragraph{Data.}
The discrete benchmarks are Asia ($d=8$, $8$ edges), Sachs ($d=11$, $17$ edges), Child ($d=20$, $25$ edges), Insurance ($d=27$, $52$ edges), and Alarm ($d=37$, $46$ edges) from bnlearn. Their sample-size studies use $N\in\{100,300,500,1000,5000\}$. Continuous Sachs~\citep{sachs2005causal} has 11 variables and 7,466 observations, standardized by condition. The physical light-tunnel task from Causal Chambers~\citep{gamella2025chambers} has 20 variables, ten disjoint blocks of 1,000 observations, and nested 250-row subsets. Synthetic ER studies use linear and nonlinear SEMs at $d\in\{20,30\}$, five graphs per setting, and five datasets per graph. Reusing each dataset at three report accuracies gives 75 paired conditions per configuration.

\paragraph{Reports.}
GPT-4.1-mini supplies repeated causal reports. The benchmark suite uses a causal-candidate prompt; the matched naming and reliability studies request direct parents. Four five-round batches per naming condition are paired with twenty data seeds. In the optical task, forty calls cover natural labels and random IDs with identical physical descriptions. Synthetic reports permit controlled changes in accuracy, edge retention, spurious claims, direction errors, and abstention. The naming intervention changes labels at a fixed domain and omits descriptions, measuring the role of variable semantics. Prompt definitions and the additional provider comparison appear in Appendices~\ref{app:implementation} and~\ref{app:source_robustness}.

\paragraph{Comparisons.}
The matched discrete study compares HB with no-KG Top-$K$ inference and a fixed-channel control ($w=.8$, $\epsilon=.05$). Each no-KG or fixed-channel fit makes one graph call; HB uses initialization and up to three report-driven updates, with twenty restarts per call. A further budget comparison increases each control to 80 restarts in one call, matching HB's maximum total number of restarts. The broader benchmark suite includes DiBS and DiBS+~\citep{lorch2021dibs}, GES~\citep{chickering2002optimal}, PC~\citep{spirtes2000causation}, MMHC~\citep{tsamardinos2006max}, bootstrap BPC and BGES~\citep{friedman1999data}, and Harmonized-Prior~\citep{ban2025harmonized}. KG-SoftMAP~\citep{xu2026kgsoftmap} uses greedy MAP search with a fixed soft edge prior; Random-KG-SoftMAP uses uniformly sampled edges. Continuous comparisons include NOTEARS~\citep{zheng2018dags}, GOLEM~\citep{ng2020golem}, greedy BGe search, DiBS variants, and no-KG Top-$K$. Harmonized-Prior~\citep{ban2025harmonized} uses a path-existence prior with fixed confidence on the same asserted Sachs pairs. The optical task uses four graph calls for all methods, including fixed-reliability HB with $\rho=.8$ and learned $B$. Table~\ref{tab:study_budgets} gives the study budgets.

\paragraph{Metrics and replication.}
We evaluate SHD, E-SHD, AUROC, report F1, and negative marginal log likelihood. AUROC uses posterior edge probabilities over off-diagonal ordered pairs; bootstrap edge frequencies provide scores for point-estimate continuous baselines. The benchmark suite counts a reversal once, and the matched, local Sachs, and ordered-adjacency studies count it twice. Appendix~\ref{app:implementation} defines the point estimators and E-SHD conventions. Negative I-MLL is query-averaged interventional predictive cross-entropy. Mean reliability $\bar\rho$ averages pair posterior means. The matched study averages four source batches within each of twenty data seeds for paired comparisons. Benchmark-suite summaries use five seed indices per network on cached observations; Sachs standard deviations describe inference restarts on one dataset. Each controlled study states its own replication units.

\subsection{Primary evidence: discrete Bayesian network recovery}
\label{sec:exp_discrete}

Report integration improves recovery even when the data-only baseline receives more search restarts per call. Increasing baseline search from 20 to 80 restarts lowers mean SHD from $23.96$ to $22.39$ for no-KG and from $16.27$ to $14.50$ for the fixed channel (Table~\ref{tab:search_budget_main}). On the same inputs, full HB gives $16.06$ with 20 restarts per graph call and at most 80 in total. Its SHD advantage over the larger no-KG search is $6.33$, with improvements on all five networks ($p_{\mathrm{Holm}}\leq.028$ across ten network--comparator tests). Against the larger fixed-channel search, HB has lower SHD on Asia; the fixed channel has lower mean SHD on the other four networks, with significant differences on Insurance and Alarm. The fixed channel also benefits from more search. Appendix~\ref{app:search_budget} gives the protocol, intervals, and secondary metrics.

\begin{table}[tbp]
\centering
\small
\caption{Mean SHD in the search-budget comparison. Each network uses 20 data seeds and four report batches. No-KG and the fixed channel use one graph call with the indicated restart count. Full HB uses 20 restarts per call, with at most four calls. All methods share the inputs of the reliability-assignment study; its full-HB fits provide the HB column.}
\label{tab:search_budget_main}
\begin{tabular}{lrrrrr}
\toprule
 & \multicolumn{2}{c}{No-KG} & \multicolumn{2}{c}{Fixed channel} & Full HB \\
\cmidrule(lr){2-3}\cmidrule(lr){4-5}
Network & 20 & 80 & 20 & 80 & $\leq80$ total \\
\midrule
Asia & $4.30$ & $3.25$ & $2.84$ & $2.68$ & $1.93$ \\
Sachs & $12.60$ & $11.20$ & $7.04$ & $6.68$ & $7.24$ \\
Child & $16.90$ & $12.20$ & $6.84$ & $6.83$ & $7.46$ \\
Insurance & $42.45$ & $40.10$ & $35.59$ & $31.23$ & $34.61$ \\
Alarm & $43.55$ & $45.20$ & $29.03$ & $25.10$ & $29.06$ \\
\midrule
Mean & $23.96$ & $22.39$ & $16.27$ & $14.50$ & $16.06$ \\
\bottomrule
\end{tabular}
\end{table}

The matched direct-parent study in Table~\ref{tab:matched_reports_main}(a) tests report integration with a common Top-$K$ backend and shared observations. Paired comparisons average four independently queried report batches within each of 20 data seeds. With canonical names, HB reduces mean SHD from $24.03$ without reports to $16.03$, improving on all five networks (each two-sided paired Wilcoxon $p<.001$, before multiplicity correction). The fixed-channel control averages $16.25$, giving HB $.22$ lower mean SHD across networks. After Holm correction across the five canonical-network SHD comparisons, the improvement over this control is significant only on Asia ($\Delta\mathrm{SHD}=-.91$, $p_{\mathrm{Holm}}=.0026$). Appendix~\ref{app:name_only} gives the complete paired comparisons and naming interventions.

Figure~\ref{fig:discrete_main} shows the paired contrasts from this matched study. The broader benchmark suite appears in Figure~\ref{fig:benchmark_suite_full}. It uses cached observations, causal-candidate reports, $\lambda_{\mathrm{warm}}=1$, and a reversal cost of one; the matched study uses newly sampled observations, $\lambda_{\mathrm{warm}}=.5$ (also used in the optical study), and a reversal cost of two. Their absolute SHD values therefore describe different protocols. The five-seed benchmark summaries give HB mean SHD $14.84$ and AUROC $.925$. Relative to DiBS-only, the differences are $-7.96$ SHD and $+.178$ AUROC; relative to MMHC, they are $-2.44$ and $+.222$. SHD measures a selected structure, whereas AUROC evaluates posterior edge rankings. Mean negative MLL decreases from $9.77$ for DiBS-only to $9.62$ for HB, compared with $9.81$ for MMHC (Table~\ref{tab:discrete_significance_full}). Interventional prediction results are included in the full benchmark figure.

\begin{figure}[tbp]
\centering
\includegraphics[width=\linewidth]{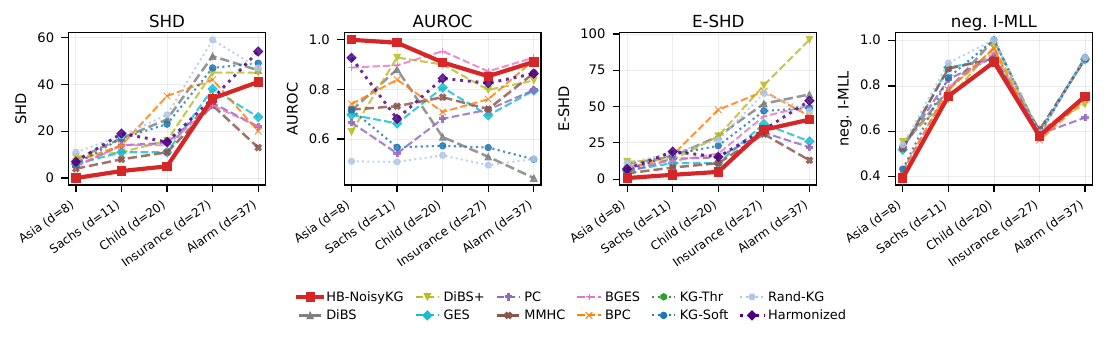}
\caption{Complete single-run benchmark comparison ($N=1000$): SHD, E-SHD, AUROC, and interventional prediction log loss. KG-Thr retains confidence-$\geq.70$ edges subject to acyclicity; KG-Soft and Rand-KG denote KG-SoftMAP and Random-KG-SoftMAP. Figure~\ref{fig:discrete_main} separately presents the matched, repeated-run report-integration comparison.}
\label{fig:benchmark_suite_full}
\end{figure}

\begin{table}[tbp]
\centering
\small
\caption{Matched direct-parent study ($N=1000$). (a) Structure recovery with canonical-name reports: HB and the fixed-channel control use identical reports and the same Top-$K$ backend. SHD averages four source batches and 20 shared data seeds; reversals cost two. The mean weights networks equally. (b) Direct-edge F1 of aggregated LLM reports under three naming conditions, averaged over four source batches. The domain is fixed and variable descriptions are absent in all conditions.}
\label{tab:matched_reports_main}
\begin{minipage}[t]{.47\linewidth}
\centering
\textit{(a) Structure recovery (SHD $\downarrow$)}\par\smallskip
\setlength{\tabcolsep}{4pt}
\begin{tabular}{lrrr}
\toprule
Network & No-KG & HB & Fixed channel \\
\midrule
Asia & $4.30$ & $1.93$ & $2.84$ \\
Sachs & $12.80$ & $7.25$ & $7.05$ \\
Child & $17.05$ & $7.45$ & $6.84$ \\
Insurance & $42.45$ & $34.49$ & $35.49$ \\
Alarm & $43.55$ & $29.05$ & $29.04$ \\
\midrule
Mean & $24.03$ & $16.03$ & $16.25$ \\
\bottomrule
\end{tabular}
\end{minipage}\hfill
\begin{minipage}[t]{.51\linewidth}
\centering
\textit{(b) Aggregated-report F1 $\uparrow$}\par\smallskip
\setlength{\tabcolsep}{3pt}
\begin{tabular}{lrrr}
\toprule
Network & Canonical & Anonymous & Pseudo \\
\midrule
Asia & $.591$ & $.000$ & $.000$ \\
Sachs & $.392$ & $.018$ & $.000$ \\
Child & $.480$ & $.000$ & $.000$ \\
Insurance & $.401$ & $.000$ & $.000$ \\
Alarm & $.355$ & $.000$ & $.228$ \\
\bottomrule
\end{tabular}
\end{minipage}
\end{table}

\subsection{External reports in a physical system}
\label{sec:exp_optical}

The light tunnel provides a physical task with an intervention-validated reference graph~\citep{gamella2025chambers}. We use BGe as a linear-Gaussian approximation to its nonlinear measurements. GPT-4.1-mini receives the same physical descriptions under natural labels or random IDs. Four batches of five rounds per presentation give forty calls. All methods share the observational blocks, graph prior, and four-call search budget; the reference graph supplies the evaluation labels.

The reports are sparse and imperfect: aggregated direct-edge F1 is $.407$ with natural labels and $.395$ with IDs, covering $9.9\%$ and $8.8\%$ of unordered pairs. At $N=1000$ with IDs, HB lowers mean SHD from $39.00$ to $27.30$ and raises AUROC from $.738$ to $.764$ (Table~\ref{tab:optical_main}). The paired SHD difference is $-11.70$, with an exploratory 95\% bootstrap interval $[-16.68,-6.43]$. HB also improves both means over no-KG at $N=250$. Report integration therefore improves recovery in this physical system under both variable presentations.

HB also improves mean SHD and AUROC over independently learned common reliability in all four conditions. Fixed-reliability HB has higher AUROC throughout and lower SHD at $N=250$; full HB has lower mean SHD at $N=1000$. The two criteria capture different uses of the inferred graph: selecting a structure and ranking possible edges. Appendix~\ref{app:optical} reports E-SHD, Brier score, and the paired intervals.

\begin{table}[tbp]
\centering
\small
\caption{Optical-system recovery. Fixed-$\rho$ HB fixes reliability at $.8$ and learns $B$. Both label conditions retain identical physical descriptions. Each report-based entry averages ten data blocks and four report batches. No-KG is fitted once per block and sample size, then reused across label conditions.}
\label{tab:optical_main}
\begin{tabular}{rlrrrr}
\toprule
 & & \multicolumn{2}{c}{Natural labels} & \multicolumn{2}{c}{Random IDs} \\
\cmidrule(lr){3-4}\cmidrule(lr){5-6}
$N$ & Model & SHD $\downarrow$ & AUROC $\uparrow$ & SHD $\downarrow$ & AUROC $\uparrow$ \\
\midrule
1000 & No-KG & $39.000$ & $.7383$ & $39.000$ & $.7383$ \\
 & Common reliability & $44.800$ & $.6961$ & $42.500$ & $.6992$ \\
 & Fixed-$\rho$ HB & $31.200$ & $.7872$ & $28.300$ & $.8182$ \\
 & Full HB & $29.550$ & $.7579$ & $27.300$ & $.7643$ \\
\midrule
250 & No-KG & $47.900$ & $.6938$ & $47.900$ & $.6938$ \\
 & Common reliability & $55.025$ & $.6347$ & $55.700$ & $.6328$ \\
 & Fixed-$\rho$ HB & $31.775$ & $.7856$ & $33.350$ & $.7994$ \\
 & Full HB & $36.250$ & $.7310$ & $37.600$ & $.7306$ \\
\bottomrule
\end{tabular}
\end{table}

\subsection{Reliability learning and knowledge use}
\label{sec:exp_mechanism}

Continued alternation reduces mean pair-reliability error from $.240$ to $.152$ and held-out report log loss from $.874$ to $.762$ relative to one-time estimation (Table~\ref{tab:feedback_main}). This controlled synthetic study also compares report-only estimation before graph search; all arms use the same graph-call budget. The twelve primary SHD contrasts have Holm-adjusted $p\geq.787$. At $\alpha=.75$, full HB gives SHD $2.40$, compared with $3.83$ for an intercept-only reliability prior. At $\alpha=0$, the corresponding values are $13.40$ and $7.53$, and no-KG gives $11.10$. Appendix~\ref{app:feedback_controls} reports all accuracy levels and the independent-feature controls.

\begin{table}[tbp]
\centering
\small
\caption{Continued reliability estimation with synthetic reports. Means average three networks, four generating accuracies, and ten data seeds per condition. Reliability MAE averages $|\E_q\rho_{ij}-\alpha|$ over pairs; log loss evaluates five independent held-out reports per pair. Lower is better in all columns.}
\label{tab:feedback_main}
\begin{tabular}{lrrr}
\toprule
Inference & SHD & Reliability MAE & Held-out log loss \\
\midrule
No-KG & $11.10$ & -- & -- \\
Report-only estimation & $7.83$ & $.246$ & $.983$ \\
One-time estimation & $7.00$ & $.240$ & $.874$ \\
Full alternation & $6.63$ & $.152$ & $.762$ \\
\bottomrule
\end{tabular}
\end{table}

\begin{table}[tbp]
\centering
\small
\setlength{\tabcolsep}{4pt}
\caption{Reliability controls and their purposes. Reused estimates are parameters obtained from a completed full-HB fit. The remaining controls are fitted independently.}
\label{tab:reliability_controls}
\begin{tabular}{@{}p{.18\linewidth}p{.29\linewidth}p{.15\linewidth}p{.30\linewidth}@{}}
\toprule
Control & Change & Reused estimates & Purpose \\
\midrule
No-KG & Omit report likelihood & None & Value of external reports \\
Fixed channel & Freeze a symmetric noise channel & None & Recovery with a supplied report channel \\
Fixed-$\rho$ HB & Fix $\rho=.8$; continue learning $B$ & None & Reliability estimation within HB's channel \\
Post-fit pooling & Replace pair weights by their observed-pair mean & Fitted reliability and $B$ & Variation among learned pair weights \\
Common reliability & Fit one shared Beta reliability & None & Independent estimation of source trust \\
Pair-intercept & Fit pair reliabilities with an intercept-only prior & None & Alternative prior features and initialization \\
One-time estimation & Freeze the observation model after its first update & None & Continued updating \\
Report-only estimation & Estimate unconstrained pair states before graph search & None & Feedback from graph inference \\
\bottomrule
\end{tabular}
\end{table}

With error-free positive reports, HB lowers SHD by $20.19$ versus no-KG across five networks. The fixed-channel control at $w=.99$ improves another $1.62$, and hard direct use reaches zero SHD. HB obtains about $93\%$ of the SHD improvement achieved by this fixed-channel control, estimating reliability from five reports per asserted edge. In the benchmark suite, replacing LLM reports with ground-truth reports lowers SHD by a further $4.00$ and raises AUROC by $.019$ (Table~\ref{tab:discrete_diag}(a)).

With reported confidence fixed at $.5$, mean inferred reliability increases with generating accuracy at every tested warm strength. Increasing $R$ from 2 to 20 reduces error in mean reliability from $.197$ to $.045$ on Asia and from $.187$ to $.043$ on Sachs (Table~\ref{tab:discrete_diag}(b)). At $\lambda_{\mathrm{warm}}=.5$, varying accuracy from zero to one changes final mean reliability by $.558$ and $.576$, respectively. At fixed accuracy, median ranges across warm strengths are $.009$ and $.011$. The preferred warm strength depends on the report condition: among the tested values, $.5$ minimizes maximum SHD regret on the accuracy grid, and the retention/spurious-report grid favors $.1$. Relative performance against a frozen channel also varies with accuracy. At $\alpha=0$ on Asia, HB gives SHD $10.0$, compared with $21.3$ for the fixed channel and $5.5$ for no-KG. On Sachs, the respective values are $19.5$, $19.8$, and $9.1$. At $\alpha\geq.6$, the fixed channel has $.2$--$1.4$ lower mean SHD; the fourteen corrected network--accuracy tests establish a difference only at $\alpha=0$ on Asia. Appendix~\ref{app:sensitivity} gives the complete warm-strength and accuracy grids.

The reliability-assignment study refits HB, giving mean SHD $16.06$, and holds its final $B$ posterior and graph-search budget fixed. Substituting pair-specific point means gives SHD $16.04$; pooling those means over reported pairs gives $16.27$. Pooled weights have higher AUROC ($.921$ versus $.917$), and the corrected network tests establish no pair-specific SHD advantage. Thus similar structural recovery can be retained after fitted pair weights are averaged. These controls use HB's learned parameters; Appendix~\ref{app:reliability_pooling} gives their complete comparison.

Independent estimation uses a new fit of each model with exactly three graph updates (Table~\ref{tab:common_main}). Full HB gives mean SHD $16.26$ with the canonical reports, versus $19.92$ for one learned source reliability. Alarm and Insurance account for the mean advantage, with reductions of $9.00$ and $10.25$ (Holm-adjusted $p=.00027$ and $.000048$). The other three corrected tests are inconclusive. A pair-specific model with an intercept-only prior gives mean SHD $23.42$. The fitted trust levels also differ. On Insurance, common reliability averages $.219$ and HB's observed-pair mean is $.602$; their SHD values are $45.84$ and $35.59$, compared with $42.45$ for the shared data-only initialization. Alarm's reliability estimates are $.427$ and $.657$. Independent fitting can therefore give different outcomes from averaging parameters after HB has learned them. Each variant uses its specified prior and initialization, detailed with the network results in Appendix~\ref{app:common_reliability}.

The synthetic source shares one generating accuracy across all pairs. At $\alpha=0$, common reliability gives SHD $.6$ on Asia and $.9$ on Sachs. Data-only initialization gives $5.5$ and $9.1$, and full HB gives $9.8$ and $19.5$; both HB--common contrasts have Holm-adjusted $p=.027$. Here the generator always selects a state other than the truth, so repeated labels provide exclusion information. A single source reliability fits this homogeneous error mechanism well. The caption of Table~\ref{tab:common_main} distinguishes this endpoint from the other six accuracy levels.

\begin{table}[tbp]
\centering
\small
\caption{Independently fitted reliability models. All variants use one initialization and three graph updates. Left: equal-network means on the five-network LLM study. Right: equal-network and equal-accuracy means over seven synthetic accuracy levels, including $\alpha=0$. That endpoint drives the overall SHD difference: across the other six levels, mean SHD is $3.33$ for HB and $3.35$ for common reliability. These are separate refits from Table~\ref{tab:matched_reports_main}.}
\label{tab:common_main}
\begin{tabular}{lrrrr}
\toprule
 & \multicolumn{2}{c}{LLM reports} & \multicolumn{2}{c}{Homogeneous synthetic reports} \\
\cmidrule(lr){2-3}\cmidrule(lr){4-5}
Model & SHD $\downarrow$ & AUROC $\uparrow$ & SHD $\downarrow$ & AUROC $\uparrow$ \\
\midrule
Common reliability & $19.92$ & $.906$ & $2.98$ & $.950$ \\
Pair-specific, intercept prior & $23.42$ & $.873$ & $4.52$ & $.916$ \\
Full HB, feature prior & $16.26$ & $.916$ & $4.94$ & $.914$ \\
\bottomrule
\end{tabular}
\end{table}

The controls in Table~\ref{tab:reliability_controls} compare report integration, reliability assignment, and continued updating. In the component analysis (Table~\ref{tab:discrete_diag}(a)), removing local search increases SHD by $6.36$; removing the warm start increases it by $1.60$ and lowers AUROC by $.053$. Updating $B$ and $(\beta,\tau)$ has smaller incremental effects. Those two controls continue updating pair reliability, so they measure the additions of shared-error and hyperparameter estimation within HB. Correct reports receive mean reliability $.24$ higher than incorrect reports (panel~b).

\begin{table}[tbp]
\caption{Mechanism analysis. (a) Component shifts relative to HB-NoisyKG in the discrete benchmark suite. The matched fixed-channel comparison is in Table~\ref{tab:matched_reports_main}(a). (b) Reliability separation, report-count effects on mean reliability estimation, and accurate-report summaries from the studies described in Appendix~\ref{app:discrete_full}.}
\label{tab:discrete_diag}
\centering
\scriptsize
\setlength{\tabcolsep}{3pt}
\begin{minipage}[t]{0.52\textwidth}
\begin{tabular*}{\linewidth}[t]{@{}p{0.52\linewidth}@{\extracolsep{\fill}}c c@{}}
\toprule
\textit{(a)}\quad Variant & $\Delta$SHD & $\Delta$AUROC \\
\midrule
Synthetic GT KG (oracle)  & $-4.00$ & $+0.019$ \\
\midrule
No confusion update       & $+0.12$ & $-0.006$ \\
No $\beta,\tau$ update    & $+0.20$ & $-0.006$ \\
No KG warm start          & $+1.60$ & $-0.053$ \\
No local search           & $+6.36$ & $-0.028$ \\
\bottomrule
\end{tabular*}
\end{minipage}\hfill
\begin{minipage}[t]{0.45\textwidth}
\begin{tabular*}{\linewidth}[t]{@{}p{0.58\linewidth} p{0.34\linewidth}@{}}
\toprule
\textit{(b)}\quad Metric & Value \\
\midrule
MAE, Asia ($R=2\to20$) & $.197\to.045$ \\
MAE, Sachs ($R=2\to20$) & $.187\to.043$ \\
Accurate reports: SHD gain & $20.19$ \\
correct--incorrect $\rho$ gap     & $+0.24$ \\
mean neg.\ MLL (DiBS$\rightarrow$HB) & $9.77 \rightarrow 9.62$ \\
\addlinespace[5pt]
\bottomrule
\end{tabular*}
\end{minipage}
\end{table}

\subsection{Robustness under knowledge degradation}
\label{sec:exp_robustness}

We corrupt initially error-free synthetic reports on five networks, using 1,000 observations and three sampling seeds per setting. HB and no-KG share the observations and per-call search settings. Table~\ref{tab:corruption_main} reports the clean condition and severity endpoints. HB retains lower mean SHD throughout the missing/spurious grid. Severe direction reversals and mixed errors exceed no-KG, illustrating the importance of report quality. At intermediate mixed corruption, SHD improves by $2.60$ but AUROC falls to $.848$. Mean reliability ranges from $.42$ to $.55$ across settings, averaging reported and unreported pairs. Figure~\ref{fig:discrete_robustness_full} gives the full curves. A separate stability study extends inference to fifty outer updates over 5,508 settings with weak data and inaccurate reports. Every run remains finite, and two Top-$K$ cases repeatedly alternate between two states. Tolerance attainment and recovery quality vary by backend and report accuracy (Appendix~\ref{app:computational_stability}).

\begin{table}[tbp]
\centering
\small
\caption{Selected settings from the paired corruption study. No-KG has mean SHD $17.93$ and AUROC $.880$. Differences are HB minus no-KG; lower SHD and higher AUROC are better. Complete settings are in Table~\ref{tab:corruption_nokg_all}.}
\label{tab:corruption_main}
\begin{tabular}{lrrr}
\toprule
Report setting & HB SHD & $\Delta$SHD & HB AUROC \\
\midrule
Clean & $2.80$ & $-15.13$ & $.969$ \\
Missing, retention $.3$ & $7.93$ & $-10.00$ & $.925$ \\
Spurious, rate $.5$ & $2.93$ & $-15.00$ & $.969$ \\
Flipped, rate $.7$ & $19.13$ & $+1.20$ & $.808$ \\
Mixed, level $.9$ & $19.80$ & $+1.87$ & $.830$ \\
\bottomrule
\end{tabular}
\end{table}

At accuracy $.75$, heterogeneous pair-group errors combined with correlated reports give SHD reductions of $2.6$ on Asia and $5.9$ on Sachs. With $75\%$ independent abstention, gains remain $2.6$ and $1.2$. On Sachs at the same coverage, preferentially withholding incorrect reports preserves a $5.8$ improvement; withholding correct reports increases SHD by $1.3$. Source utility therefore depends on the quality of surviving reports as well as coverage. Complete abstention recovers no-KG exactly. For fully incorrect reports, increasing $R$ from two to twenty reduces excess SHD from $5.0$ to $1.8$ on Asia and from $10.0$ to $1.2$ on Sachs.

\label{sec:exp_source_diagnostics}
Removing canonical names in the absence of variable descriptions sharply reduces report F1 (Table~\ref{tab:matched_reports_main}(b)). Names provide semantic information, and familiarity with public benchmarks acquired during pretraining may also improve report quality. Surviving incorrect anonymous reports on Sachs give SHD $13.31$, compared with $12.80$ for no-KG. The light-tunnel study provides complementary evidence with random IDs and retained physical descriptions: at $N=1000$, HB lowers SHD from $39.00$ without reports to $27.30$ (Table~\ref{tab:optical_main}). A separate prompt intervention holds presentation and sampling fixed and requests direct parents, directed paths, or nonzero total effects. Across ten parameterized SCMs and 150 calls, target-matched F1 is $1.000$, $1.000$, and $.945$. Total-effect prompts select all five reachable pairs with cancelling effects. These examples make the elicitation target consequential: graph reachability and numerical total effect give different labels. Appendix~\ref{app:prompt_semantics} describes the prompts and truth definitions.

\subsection{Cross-family portability and continuous inference}
\label{sec:exp_continuous}

We evaluate continuous Sachs data under graph-only BGe, linear joint, and nonlinear joint inference (Table~\ref{tab:sachs_continuous}). A separate ER study uses $d\in\{20,30\}$, expected degree 2, and 500 observations per dataset. Its generated graphs and synthetic reports provide a setting independent of LLM benchmark familiarity (Appendix Table~\ref{tab:e2_synth}).
\begin{table}[t]
\caption{Continuous Sachs results. Our rows use a shared condition-standardized protocol (mean $\pm$ std over 20/10/10 restarts for graph-only / linear-joint / nonlinear-joint panels). MC3 and BPC are reproduced from \citet[Table~1]{lorch2021dibs} (30 restarts). $\bar\rho$ is mean inferred reliability. DiBS+ coincides with DiBS in the joint panels.}
\label{tab:sachs_continuous}
\centering
\scriptsize
\begin{tabular}{l c c c c}
\toprule
Method & AUROC $\uparrow$ & E-SHD $\downarrow$ & SHD $\downarrow$ & $\bar{\rho}$ \\
\midrule
\multicolumn{5}{l}{\textit{Continuous graph-only (BGe marginal likelihood)}} \\
MC3 (published) & $0.616 \pm 0.027$ & $34.0 \pm 0.7$ & -- & -- \\
BPC (published) & $0.566 \pm 0.020$ & $25.5 \pm 2.3$ & -- & -- \\
NOTEARS & $0.590 \pm 0.012$ & $18.10 \pm 0.59$ & $18.00 \pm 1.33$ & -- \\
GOLEM & $0.586 \pm 0.011$ & $18.63 \pm 0.74$ & $19.00 \pm 1.89$ & -- \\
BGES-style (continuous BGe) & $0.655 \pm 0.000$ & $15.00 \pm 0.00$ & $15.00 \pm 0.00$ & -- \\
DiBS & $0.617 \pm 0.035$ & $16.99 \pm 0.08$ & $16.00 \pm 1.63$ & -- \\
DiBS+ & $0.565 \pm 0.032$ & $16.08 \pm 1.89$ & $16.00 \pm 1.63$ & -- \\
Top-$K$ MAP (no KG) & $0.702 \pm 0.009$ & $18.13 \pm 0.50$ & $19.00 \pm 2.83$ & -- \\
Harmonized-Prior (fixed $c$) & $0.649 \pm 0.016$ & $17.45 \pm 1.12$ & $17.45 \pm 1.12$ & -- \\
\textbf{HB-NoisyKG} (graph-only) & $\mathbf{0.823 \pm 0.012}$ & $\mathbf{10.38 \pm 0.61}$ & $\mathbf{10.75 \pm 1.65}$ & $0.460$ \\
\midrule
\multicolumn{5}{l}{\textit{Continuous linear joint}} \\
DiBS (linear joint) & $0.572 \pm 0.035$ & $18.40 \pm 2.00$ & $18.40 \pm 2.00$ & -- \\
Harmonized-Prior (linear joint) & $0.573 \pm 0.035$ & $17.80 \pm 2.04$ & $17.80 \pm 2.04$ & -- \\
\textbf{HB-NoisyKG} (linear joint, KG) & $\mathbf{0.584 \pm 0.038}$ & $\mathbf{17.20 \pm 1.70}$ & $\mathbf{17.20 \pm 1.70}$ & $0.413$ \\
\midrule
\multicolumn{5}{l}{\textit{Continuous nonlinear joint}} \\
DiBS (nonlinear joint) & $0.548 \pm 0.037$ & $28.25 \pm 2.63$ & $28.25 \pm 2.63$ & -- \\
Harmonized-Prior (nonlinear joint) & $0.517 \pm 0.051$ & $29.40 \pm 4.25$ & $29.40 \pm 4.25$ & -- \\
\textbf{HB-NoisyKG} (nonlinear joint, KG) & $\mathbf{0.579 \pm 0.050}$ & $\mathbf{25.75 \pm 4.79}$ & $\mathbf{25.75 \pm 4.79}$ & $0.406$ \\
\bottomrule
\end{tabular}
\end{table}
With BGe scoring and the same Top-$K$ backend, incorporating the Sachs reports raises mean AUROC from $.702$ to $.823$ and lowers E-SHD from $18.13$ to $10.38$. Harmonized-Prior uses the same asserted pairs and gives AUROC $.649$. NOTEARS, GOLEM, and greedy BGe search are also below HB on this metric.
In the linear- and nonlinear-joint regimes, all rows use DiBS-joint SVGD; the knowledge-aware rows receive the same LLM-asserted pairs. HB has numerically higher mean AUROC and lower mean E-SHD than both no-KG DiBS and Harmonized-Prior. The standard deviations describe variation across inference restarts on the same dataset. Harmonized-Prior is close to DiBS in the linear setting and has lower AUROC and higher E-SHD in the nonlinear setting. HB's mean AUROC gains over DiBS are $.012$ and $.031$ for linear and nonlinear joint inference, respectively; the graph-only gain over no-KG Top-$K$ is $.121$.

Across the four ER settings, mean E-SHD is $1.39$ to $4.70$ lower for HB, with graph-level Holm-adjusted $p$-values from $.25$ to $.50$. These comparisons use their study-specific base priors and graph-update budgets. Appendix~\ref{app:continuous_synth} gives the full methods comparison, larger-budget examples, and accurate-report results.


\subsection{Computational cost}
\label{sec:exp_computational}
Graph inference dominates runtime; the observation update costs $O(d^2R)$. Table~\ref{tab:large_discrete_main} reports sparse binary experiments with 100 and 125 variables over ten paired seeds. HB lowers SHD by $47.9$ and $55.4$ at roughly three times no-KG runtime, using $.202$ and $.254$ GiB resident memory. The added cost reflects repeated graph updates, and the report layer remains usable beyond one hundred variables.

\begin{table}[tbp]
\centering
\small
\caption{Larger sparse binary DAGs: ten paired seeds, $N=1000$, $\alpha=.75$, $R=5$, 20 restarts per graph call and up to three HB updates. Entries give no-KG/HB means. RSS is measured HB memory.}
\label{tab:large_discrete_main}
\begin{tabular}{crrr}
\toprule
$d$ & SHD $\downarrow$ & Seconds & RSS (GiB) \\
\midrule
100 & $101.1/53.2$ & $1603/4668$ & $.202$ \\
125 & $130.2/74.8$ & $4089/12391$ & $.254$ \\
\bottomrule
\end{tabular}
\end{table}

\section{Conclusion}
\label{sec:conclusion}

HB-NoisyKG provides a common observation likelihood for repeated causal reports of unknown reliability. The same layer combines report evidence with discrete or continuous data and supports both graph-only and joint inference. Matched benchmark comparisons and the physical light-tunnel task demonstrate structural gains over data-only inference. The controls distinguish learning the level of trust from assigning fitted weights across pairs. In the controlled synthetic study, continued updating reduces reliability estimation error and improves held-out report prediction.

Practitioners can use partial external reports alongside observations and estimate their influence during inference. The elicitation target should match the modeled relation, and coverage and source quality determine the information available. Completely incorrect reports remain challenging; absent reports recover data-only inference. The experiments give practical guidance on report count and initialization and demonstrate sparse recovery at 125 variables. Extending the observation interface to richer graph classes requires corresponding pair states and a suitable graph posterior, as illustrated by the controlled studies in Appendix~\ref{app:graph_classes}.


\clearpage


\appendix

\section{Proofs and conditional bounds}
\label{app:proofs}

\begin{proof}[Proof of Proposition~\ref{prop:data_only_reduction}]
If $\psi_{ij}(z)=0$ for every unordered pair $\{i,j\}$ and every latent state $z$, then
\[
\sum_{i<j} \psi_{ij}(Z_{ij}(G)) = 0
\qquad \text{for every graph } G.
\]
Therefore the exponential reward factor in Equations~\eqref{eq:mstep_graph_only} and~\eqref{eq:mstep_joint} is identically equal to one, and the graph update reduces to
\[
q(G) \propto p_0(G)\,p(\D\mid G)
\]
in the collapsed graph-only family and
\[
q(G,\vtheta) \propto p_0(G)\,p(\vtheta\mid G)\,p(\D\mid G,\vtheta)
\]
in the explicit joint family, which are precisely the corresponding data-only Bayesian posteriors.

For the abstention special case, if every knowledge report abstains, then by construction there are no explicit pairwise labels, so $n_{ij}(y)=0$ for all $y$ and all pairs. Hence $R_{ij}=0$ and Equation~\eqref{eq:psi_raw} gives $\psi^{\mathrm{raw}}_{ij}(z)=0$ for all $z$. Centering preserves zero, so $\psi_{ij}(z)=0$ for all pairs and the reduction above applies.
\end{proof}

\begin{proposition}[Pairwise-centering invariance]
\label{prop:centering_invariance}
Fix arbitrary constants $c_{ij}\in\mathbb{R}$. Replacing every reward table $\psi_{ij}(z)$ by $\psi'_{ij}(z)=\psi_{ij}(z)+c_{ij}$ leaves all graph rankings under Equations~\eqref{eq:mstep_graph_only} and~\eqref{eq:mstep_joint} unchanged.
\end{proposition}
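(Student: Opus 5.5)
The argument rests on one observation: every DAG has exactly one direct state for each unordered pair. Hence $\sum_{i<j}\psi'_{ij}(Z_{ij}(G)) = \sum_{i<j}\psi_{ij}(Z_{ij}(G)) + C$, where $C=\sum_{i<j}c_{ij}$ does not depend on $G$ or $\vtheta$. For every graph $G$, the map $Z_{ij}(G)$ takes exactly one value in $\{\text{none}, i\to j, j\to i\}$. The shifted reward therefore contributes $\psi_{ij}(Z_{ij}(G))+c_{ij}$ for each pair, and summing over the fixed finite index set $\{i<j\}$ gives the identity. I would state this first and note that it uses nothing about the form of $\psi_{ij}$.

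Next, I exponentiate. The unnormalized weights satisfy $\tilde q'(G) = e^{C}\,\tilde q(G)$ in Equation~\eqref{eq:mstep_graph_only}, and $\tilde q'(G,\vtheta)=e^{C}\,\tilde q(G,\vtheta)$ in Equation~\eqref{eq:mstep_joint}. The factor $e^{C}$ is a strictly positive constant common to all arguments. Consequently, for any two graphs (or graph--parameter pairs), the ratio $\tilde q'(G_1)/\tilde q'(G_2)$ equals $\tilde q(G_1)/\tilde q(G_2)$. This preserves every pairwise comparison and hence every ranking, including the argmax used for MAP or Top-$K$ selection. I would also remark that after normalization the distributions are identical, since $e^{C}$ cancels between numerator and partition function. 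In the joint family the marginal $q(G)=\int q(G,\vtheta)\,d\vtheta$ is likewise unchanged, because the factor is independent of $\vtheta$.

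There is no substantive obstacle. The only point needing care is the claim that $C$ is graph-independent. That claim relies on the reward being a sum over \emph{all} unordered pairs, with exactly one state selected per pair, rather than a sum restricted to pairs with edges in $G$; I would cite the definition of $Z_{ij}(G)$, under which ``none'' is itself a state. As a corollary, I would note that the within-pair centering in Section~\ref{sec:attribution} is the special case $c_{ij}=-\tfrac13\sum_{z'}\psi^{\mathrm{raw}}_{ij}(z')$. This justifies the paper's claim that centering preserves all graph rankings.
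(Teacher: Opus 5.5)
Your proof is correct and follows the paper's own argument: each unordered pair has exactly one state under any $G$, so the shift adds the graph-independent constant $\sum_{i<j}c_{ij}$, and every graph (or graph--$\vtheta$) weight is multiplied by the same positive factor $\exp\bigl(\sum_{i<j}c_{ij}\bigr)$, which cancels on normalization. Your extra remarks also hold: the normalized posteriors are identical, the $\vtheta$-marginal is unchanged, and within-pair centering is the special case.
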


\begin{proof}[Proof of Proposition~\ref{prop:centering_invariance}]
Let $\psi'_{ij}(z)=\psi_{ij}(z)+c_{ij}$ with constants $c_{ij}$ independent of $z$. For any graph $G$,
\[
\sum_{i<j} \psi'_{ij}(Z_{ij}(G))
=
\sum_{i<j} \psi_{ij}(Z_{ij}(G))
+
\sum_{i<j} c_{ij}.
\]
The second term does not depend on $G$, because each unordered pair contributes exactly one latent state under any graph. Thus in either Equation~\eqref{eq:mstep_graph_only} or Equation~\eqref{eq:mstep_joint}, replacing $\psi$ by $\psi'$ multiplies every graph weight by the same positive constant
\[
\exp\!\left(\sum_{i<j} c_{ij}\right).
\]
After normalization, the posterior ranking over graphs is unchanged.
\end{proof}

\begin{proposition}[Finite influence of noisy knowledge]
\label{prop:finite_influence}
Assume $a_{ij}>0$, $b_{ij}>0$, and $\alpha_{zy}>0$ for all off-diagonal confusion entries, and suppose each pair has finite non-abstain count $R_{ij}<\infty$. Then every centered reward $\psi_{ij}(z)$ is finite, and for each pair there exists a constant $C_{ij}<\infty$ such that $\max_z |\psi_{ij}(z)| \le C_{ij}$.
\end{proposition}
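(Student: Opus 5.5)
The plan is to show that each ingredient of the uncentered reward in Equation~\eqref{eq:psi_raw} is a finite real number. Finiteness then passes to the centered reward $\psi_{ij}$, and the constant $C_{ij}$ is read off from a triangle-inequality bound.

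First, the digamma expectations. Since $a_{ij}>0$ and $b_{ij}>0$, and the current posterior parameters are prior pseudocounts plus nonnegative expected counts, the updated Beta parameters satisfy $a'_{ij}\ge a_{ij}>0$ and $b'_{ij}\ge b_{ij}>0$. Likewise each updated Dirichlet parameter satisfies $\alpha'_{zy}\ge\alpha_{zy}>0$. The expected counts are bounded by $R_{ij}<\infty$ for the Beta update, and by the finite total report count for $B$, so all parameters are finite. The digamma function is finite and continuous on $(0,\infty)$. Hence $\E_q[\log\rho_{ij}]$ and $\E_q[\log(1-\rho_{ij})]$ are finite, as is $\E_q[\log B_{zy}]$ for each off-diagonal $(z,y)$. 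The only entries of $B$ that appear are those with $y\ne z$, so the fixed zero diagonal never enters. I would note explicitly that these quantities are negative but bounded below. Set $L_{ij}=\max\{|\E_q\log\rho_{ij}|,\,|\E_q\log(1-\rho_{ij})|,\,\max_{z\ne y}|\E_q\log B_{zy}|\}<\infty$.

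Second, bound the raw reward. For every $z$, the triangle inequality applied to Equation~\eqref{eq:psi_raw}, with $n_{ij}(y)\ge0$ and $\sum_y n_{ij}(y)=R_{ij}$, gives
\[
|\psi^{\mathrm{raw}}_{ij}(z)| \le n_{ij}(z)L_{ij} + (R_{ij}-n_{ij}(z))L_{ij} + \sum_{y\ne z} n_{ij}(y)L_{ij} \le 2R_{ij}L_{ij}.
\]

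Third, centering. The centered reward $\psi_{ij}(z)$ is $\psi^{\mathrm{raw}}_{ij}(z)$ minus the average of three values, each of magnitude at most $2R_{ij}L_{ij}$. Therefore $|\psi_{ij}(z)|\le 4R_{ij}L_{ij}=:C_{ij}<\infty$, uniformly over the three states. This also gives $C_{ij}=0$ when $R_{ij}=0$, consistent with Proposition~\ref{prop:data_only_reduction}.

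The only real subtlety is the first step. One must confirm that the Beta and Dirichlet parameters actually used in the digamma formulas stay strictly positive and finite. That is, the expected-count updates of Appendix~\ref{app:implementation} only add nonnegative, finite mass to the positive priors, which follows from $q_{ij}(z)\in[0,1]$ and finite report counts. If the $(\beta,\tau)$ empirical-Bayes step resets the prior pseudocounts, I would invoke the hypothesis as stated for the current $a_{ij},b_{ij}$. The rest is routine.
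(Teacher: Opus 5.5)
Your proposal is correct and follows the paper's proof essentially step for step. Both arguments use finiteness of the digamma expectations at positive arguments, a triangle-inequality bound on $\psi^{\mathrm{raw}}_{ij}$ that is linear in $R_{ij}$, and a factor-of-two loss from centering. The differences are only in bookkeeping: you take a single maximum $L_{ij}$ and obtain $C_{ij}=4R_{ij}L_{ij}$, whereas the paper sums the three constants to get $C_{ij}=2R_{ij}\bigl(L^{(\rho)}_{ij}+L^{(1-\rho)}_{ij}+L^{(B)}_{ij}\bigr)$; and you additionally check that the updated Beta/Dirichlet parameters stay positive and finite, which the paper simply takes as its hypothesis on the current posterior parameters.
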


\begin{proof}[Proof of Proposition~\ref{prop:finite_influence}]
Fix an unordered pair $\{i,j\}$. By assumption, the Beta posterior parameters satisfy $a_{ij}>0$ and $b_{ij}>0$, and the off-diagonal Dirichlet parameters satisfy $\alpha_{zy}>0$ for all $y\neq z$. Since the digamma function is finite on $(0,\infty)$, the quantities
\[
\ell^{(\rho)}_{ij}
:=
\E_q[\log \rho_{ij}]
=
\psi(a_{ij})-\psi(a_{ij}+b_{ij}),
\]
\[
\ell^{(1-\rho)}_{ij}
:=
\E_q[\log (1-\rho_{ij})]
=
\psi(b_{ij})-\psi(a_{ij}+b_{ij}),
\]
and
\[
\ell^{(B)}_{zy}
:=
\E_q[\log B_{zy}]
=
\psi(\alpha_{zy})-\psi\!\left(\sum_{y'\neq z}\alpha_{zy'}\right)
\]
are all finite real numbers.

Define the finite pairwise constants
\[
L^{(\rho)}_{ij} := |\ell^{(\rho)}_{ij}|,\qquad
L^{(1-\rho)}_{ij} := |\ell^{(1-\rho)}_{ij}|,
\]
and
\[
L^{(B)}_{ij} := \max_{z}\max_{y\neq z} |\ell^{(B)}_{zy}|.
\]
Using Equation~\eqref{eq:psi_raw}, for any candidate state $z$ we have
\begin{align*}
\big|\psi^{\mathrm{raw}}_{ij}(z)\big|
&\le
n_{ij}(z) L^{(\rho)}_{ij}
 + \big(R_{ij}-n_{ij}(z)\big)L^{(1-\rho)}_{ij}
 + \sum_{y\neq z} n_{ij}(y) L^{(B)}_{ij} \\
&\le
R_{ij}L^{(\rho)}_{ij}
 + R_{ij}L^{(1-\rho)}_{ij}
 + R_{ij}L^{(B)}_{ij}.
\end{align*}
Therefore, if we define
\[
M_{ij}:=
R_{ij}\big(L^{(\rho)}_{ij}+L^{(1-\rho)}_{ij}+L^{(B)}_{ij}\big),
\]
then $\max_z |\psi^{\mathrm{raw}}_{ij}(z)| \le M_{ij}<\infty$.

Now recall that centered rewards are defined by
\[
\psi_{ij}(z)=\psi^{\mathrm{raw}}_{ij}(z)-\frac13\sum_{z'}\psi^{\mathrm{raw}}_{ij}(z').
\]
Hence
\begin{align*}
|\psi_{ij}(z)|
&\le
\big|\psi^{\mathrm{raw}}_{ij}(z)\big|
+
\frac13\sum_{z'}\big|\psi^{\mathrm{raw}}_{ij}(z')\big| \\
&\le
M_{ij}+M_{ij}
=2M_{ij}.
\end{align*}
Thus every centered reward is finite, and Proposition~\ref{prop:finite_influence} holds with, for example,
\[
C_{ij}:=2R_{ij}\big(L^{(\rho)}_{ij}+L^{(1-\rho)}_{ij}+L^{(B)}_{ij}\big) < \infty.
\]
\end{proof}

\begin{proof}[Proof of Proposition~\ref{prop:reliability_concentration}]
\emph{Step 1: Binomial likelihood.}
Under assumption~(a), every non-abstain report is generated with true latent pair state $z^*_{ij}$. By the observation model in Section~\ref{sec:reliability_posterior},
\[
\Pr\!\big(Y^{(r)}_{ij}=z^*_{ij}\mid \rho^*_{ij},B\big)
= \rho^*_{ij}\cdot 1 + (1-\rho^*_{ij})\cdot B_{z^*_{ij},z^*_{ij}}
= \rho^*_{ij},
\]
because $B$ has zero diagonal. Assumption~(b) then gives
\[
n^+_{ij}\mid R_{ij}\sim\mathrm{Binomial}(R_{ij},\rho^*_{ij}).
\]

\emph{Step 2: Posterior mean consistency.}
The $\mathrm{Beta}(\alpha,\beta)$ mean is $\alpha/(\alpha+\beta)$, so
\[
\E_q[\rho_{ij}]
= \frac{a_{ij}+n^+_{ij}}{a_{ij}+b_{ij}+R_{ij}}
= \frac{a_{ij}/R_{ij}+n^+_{ij}/R_{ij}}{(a_{ij}+b_{ij})/R_{ij}+1}.
\]
The constant prior terms $a_{ij}/R_{ij}$ and $(a_{ij}+b_{ij})/R_{ij}$ vanish as $R_{ij}\to\infty$. The weak law of large numbers applied to the i.i.d.\ indicators $\mathbb{I}[Y^{(r)}_{ij}=z^*_{ij}]$ gives $n^+_{ij}/R_{ij}\to\rho^*_{ij}$ in probability. The continuous mapping theorem then yields $\E_q[\rho_{ij}]\to\rho^*_{ij}$ in probability.

\emph{Step 3: Variance bound.}
With $\alpha=a_{ij}+n^+_{ij}$ and $\beta=b_{ij}+R_{ij}-n^+_{ij}$,
\[
\mathrm{Var}_q[\rho_{ij}]
= \frac{\alpha\beta}{(\alpha+\beta)^2(\alpha+\beta+1)}.
\]
Using $\alpha\beta\le(\alpha+\beta)^2/4$,
\[
\mathrm{Var}_q[\rho_{ij}]
\le \frac{1}{4(\alpha+\beta+1)}
= \frac{1}{4(a_{ij}+b_{ij}+R_{ij}+1)}
= O(1/R_{ij}).
\]
Taking square roots yields the $O(1/\sqrt{R_{ij}})$ rate on the posterior standard deviation.
\end{proof}

\paragraph{A conditional finite-sample bound.}
The assumption of exact pair-state concentration can be relaxed. Fix a pair with $R_{ij}>0$ conditionally i.i.d.\ reports and true state $z^*_{ij}$. Let $a_{ij},b_{ij}>0$ be the prior parameters used in this update, set $c_{ij}=a_{ij}+b_{ij}$, and suppose $q_{ij}(z^*_{ij})\geq1-\epsilon_{ij}$. The expected match count is $m_{ij}=\sum_z q_{ij}(z)n_{ij}(z)$, giving posterior mean $\widehat\rho_{ij}=(a_{ij}+m_{ij})/(c_{ij}+R_{ij})$. For this fixed pair, with probability at least $1-\delta$ over the reports, the following bound holds for any marginal satisfying the stated concentration condition:
\begin{equation}
\begin{aligned}
\bigl|\widehat\rho_{ij}-\rho^*_{ij}\bigr|
\leq {} & \frac{2\epsilon_{ij}R_{ij}}{c_{ij}+R_{ij}}
+\frac{R_{ij}}{c_{ij}+R_{ij}}
\sqrt{\frac{\log(2/\delta)}{2R_{ij}}} \\
&+\frac{|a_{ij}-c_{ij}\rho^*_{ij}|}{c_{ij}+R_{ij}}.
\end{aligned}
\label{eq:finite_report_bound}
\end{equation}
To see this, let $n^+_{ij}=n_{ij}(z^*_{ij})$. Since $\|q_{ij}-\delta_{z^*_{ij}}\|_1\leq2\epsilon_{ij}$ and each count is at most $R_{ij}$, $|m_{ij}-n^+_{ij}|\leq2\epsilon_{ij}R_{ij}$. Hoeffding's inequality bounds $|n^+_{ij}/R_{ij}-\rho^*_{ij}|$ by the square-root term with probability at least $1-\delta$. Substituting these bounds into
\[
\widehat\rho_{ij}-\rho^*_{ij}
=\frac{m_{ij}-n^+_{ij}}{c_{ij}+R_{ij}}
+\frac{n^+_{ij}-R_{ij}\rho^*_{ij}}{c_{ij}+R_{ij}}
+\frac{a_{ij}-c_{ij}\rho^*_{ij}}{c_{ij}+R_{ij}}
\]
proves Equation~\eqref{eq:finite_report_bound}. Its three terms separate graph-marginal error, report-sampling error, and prior bias. For fixed prior parameters, the bound recovers consistency as $\epsilon_{ij}\to0$ and $R_{ij}\to\infty$.

\paragraph{Dependent reports and heterogeneous errors.}
With the true pair state fixed, the Beta update counts matches and mismatches irrespective of the distribution among wrong labels. Heterogeneous off-diagonal errors affect joint inference through their influence on the graph posterior. Correlated reports provide less information: under a specified mixing model, a concentration inequality can use an effective sample size, while general dependence requires its own assumptions. Appendix~\ref{app:error_patterns} examines heterogeneity and dependence separately and jointly.

\paragraph{Conditional ascent of the inference objective.}
Write $\mathcal{L}(q,\beta,\tau)$ for the variational lower bound on the data and report likelihood, including $\log p(\beta)+\log p(\tau)$ for the MAP empirical-Bayes step. Holding the other blocks fixed, exact maximization of a variational or parameter block cannot decrease $\mathcal{L}$. The same statement holds for an inexact block update that is accepted only when it improves this objective. Applying this inequality successively to the graph, reliability, confusion, and empirical-Bayes blocks establishes conditional coordinate ascent. If the objective is bounded above, its values converge. Parameter convergence and graph recovery require additional conditions.

Our Top-$K$ and SVGD graph steps approximate the graph block. Their behavior is therefore assessed empirically in Appendix~\ref{app:computational_stability}. With weak observational evidence, repeated inaccurate reports can support an incorrect graph explanation even when the numerical updates remain well behaved. Approximate graph updates may also fail to improve the objective at every iteration. The audit distinguishes numerical failure, persistent oscillation, tolerance attainment, and statistical harm. Iteration limits and numerical or cycle monitoring provide practical diagnostics, with data-only inference available as a fallback.

\section{Implementation details}
\label{app:implementation}

\paragraph{LLM-KG construction.}
The main KG construction uses GPT-4.1-mini with structured prompts requesting JSON-formatted causal edges.
Each of $R{=}5$ rounds uses temperatures in $\{0.65, 0.70, 0.75, 0.80, 0.85\}$.
The minimum round threshold for edge retention is~2, and the existence threshold is $c_{\text{exist}} \geq 0.30$.
Direction uncertainty threshold is $c_{\text{dir}} < 0.65$.

The benchmark-suite prompt requested ``directed prerequisite/causal candidate edges between variables.'' The prompt-target study makes the target explicit while retaining the model, variable presentation, JSON schema, temperatures, query rounds, and aggregation rule within each comparison:
\begin{itemize}
    \item \emph{Direct parent:} report $i\to j$ when $i$ is an immediate causal parent of $j$; exclude relations supported only by mediators.
    \item \emph{Path existence:} report $i\to j$ when at least one directed path from $i$ to $j$ exists, including both direct and mediated relations.
    \item \emph{Total effect:} report $i\to j$ when intervening on $i$ has a nonzero total effect on $j$, whether direct or mediated.
\end{itemize}
The benchmark-suite results use that causal-candidate prompt. Appendix~\ref{app:prompt_semantics} reports the explicit target comparisons; Appendix~\ref{app:name_only} uses the explicit direct-parent prompt for all naming conditions.

\paragraph{Repeated reports, features, and conjugate updates.}
For the principal LLM studies, each source batch has five query rounds; synthetic studies specify $R$ separately. For one pair, let $n_{\text{abstain}}, n_{\text{none}}, n_{\text{fwd}}, n_{\text{rev}}$ denote the round counts, $n_{\text{obs}} = R - n_{\text{abstain}}$, and $n_{\text{present}} = n_{\text{fwd}} + n_{\text{rev}}$. The feature vector $\phi_{ij}$ uses \texttt{raw\_conf\_mean} and \texttt{raw\_conf\_max} (mean/max confidence over directional rounds), \texttt{coverage\_rate}$=n_{\text{obs}}/R$, \texttt{exist\_agree}$=n_{\text{present}}/\max(n_{\text{obs}},1)$, \texttt{dir\_agree}$=\max(n_{\text{fwd}},n_{\text{rev}})/\max(n_{\text{present}},1)$, \texttt{forward\_rate}$=n_{\text{fwd}}/\max(n_{\text{obs}},1)$, \texttt{reverse\_rate}$=n_{\text{rev}}/\max(n_{\text{obs}},1)$, \texttt{abstain\_rate}$=n_{\text{abstain}}/R$, \texttt{explicit\_none\_rate}$=n_{\text{none}}/\max(n_{\text{obs}},1)$, \texttt{conflict\_rate}$=n_{\text{conflict}}/\max(n_{\text{obs}},1)$, and the raw count features \texttt{rounds\_observed}$=n_{\text{obs}}$ and \texttt{rounds\_present}$=n_{\text{present}}$.

Unless otherwise noted, the off-diagonal confusion prior uses $\gamma_{0,zy}=1$ for $y\neq z$, the reliability hyperpriors use $\beta \sim \mathcal{N}(0, 4I)$ and $\tau \sim \mathrm{Gamma}(2.0, 0.25)$, and EM starts from $\tau=8.0$ together with the fixed default $\beta$ vector used in the implementation: intercept $-0.5$, \texttt{raw\_conf\_mean} $1.0$, \texttt{coverage\_rate} $0.4$, \texttt{exist\_agree} $0.6$, \texttt{dir\_agree} $1.0$, \texttt{explicit\_none\_rate} $-0.5$, \texttt{conflict\_rate} $-1.0$, and all remaining feature weights $0$.

Let $n_{ij}(y)$ be the non-abstain count for $y\in\{\text{none}, i\to j, j\to i\}$ and let $R_{ij}=\sum_y n_{ij}(y)$. Given pair-state marginals $q_{ij}(z)$, the expected match count is $m_{ij}=\sum_z q_{ij}(z)\,n_{ij}(z)$, so
\[
q(\rho_{ij}) = \mathrm{Beta}(a_{ij}, b_{ij}), \qquad
a_{ij} = \tau \mu_{ij} + m_{ij}, \qquad
b_{ij} = \tau(1-\mu_{ij}) + (R_{ij} - m_{ij}).
\]
For the shared error matrix $B$, the diagonal is fixed to zero and the off-diagonal Dirichlet parameters update as
\[
\alpha_{zy} = \gamma_{0,zy} + \sum_{i<j} q_{ij}(z)\, n_{ij}(y), \qquad y \neq z.
\]
Equivalently, before any graph-weighted counts are added, the posterior parameters start from the prior pseudo-counts $a_{ij}^{(0)}=\tau\mu_{ij}$, $b_{ij}^{(0)}=\tau(1-\mu_{ij})$, and $\alpha_{zy}^{(0)}=\gamma_{0,zy}$.

\paragraph{Practical instantiation details.}
\emph{ABSTAIN handling.} The observed symbols are \{ABSTAIN, NONE, FWD, REV\}, and the latent states are \{NONE, FWD, REV\}. ABSTAIN records missing evidence. The remaining symbols contribute counts to reliability, error-matrix, and reward updates.

\emph{Warm initialization.} A diffuse initial graph posterior can make the first E-step interpret many reports as mismatches. We warm-start this update by blending the data-only pair-state posterior with empirical report frequencies on observed pairs,
\[
q_{ij}^{\mathrm{warm}} = (1-\alpha_{ij})\, q_{ij}^{\mathrm{data}} + \alpha_{ij}\, q_{ij}^{\mathrm{KG}},
\]
where $q_{ij}^{\mathrm{KG}}$ is the empirical three-state frequency vector over \{none, $i\to j$, $j\to i$\}. Let $A_{ij}=\max_y n_{ij}(y)/R_{ij}$ denote report agreement on an observed pair. The blend coefficient is
\[
\alpha_{ij}=\lambda_{\mathrm{warm}}\,
\operatorname{clip}\!\left(0.3+0.8(A_{ij}-0.5),\,0.1,\,0.8\right).
\]
The strength $\lambda_{\mathrm{warm}}$ scales this agreement-weighted pair-state blend. It is $1$ in the benchmark suite, giving coefficients between $.1$ and $.8$, and $.5$ in the matched naming study. Unreported pairs retain the data-only marginal. The blended marginal supplies the first Beta update of $\rho_{ij}$. Appendix~\ref{app:sensitivity} varies the blend strength.

\emph{Empirical-Bayes hyperparameter updates.} We retain point estimates of $(\beta,\tau)$ and maximize the following objective after each reliability update:
\[
\E_q[\log p(\rho\mid\beta,\tau)] + \log p(\beta) + \log p(\tau)
\]
with a low-dimensional L-BFGS-B MAP-style empirical-Bayes step.

\paragraph{Fixed report controls.}
\label{app:fixed_controls}
The fixed-channel control in the naming and accuracy studies uses
\[
 p_w(Y=y\mid Z=z)=wC_\epsilon(z,y)+(1-w)/3,\qquad
 C_\epsilon(z,y)=
 \begin{cases}
  1-\epsilon,&y=z,\\
  \epsilon/2,&y\ne z,
 \end{cases}
\]
with $w=.8$ and $\epsilon=.05$, giving report-match probability $.8267$. The accurate-report controls use $w=.99$ in this same channel, giving match probability $.9438$. A fixed-channel fit sums report log likelihoods, omits abstentions, and makes one Top-$K$ graph call with the frozen reward. Fixed-reliability HB in the optical study sets the diagonal match probability to $\rho=.8$ and learns $B$ from its own graph posterior, using the common four-call budget.

\paragraph{Inference hyperparameters.}
Table~\ref{tab:hyperparams} gives core defaults; Table~\ref{tab:study_budgets} identifies study-specific search budgets.

\begin{table}[h]
\caption{Core hyperparameters and their scope. The outer-update limit and search budget are overridden by study as shown in Table~\ref{tab:study_budgets}.}
\label{tab:hyperparams}
\centering
\small
\begin{tabular}{@{}p{.27\linewidth}cp{.30\linewidth}p{.23\linewidth}@{}}
\toprule
Parameter & Default & Description & Scope \\
\midrule
$M$ (particles) & 20 & SVGD particle count & DiBS default \\
$T$ (steps) & 2000 & SVGD iterations & DiBS default \\
$K$ (restarts) & 20 & Restarts per graph call & Main discrete suite \\
$\tau$ (reliability prior) & 8.0 & Initial Beta concentration & HB default \\
$T_{\mathrm{EM}}$ & $\leq3$ & Outer graph updates & Main discrete suite \\
Reward centering & mean & Remove state-independent offsets & HB default \\
Warm init & on & Blend initial pair marginals & Report-based HB \\
$\lambda_{\mathrm{warm}}$ & 1.0 & Warm-blend strength & Main discrete suite \\
$q(B)$ update & on & Learn shared error patterns & HB default \\
$R$ (LLM rounds) & 5 & Rounds per source batch & Main, naming, optical \\
\bottomrule
\end{tabular}
\end{table}

\begin{table}[htbp]
\centering
\small
\caption{Top-$K$ budgets by study. Calls include initialization. Limits permit early stopping; fixed update counts run to completion. Optical no-KG repeats the zero-reward call four times under the common budget.}
\label{tab:study_budgets}
\begin{tabular}{lrrrr}
\toprule
Study and arm & Restarts/call & Updates & Calls/fit & Warm blend \\
\midrule
Benchmark suite, main HB & 20 & $\leq3$ & $\leq4$ & $1$ \\
Suite secondary diagnostics & 10 & $\leq2$ & $\leq3$ & $1$ \\
Matched naming, HB & 20 & $\leq3$ & $\leq4$ & $.5$ \\
Matched naming, fixed channel/no-KG & 20 & -- & $1$ & -- \\
Search-budget controls & $20/80$ & -- & $1$ & -- \\
Accuracy sweep, HB & 20 & $\leq3$ & $\leq4$ & $.1/.5/.9$ \\
Accuracy sweep, fixed channel/no-KG & 20 & -- & $1$ & -- \\
Paired corruption, HB & 10 & $2$ & $3$ & $1$ \\
Paired corruption, no-KG & 10 & -- & $1$ & -- \\
Independent reliability, all three arms & 20 & $3$ & $4$ & $.5$ \\
Optical, report-based methods & 20 & $3$ & $4$ & $.5$ \\
Optical, no-KG & 20 & -- & $4$ & -- \\
\bottomrule
\end{tabular}
\end{table}

The independently fitted reliability study comprises 1,620 fits, or 6,480 graph calls. The optical study comprises 500 fits and 2,000 graph calls. The paired corruption study comprises 225 HB fits and 15 no-KG fits, totaling 690 graph calls. The search-budget comparison adds 1,000 control fits and uses 400 full-HB fits from the reliability-assignment study. These studies use CPU Top-$K$ inference. Their report batches, datasets, and graph outputs are specified with the corresponding experiments.

\paragraph{Scoring functions.}
Discrete BNs use BDeu scoring~\citep{heckerman1995learning} with equivalent sample size $\alpha_{\text{BDeu}} = 1.0$.
Continuous graph-only inference uses BGe scoring~\citep{geiger2002parameter,kuipers2014addendum} with standard Normal-Wishart hyperparameters.

\paragraph{Base graph prior.}
HB uses a sparse Bernoulli graph prior with base edge probability $\theta_0=.01$. The matched no-KG Top-$K$ controls keep this prior and remove the report rewards. The joint ER comparison used a different no-KG default; its priors and graph-update budgets are specified explicitly in Appendix~\ref{app:continuous_synth}.

\paragraph{Inference backends (details).}
\emph{DiBS-SVGD backends.} We use SVGD~\citep{liu2016stein} on the latent-space parameterization of \citet{lorch2021dibs} for Sachs linear-joint and nonlinear-joint inference, the joint synthetic ER study at $d\in\{20,30\}$, and its $d=50$ linear extension. The standard budget is $M=20$ particles and $T=2000$ steps; Appendix~\ref{app:joint_budget} separately reports the larger budget. The graph-only DiBS and DiBS+ reference rows on Sachs use SVGD with BGe scoring. They are distinct from the Top-$K$ implementation used by graph-only HB and its matched no-KG control.

\emph{Top-$K$ backends.} The main discrete suite uses 20 restart orders, greedy forward initialization, local add/delete/reverse search with a light tabu heuristic, and softmax-weighted aggregation of the top unique DAGs. Study-specific budgets are listed in Table~\ref{tab:study_budgets}. The Sachs graph-only HB run uses the \texttt{pairwise\_topk} backend with BGe scoring, while its no-KG Top-$K$ control uses the corresponding graph search without knowledge rewards. The observation reward enters the graph-only score through Equation~\eqref{eq:mstep_graph_only}; joint DiBS-SVGD uses Equation~\eqref{eq:mstep_joint}.

\paragraph{Compute resources.}
Sachs experiments were run locally on a MacBook Pro with an Apple M3 Max chip (16 CPU cores) and 48\,GB RAM. Runtime depends on the posterior family and inference budget; Table~\ref{tab:sachs_continuous} and the study-specific descriptions state the restart units. The nonlinear synthetic ER study and the $d=50$ extension used GPU hardware; Appendix~\ref{app:continuous_synth} reports their hardware and per-condition timing.

\paragraph{Threshold parameters.}
Unless otherwise noted, all discrete thresholds ($c_{\text{exist}} \geq 0.30$, $c_{\text{dir}} < 0.65$) were set to conservative defaults and verified stable via sensitivity analysis; none required dataset-specific tuning.

\paragraph{Point estimates and structural metrics.}
For the benchmark-suite HB and DiBS estimates, edges with marginal probability above $.5$ are considered in descending probability and retained only if they preserve acyclicity. Bootstrap PC/GES estimates threshold frequencies at $.5$ and remove the lowest-frequency edge from each detected cycle. The local continuous Sachs posterior comparisons select the highest-weight stored graph for point SHD. The synthetic fixed-channel controls, hyperparameter grid, and matched discrete studies score the adjacency estimate $\widehat A_{ij}=\mathbb{I}[P_{ij}>.5]$. Direct-search methods use their returned graphs.

Let $A^\star$ be the true adjacency matrix and $P$ the matrix of edge marginals, with zero diagonal. Ordered-adjacency error is
\[
L(P,A^\star)=\sum_{i\ne j}\bigl[A^\star_{ij}(1-P_{ij})+(1-A^\star_{ij})P_{ij}\bigr].
\]
The local Sachs evaluations, ER comparisons, and matched discrete studies report $L(\widehat A,A^\star)$ as SHD and $L(P,A^\star)$ as E-SHD; a reversal costs two. If $P=\sum_m w_m A^{(m)}$, E-SHD equals the weighted mean ordered-adjacency error of those graphs. For a posterior over DAGs, the benchmark-suite reversal-once convention gives
\[
\mathrm{E\text{-}SHD}_1
=\sum_m w_m\,\mathrm{SHD}_1(A^{(m)},A^\star)
=L(P,A^\star)-\sum_{i\ne j}A^\star_{ij}P_{ji}.
\]
Opposite edges cannot coexist in a DAG, so each reversed true edge receives one unit of credit. Figure~\ref{fig:benchmark_suite_full} computes its E-SHD estimates with the edgewise expression $L(P,A^\star)-\sum_{i\ne j}A^\star_{ij}P_{ji}(1-P_{ij})$. The discrete Harmonized-Prior reference uses ordered-adjacency SHD and sets E-SHD equal to that point-estimate error. Published MC3/BPC values retain their source protocol.

\paragraph{Edge marginals and AUROC.}
Top-$K$ inference averages unique graphs with normalized exponentiated data-plus-prior scores. Joint DiBS uses a weighted graph-parameter mixture. Graph-only DiBS averages particle edge probabilities, and DiBS+ uses a weighted graph-sample mixture. AUROC evaluates off-diagonal edge scores with average ranks for ties. The sensitivity and provider tables focus on structural metrics and inferred reliability.

\paragraph{A diagnostic of joint particle weights.}
\label{app:joint_weight_diagnostic}
One no-KG linear-joint diagnostic on Sachs uses eight particles, 600 steps, and seed 42. Its eight graphs are distinct, but one receives all numerical mixture weight, giving effective sample size $1$. Six graphs are acyclic; the two cyclic graphs have zero weight. The edge marginals therefore equal the highest-weight adjacency, and both E-SHD and point SHD are $16$. This example illustrates how concentrated mixture weights produce equal metrics.

\paragraph{Replication units.}
A data seed indexes a sampled observational dataset; a source batch contains five report-query rounds for a fixed variable presentation; an inference restart varies the graph-inference seed while holding its inputs fixed. In the matched naming study, four source batches share 20 data seeds, and paired tests average the source batches within each data seed. The benchmark-suite seed evaluations reuse cached observations as detailed in Appendix~\ref{app:discrete_full}. The ER study has five independent graphs, with datasets and report conditions nested within each graph.

\section{Knowledge graph quality analysis}
\label{app:kg_quality}

Table~\ref{tab:kg_quality} reports directed-edge quality for the benchmark-suite causal-candidate reports. The prompt-target and naming studies use separately elicited reports, summarized in Tables~\ref{tab:prompt_targets} and~\ref{tab:name_only_f1}.
These metrics are computed on the \emph{directed} edge set after aggregation across $R{=}5$ rounds with minimum round support $\geq 2$: a true positive requires both the correct parent-child pair and the correct direction.
Precision ranges from $.41$ to $.73$ and recall from $.36$ to $1.00$, giving both useful edges and incorrect claims.

\begin{table}[h]
\caption{Directed-edge quality of the benchmark-suite causal-candidate reports. $|\KG|$ is the aggregated edge count. The four-batch naming intervention and direct/path prompt comparison use separate report sets.}
\label{tab:kg_quality}
\centering
\small
\begin{tabular}{l c c c c c c}
\toprule
Network & $d$ & $|E_{\text{true}}|$ & $|\KG|$ & Precision & Recall & F1 \\
\midrule
Asia & 8 & 8 & 11 & 0.73 & 1.00 & 0.84 \\
Sachs & 11 & 17 & 12 & 0.58 & 0.41 & 0.48 \\
Child & 20 & 25 & 22 & 0.41 & 0.36 & 0.38 \\
Insurance & 27 & 52 & 37 & 0.51 & 0.37 & 0.43 \\
Alarm & 37 & 46 & 43 & 0.47 & 0.43 & 0.45 \\
\bottomrule
\end{tabular}
\end{table}

\subsection{Prompt semantics and observation targets}
\label{app:prompt_semantics}

\paragraph{Direct edges and directed paths.}
On Asia and Sachs, we compare prompts requesting immediate direct parents with prompts requesting the existence of a directed path (Table~\ref{tab:prompt_targets}). Within this comparison, canonical variable names, domain, GPT-4.1-mini, the JSON output schema, temperatures, query rounds, and aggregation are fixed. Each prompt has four independent source batches of $R=5$ rounds. Direct-parent reports are evaluated against the true adjacency matrix; path reports are evaluated against reachability, obtained from the transitive closure of the DAG.

\begin{table}[htbp]
\centering
\small
\caption{Report quality under explicit causal targets, averaged over four source batches. Direct prompts are evaluated against adjacency; path prompts against reachability. Asia has 8 direct edges and 18 reachable ordered pairs.}
\label{tab:prompt_targets}
\begin{tabular}{lcc}
\toprule
Network & Direct prompt: adjacency F1 & Path prompt: reachability F1 \\
\midrule
Asia & $0.570\pm0.081$ & $0.872\pm0.021$ \\
Sachs & $0.440\pm0.037$ & $0.429\pm0.035$ \\
\bottomrule
\end{tabular}
\end{table}

\paragraph{Numerical total effects.}
Total effects sum contributions along directed paths, which can cancel. We evaluate this numerical target on ten parameterized linear SCMs using $(I-W)^{-1}-I$. Across 150 calls, target-matched F1 is $1.000$ for direct-parent prompts, $1.000$ for path-existence prompts, and $.945$ for total-effect prompts. The last prompt selects all five designated reachable pairs with zero total effect, treating them as positive despite cancellation.

\paragraph{Inference with path reports.}
We instantiate the likelihood over path states $\{\mathrm{none},i\leadsto j,j\leadsto i\}$ and compare it with direct-state HB and a fixed-$\rho=.8$ path-aware prior. All methods receive the same reports, with four batches paired to twenty data seeds. Table~\ref{tab:path_channel_comparison} reports direct-graph SHD. Paired Wilcoxon tests on seed means, averaging batches within seed, give inconclusive direct-versus-path HB differences. The prompt results above establish the distinct targets elicited by each instruction.

\begin{table}[htbp]
\centering
\small
\caption{Inference from the same path reports. Each entry is mean SHD; lower is better. The fixed comparator uses a path-state observation model with $\rho=0.8$.}
\label{tab:path_channel_comparison}
\begin{tabular}{lccc}
\toprule
Network & HB-direct & HB-path-aware & Fixed path-aware \\
\midrule
Asia & $1.52$ & $1.81$ & $2.34$ \\
Sachs & $7.86$ & $8.05$ & $8.03$ \\
\bottomrule
\end{tabular}
\end{table}

\subsection{Variable names and source quality}
\label{app:name_only}

The naming intervention changes only the variable labels among canonical names, anonymous IDs, and pseudo-scientific names. The true domain and explicit direct-parent prompt are fixed, and no condition provides variable descriptions. We make 20 calls per network and naming condition, for 300 calls in total. Four disjoint batches of five rounds are each paired with the same 20 data seeds. These source batches and data seeds are distinct levels of replication.

\begin{table}[htbp]
\centering
\small
\caption{Mean direct-edge F1 under the clean name-only intervention. Each value averages four source batches. A zero F1 can result from either absent predictions or incorrect predictions; the downstream comparison distinguishes these cases.}
\label{tab:name_only_f1}
\begin{tabular}{lccc}
\toprule
Network & Canonical & Anonymous IDs & Pseudo-scientific \\
\midrule
Alarm & $0.355$ & $0.000$ & $0.228$ \\
Asia & $0.591$ & $0.000$ & $0.000$ \\
Child & $0.480$ & $0.000$ & $0.000$ \\
Insurance & $0.401$ & $0.000$ & $0.000$ \\
Sachs & $0.392$ & $0.018$ & $0.000$ \\
\bottomrule
\end{tabular}
\end{table}

Table~\ref{tab:name_only_coverage} reports aggregated edge counts and empty elicitation rounds. Individual rounds can contain useful observations even when aggregation leaves an empty edge set.

\begin{table}[htbp]
\centering
\small
\caption{Source coverage in the naming intervention: mean aggregated directional-edge count across four batches / number of empty rounds out of twenty.}
\label{tab:name_only_coverage}
\begin{tabular}{lccc}
\toprule
Network & Canonical & Anonymous IDs & Pseudo-scientific \\
\midrule
Alarm & $30.00/0$ & $0.00/20$ & $14.50/6$ \\
Asia & $9.75/0$ & $0.50/17$ & $0.00/20$ \\
Child & $18.75/0$ & $0.00/20$ & $0.00/19$ \\
Insurance & $26.50/0$ & $0.00/19$ & $0.00/17$ \\
Sachs & $8.50/0$ & $3.25/11$ & $10.00/0$ \\
\bottomrule
\end{tabular}
\end{table}

Canonical names yield useful reports (Table~\ref{tab:name_only_f1}). The mean SHD reductions over no-KG are $2.38$, $5.55$, $9.60$, $7.96$, and $14.50$ on Asia, Sachs, Child, Insurance, and Alarm, respectively (Table~\ref{tab:name_only_downstream}). Paired Wilcoxon tests average four batches within each of twenty data seeds (all unadjusted $p<.001$). Removing canonical names lowers mean inferred reliability on every network. Anonymous Alarm and Child and pseudo-scientific Asia have zero report reward and reproduce no-KG. On Sachs, incorrect anonymous reports survive and increase SHD by $.51$ ($p=.444$). Report availability and the correctness of surviving claims produce different outcomes.

\begin{table}[htbp]
\centering
\small
\caption{Complete downstream naming comparisons. HB and the fixed-channel control use the same reports and graph backend, averaged over four batches and 20 shared data seeds. No-KG uses those datasets. The final column is HB's mean inferred reliability.}
\label{tab:name_only_downstream}
\begin{tabular}{llrrrr}
\toprule
Network & Labels & No-KG SHD & HB SHD & Fixed-channel SHD & HB $\bar\rho$ \\
\midrule
Asia & Canonical & $4.30$ & $1.93$ & $2.84$ & $0.501$ \\
 & Anonymous & $4.30$ & $3.95$ & $4.49$ & $0.403$ \\
 & Pseudo-scientific & $4.30$ & $4.30$ & $4.30$ & $0.378$ \\
\midrule
Sachs & Canonical & $12.80$ & $7.25$ & $7.05$ & $0.454$ \\
 & Anonymous & $12.80$ & $13.31$ & $13.43$ & $0.434$ \\
 & Pseudo-scientific & $12.80$ & $14.09$ & $15.22$ & $0.399$ \\
\midrule
Child & Canonical & $17.05$ & $7.45$ & $6.84$ & $0.423$ \\
 & Anonymous & $17.05$ & $17.05$ & $17.05$ & $0.378$ \\
 & Pseudo-scientific & $17.05$ & $16.90$ & $16.91$ & $0.381$ \\
\midrule
Insurance & Canonical & $42.45$ & $34.49$ & $35.49$ & $0.408$ \\
 & Anonymous & $42.45$ & $42.91$ & $43.09$ & $0.380$ \\
 & Pseudo-scientific & $42.45$ & $42.79$ & $42.80$ & $0.386$ \\
\midrule
Alarm & Canonical & $43.55$ & $29.05$ & $29.04$ & $0.403$ \\
 & Anonymous & $43.55$ & $43.55$ & $43.55$ & $0.378$ \\
 & Pseudo-scientific & $43.55$ & $39.55$ & $41.15$ & $0.394$ \\
\bottomrule
\end{tabular}
\end{table}

The intervention measures how variable semantics affect report quality. Comparisons on identical reports evaluate their use in inference, and Appendix~\ref{app:accuracy_calibration} changes synthetic accuracy and report count directly.

\section{Source comparison across LLM providers}
\label{app:source_robustness}

We run the report pipeline with GPT-4.1-mini, GPT-5.1, GPT-4o, and Claude Sonnet 4.5 on Asia, Sachs, and Child. Each source uses five rounds at temperatures $.65$--$.85$. This source study samples 1,000 observations at seed 42, constructs each round separately, and applies common HB defaults across providers without the benchmark suite's network-specific overrides; graph-inference seeds are zero. It therefore evaluates variation across sources under one common configuration.

Table~\ref{tab:source_robustness} reports mean inferred reliability. Within-network standard deviations across sources are $.049$ on Asia, $.027$ on Sachs, and $.003$ on Child. These summaries describe the fitted report layer across providers; calibration against known generating accuracy is assessed in Appendix~\ref{app:accuracy_calibration}.

\begin{table}[h]
\caption{Mean inferred reliability across LLM sources under the common source-study configuration. Each source has five report rounds and one sampled dataset per network.}
\label{tab:source_robustness}
\centering
\small
\begin{tabular}{lcccc}
\toprule
Network & GPT-4.1-mini & GPT-5.1 & GPT-4o & Claude Sonnet 4.5 \\
\midrule
Asia & $.783$ & $.714$ & $.800$ & $.830$ \\
Sachs & $.696$ & $.684$ & $.700$ & $.640$ \\
Child & $.804$ & $.802$ & $.799$ & $.806$ \\
\bottomrule
\end{tabular}
\end{table}

\section{Additional discrete diagnostics and ablation results}
\label{app:discrete_full}

The benchmark suite compares DiBS and DiBS+~\citep{lorch2021dibs}, GES~\citep{chickering2002optimal}, PC~\citep{spirtes2000causation}, MMHC~\citep{tsamardinos2006max}, bootstrap BPC and BGES~\citep{friedman1999data}, Harmonized-Prior~\citep{ban2025harmonized}, and knowledge-prior controls. KG-SoftMAP~\citep{xu2026kgsoftmap} uses greedy MAP search with a fixed soft edge prior; Random-KG-SoftMAP uses uniformly sampled edges. Figure~\ref{fig:benchmark_suite_full} shows the full single-run method comparison. Tables~\ref{tab:discrete_significance_full} and~\ref{tab:discrete_ablation_full} give the separate five-seed summaries, with 25 network--seed evaluations. Asia and Sachs permute their 1,000 observations; the larger networks use overlapping 1,000-row subsets of 2,000 rows. The mean observational negative MLL is $9.624$ for HB, $9.767$ for DiBS-only, and $9.808$ for MMHC. Interventional negative I-MLL averages BMA query probabilities before taking their negative logarithm, in nats.

\begin{table}[h]
\caption{Aggregate differences in the discrete benchmark suite, reported as HB minus the comparison method over 25 network-seed evaluations. These are descriptive differences: seed evaluations reuse observations within each fixed network.}
\label{tab:discrete_significance_full}
\centering
\small
\begin{tabular}{l l c}
\toprule
Comparison & Metric & Mean difference \\
\midrule
vs.\ DiBS-only & SHD & $-7.96$ \\
vs.\ DiBS-only & AUROC & $+0.178$ \\
vs.\ DiBS-only & neg.\ MLL & $-0.142$ \\
vs.\ MMHC & SHD & $-2.44$ \\
vs.\ MMHC & AUROC & $+0.222$ \\
vs.\ MMHC & neg.\ MLL & $-0.184$ \\
\bottomrule
\end{tabular}
\end{table}

Table~\ref{tab:discrete_ablation_full} reports absolute recovery and predictive metrics for the knowledge-source comparisons and component ablations. Figure~\ref{fig:discrete_sample_ablation_full} shows the sample-size trends in panel~\subref{fig:discrete_sample_size_full} and component effects in panel~\subref{fig:discrete_ablation_plot_full}.

\begin{table}[h]
\caption{Knowledge-source and component-ablation summaries for the main discrete benchmark suite.}
\label{tab:discrete_ablation_full}
\centering
\small
\begin{tabular}{l c c c c}
\toprule
Variant & SHD $\downarrow$ & AUROC $\uparrow$ & neg.\ MLL $\downarrow$ & $\bar{\rho}$ \\
\midrule
HB-NoisyKG (real multi-round LLM KG) & $14.84$ & $0.925$ & $9.624$ & $0.425$ \\
HB-NoisyKG (synthetic ground-truth KG) & $10.84$ & $0.944$ & $9.618$ & $0.443$ \\
\midrule
HB-NoisyKG & $14.84$ & $0.925$ & $9.624$ & $0.425$ \\
No confusion update & $14.96$ & $0.919$ & $9.626$ & $0.431$ \\
No $\beta,\tau$ update & $15.04$ & $0.919$ & $9.629$ & $0.440$ \\
No KG warm start & $16.44$ & $0.871$ & $9.618$ & $0.400$ \\
No local search & $21.20$ & $0.897$ & $9.739$ & $0.425$ \\
\bottomrule
\end{tabular}
\end{table}

\paragraph{Report weighting and reliability diagnostics.}
The matched comparison in Table~\ref{tab:matched_reports_main}(a) supplies HB and the fixed-channel control with identical direct-parent reports and observations. The benchmark-suite diagnostics in Figure~\ref{fig:discrete_reliability_full} address a different question: correct reports receive mean inferred reliability $.24$ higher than incorrect reports. Correlations with reported confidence, existence agreement, and directional agreement are $.87$, $.85$, and $.86$, respectively. These are associations with inputs to the reliability prior. The controlled accuracy and report-count studies below assess reliability against known generating accuracy; Appendix~\ref{app:sensitivity} examines the effect of initialization.

\paragraph{Fixed channels with synthetic reports.}
A separate experiment compares the fixed channel at $w=.8$ with per-pair channel weights obtained from synthetic reported confidence. It samples 1,000 observations on each of the five networks at seeds 41--45 and retains each true edge with probability $.7$, adding spurious edges at rate $.2$. Confidence is sampled from $U(.6,.95)$ for retained edges and $U(.3,.7)$ for spurious edges. Both controls use 20 Top-$K$ restarts and compute ordered-adjacency SHD after thresholding edge marginals at $.5$, without cycle pruning. Their mean SHD is $25.52$ and $24.00$, respectively. These values describe the synthetic-report controls; the benchmark-suite HB mean $14.84$ uses LLM reports, cached observations, and a different SHD convention. The matched report-weighting comparison is therefore Table~\ref{tab:matched_reports_main}(a).

\begin{figure*}[t]
\centering
\includegraphics[width=0.82\textwidth]{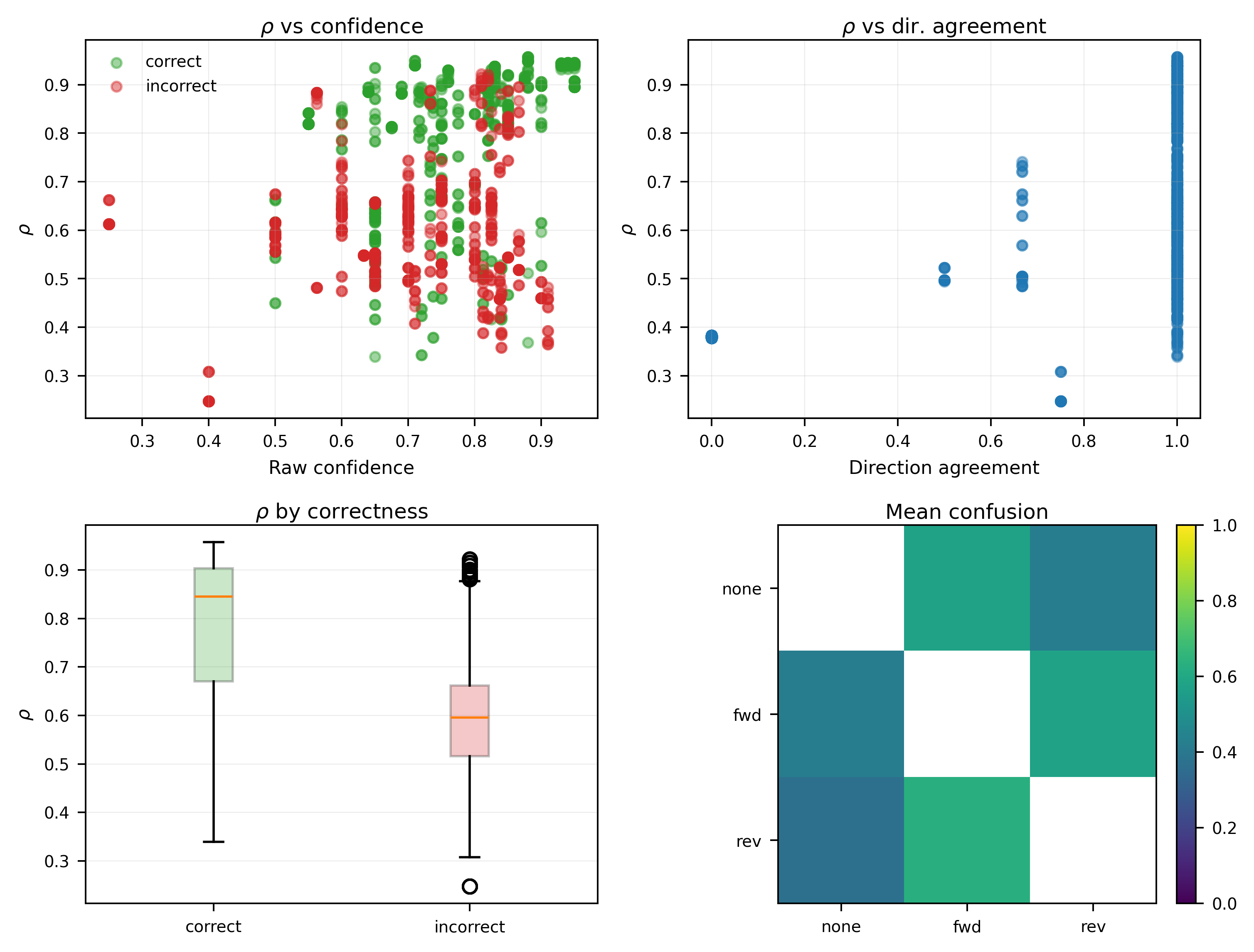}
\caption{Full reliability diagnostics for the main discrete benchmark suite. The learned reliability values rise with confidence and agreement, and correct claims receive higher $\rho$ than incorrect ones. The confusion panel summarizes the mean shared error structure.}
\label{fig:discrete_reliability_full}
\end{figure*}

\begin{figure*}[t]
\centering
\begin{subfigure}[t]{0.495\textwidth}
    \centering
    \includegraphics[width=\textwidth]{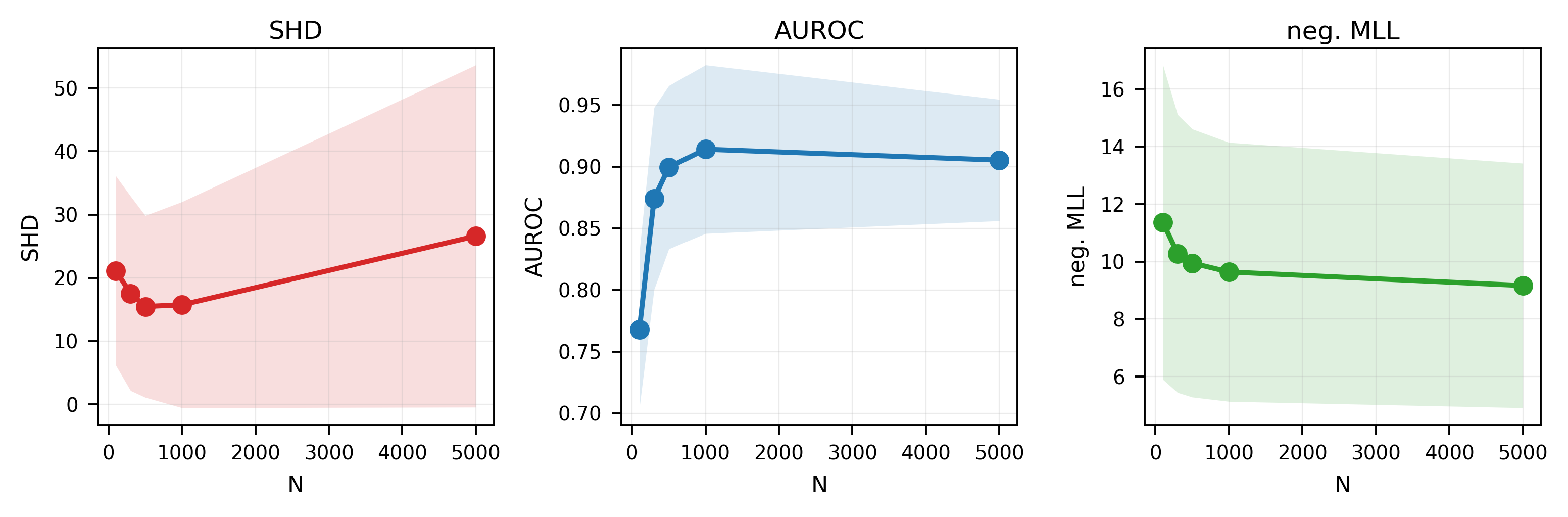}
    \caption{Sample-size trends.}
    \label{fig:discrete_sample_size_full}
\end{subfigure}\hfill
\begin{subfigure}[t]{0.495\textwidth}
    \centering
    \includegraphics[width=\textwidth]{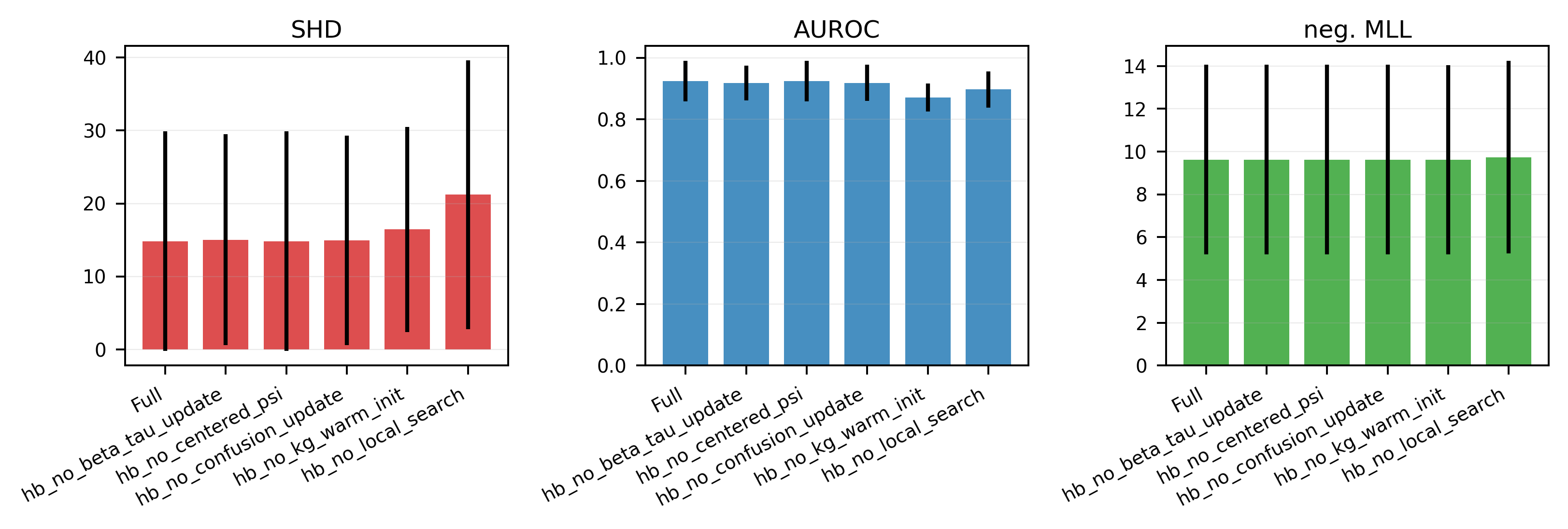}
    \caption{Component ablations.}
    \label{fig:discrete_ablation_plot_full}
\end{subfigure}
\caption{Discrete diagnostics. The sample-size sweep uses $N\in\{100,300,500,1000,5000\}$. Negative MLL decreases across this sweep. Structural recovery is not monotone: SHD rises at $N=5000$, while AUROC remains above its $N=100$ value. The ablation plot shows the effects of the individual components.}
\label{fig:discrete_sample_ablation_full}
\end{figure*}

\begin{table}[h]
\centering
\scriptsize
\setlength{\tabcolsep}{3.5pt}
\caption{Fully paired corruption study, averaged over five networks and three sampling seeds per setting. SHD gives the means at the lowest and highest tested severity; AUROC gives the lowest severity-specific mean. Reliability ranges are across condition means, not individual runs.}
\label{tab:robustness_summary}
\begin{tabular}{l c c c c}
\toprule
Corruption type & Severity range & Mean SHD (start $\to$ end) & Lowest mean AUROC & $\bar{\rho}$ range \\
\midrule
Missing edges      & retention $1.0 \to 0.3$ & $2.80 \to 7.93$ & $0.925$ & $0.451$--$0.483$ \\
Spurious edges     & spurious $0.0 \to 0.5$ & $2.80 \to 2.93$ & $0.969$ & $0.483$--$0.553$ \\
Flipped directions & flip $0.0 \to 0.7$ & $2.80 \to 19.13$ & $0.808$ & $0.418$--$0.483$ \\
Mixed corruption   & mixed $0.1 \to 0.9$ & $3.73 \to 19.80$ & $0.830$ & $0.487$--$0.496$ \\
\midrule
No-KG & all settings & $17.93$ & $0.880$ & -- \\
\bottomrule
\end{tabular}
\end{table}

Table~\ref{tab:corruption_nokg_all} gives every paired condition, and Figure~\ref{fig:discrete_robustness_full} plots the resulting curves. The missing-edge sweep fixes spurious rate at zero; the spurious-edge sweep fixes retention at one. Mixed indices $.1$, $.5$, and $.9$ correspond to (retention, spurious rate, flip rate) $(.9,.1,.1)$, $(.5,.3,.5)$, and $(.1,.5,.9)$, respectively. The clean $2.80$ mean SHD is the baseline of this synthetic corruption study, whereas the ground-truth replacement result in Table~\ref{tab:discrete_diag} uses the benchmark-suite observations and report construction.

The 225 HB fits are paired with fifteen data-only fits on common inputs. Each graph call uses ten restarts, 120 local-search steps, a tabu size of ten, and at most 25 retained graphs. HB makes three calls: data-only initialization and two updates. No-KG makes one call. Sampling seeds permute the Asia and Sachs pools of 1,000 rows and select overlapping subsets from pools of 2,000 rows on the other networks. The reported means average these five fixed networks and sampling conditions. AUROC uses average ranks for ties.

\begin{table}[htbp]
\centering
\small
\caption{Complete paired corruption profile. $r,s,f$ denote retention, spurious-edge rate, and flip rate. No-KG has SHD $17.933$ and AUROC $.880$ in every row. Differences are HB minus no-KG; negative SHD and positive AUROC differences favor HB. The clean condition appears in both the missing/spurious and flip protocol blocks.}
\label{tab:corruption_nokg_all}
\begin{tabular}{lrrrrrr}
\toprule
Block & $r$ & $s$ & $f$ & HB SHD & $\Delta$SHD & $\Delta$AUROC \\
\midrule
Missing/spurious & $1$ & $0$ & $0$ & $2.800$ & $-15.133$ & $+.089$ \\
 & $1$ & $.2$ & $0$ & $2.867$ & $-15.067$ & $+.089$ \\
 & $1$ & $.5$ & $0$ & $2.933$ & $-15.000$ & $+.089$ \\
 & $.7$ & $0$ & $0$ & $3.467$ & $-14.467$ & $+.077$ \\
 & $.7$ & $.2$ & $0$ & $3.467$ & $-14.467$ & $+.084$ \\
 & $.7$ & $.5$ & $0$ & $3.467$ & $-14.467$ & $+.077$ \\
 & $.3$ & $0$ & $0$ & $7.933$ & $-10.000$ & $+.045$ \\
 & $.3$ & $.2$ & $0$ & $7.600$ & $-10.333$ & $+.046$ \\
 & $.3$ & $.5$ & $0$ & $8.000$ & $-9.933$ & $+.047$ \\
\midrule
Flip & $1$ & $0$ & $0$ & $2.800$ & $-15.133$ & $+.089$ \\
 & $1$ & $0$ & $.3$ & $8.133$ & $-9.800$ & $+.053$ \\
 & $1$ & $0$ & $.7$ & $19.133$ & $+1.200$ & $-.072$ \\
\midrule
Mixed & $.9$ & $.1$ & $.1$ & $3.733$ & $-14.200$ & $+.079$ \\
 & $.5$ & $.3$ & $.5$ & $15.333$ & $-2.600$ & $-.032$ \\
 & $.1$ & $.5$ & $.9$ & $19.800$ & $+1.867$ & $-.051$ \\
\bottomrule
\end{tabular}
\end{table}

\begin{figure*}[t]
\centering
\includegraphics[width=\textwidth]{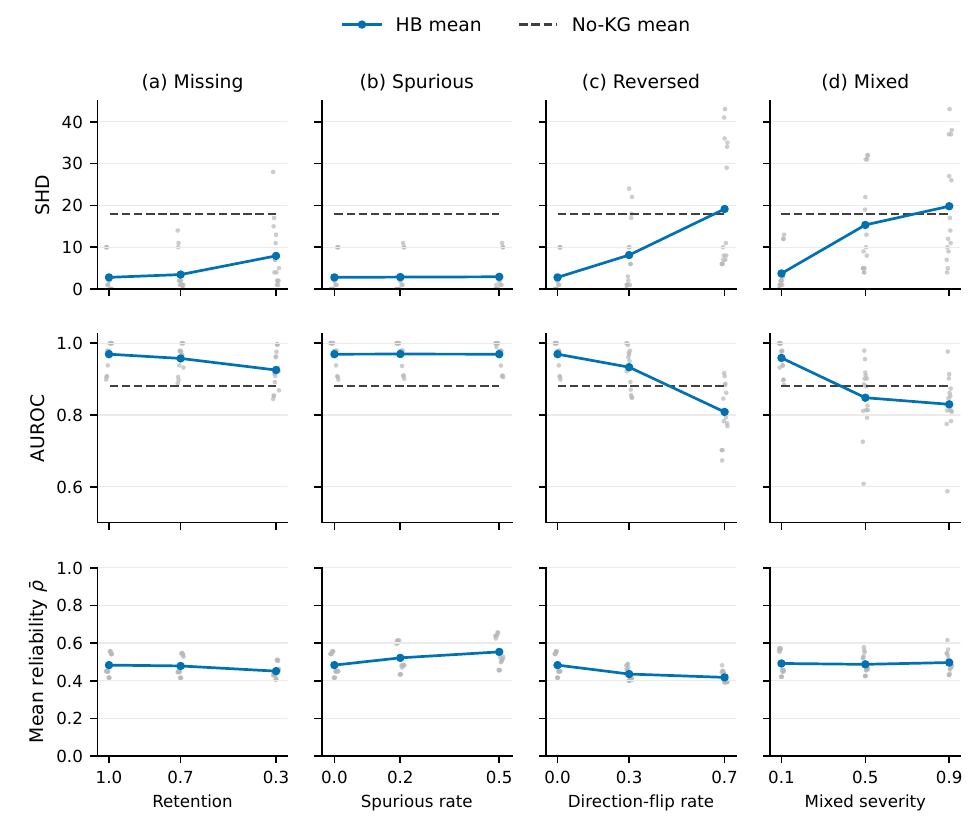}
\caption{Fully paired corruption results. Grey points are the 15 network-seed evaluations per condition; blue curves are their means. Dashed lines show no-KG means for SHD and AUROC. Reliability averages all variable pairs within a run, including unreported pairs. The missing curve fixes spurious rate at zero, and the spurious curve fixes retention at one. Mixed-index settings are defined beside Table~\ref{tab:robustness_summary}.}
\label{fig:discrete_robustness_full}
\end{figure*}

\subsection{Pair-specific and pooled reliability}
\label{app:reliability_pooling}

This study tests how the assignment of learned reliability affects the final graph update. We reuse the canonical-name reports from the naming study: four batches of five rounds on each of the five networks, paired with 20 data seeds and $N=1000$. Full HB is fitted within this study using 20 Top-$K$ restarts, at most three outer updates, and warm strength $.5$. Each control then receives the same learned confusion posterior and the same final graph-search budget.

The primary comparison replaces each pair's reliability by its posterior mean and compares these pair-specific values with their unweighted average over pairs having at least one report. Both arms use the same point-estimate treatment of reliability. A separate Beta-pooling control replaces each observed pair's posterior mean by that common mean while preserving its concentration $a_{ij}+b_{ij}$. The network-mean control uses one scalar per network, averaged over all pairs in the naming study, including pairs without reports. It therefore changes overall trust as well as removing differences among reported pairs. All these controls use parameters learned by HB; they evaluate reliability assignment conditional on those estimates. Table~\ref{tab:reliability_pooling_overall} summarizes their recovery and probability metrics.

\begin{table}[htbp]
\centering
\small
\caption{Reliability-assignment controls. Values are equal-network averages. Each report-based arm has 400 network--seed--batch conditions; no-KG is evaluated once per network and data seed. All edge scores use average-tie AUROC, and SHD counts reversals twice.}
\label{tab:reliability_pooling_overall}
\begin{tabular}{lrrrr}
\toprule
Control & SHD $\downarrow$ & E-SHD $\downarrow$ & AUROC $\uparrow$ & Brier $\downarrow$ \\
\midrule
No-KG & $23.960$ & $24.149$ & $.8965$ & $.0614$ \\
Full HB & $16.060$ & $16.173$ & $.9160$ & $.0364$ \\
Pooled Beta means & $16.265$ & $16.391$ & $.9220$ & $.0335$ \\
Pair-specific point means & $16.040$ & $16.129$ & $.9166$ & $.0363$ \\
Pooled observed-pair point mean & $16.265$ & $16.391$ & $.9210$ & $.0337$ \\
Network mean over all pairs & $21.075$ & $21.224$ & $.9046$ & $.0428$ \\
\bottomrule
\end{tabular}
\end{table}

For each network, we average the four source batches within each data seed before computing paired differences. Table~\ref{tab:reliability_pooling_paired} reports the primary point-mean comparison, with bootstrap intervals over 20 data seeds and two-sided Wilcoxon tests corrected by Holm across the five networks. These intervals condition on the benchmark graph and the four report batches.

\begin{table}[htbp]
\centering
\small
\caption{Primary reliability-assignment comparison. $\Delta$SHD is pooled minus pair-specific point means, so positive values favor pair-specific weighting. Confidence intervals are unadjusted 95\% bootstrap intervals for the mean difference.}
\label{tab:reliability_pooling_paired}
\begin{tabular}{lrrrrr}
\toprule
Network & Pair SHD & Pooled SHD & $\Delta$SHD & 95\% CI & $p_{\mathrm{Holm}}$ \\
\midrule
Asia & $1.925$ & $1.188$ & $-.738$ & $[-1.113,-.413]$ & $.011$ \\
Sachs & $7.338$ & $7.338$ & $.000$ & $[-.513,.538]$ & $1.000$ \\
Child & $7.338$ & $6.800$ & $-.538$ & $[-1.063,-.038]$ & $.170$ \\
Insurance & $34.475$ & $36.713$ & $2.238$ & $[.500,3.950]$ & $.096$ \\
Alarm & $29.125$ & $29.288$ & $.163$ & $[-1.075,1.275]$ & $1.000$ \\
\bottomrule
\end{tabular}
\end{table}

Pair-specific means have a small advantage in the equal-network SHD average, whereas pooling gives higher mean AUROC and lower Brier score. The SHD advantage varies across networks: Asia favors pooling after correction, and none of the five tests establishes a pair-specific advantage after correction. The benefit of estimating reliability and the incremental benefit of assigning different reliabilities to different pairs are therefore distinct empirical questions.

\subsection{Independently learned common reliability}
\label{app:common_reliability}

We fit three models independently. Common reliability uses one $\rho_{\mathrm{s}}$ for all observed pairs. Pair-intercept retains distinct $\rho_{ij}$ with common prior mean $\mu=\sigma(\beta_0)$. Full HB uses the feature-conditioned prior and its default coefficient initialization. All learn the shared error matrix $B$ with the same graph likelihood, base prior, data-only initialization, warm blend, and inference budget. Their covariates and initial reliability priors define the compared specifications.

Writing $m_{ij}=\sum_z q_{ij}(z)n_{ij}(z)$, the common-reliability update is
\[
q(\rho_{\mathrm{s}})=\mathrm{Beta}\!\left(\tau\mu+\sum_{i<j}m_{ij},\;
\tau(1-\mu)+\sum_{i<j}(R_{ij}-m_{ij})\right).
\]
The common Beta prior is counted once, with one reliability term in the empirical-Bayes objective. Unreported pairs contribute zero counts. Pair-intercept has one Beta variable and prior term per pair. Both intercept variants begin at $\beta_0=-.5$ and $\tau=8$, with the same Gaussian and Gamma hyperpriors as HB. Each arm estimates its own parameters.

Table~\ref{tab:common_learned_trust} relates learned trust to recovery. On Insurance, common reliability has a lower estimate than HB's observed-pair mean and higher SHD than the shared data-only initialization.

Relative report evidence depends jointly on reliability and the error matrix. For report $y$ and competing state $z\ne y$, the channel likelihood ratio is $\rho/[(1-\rho)B_{zy}]$ in favor of $Z=y$. A uniform error row, $B_{zy}=1/2$, gives threshold $\rho=1/3$. The graph update uses the expected log likelihoods of both quantities.

\begin{table}[htbp]
\centering
\small
\caption{Learned trust and SHD in the independent-reliability comparison. HB reliability averages observed pairs; common reliability is a single source parameter. Initial no-KG is the shared one-call initialization. The batches and data seeds are those of Table~\ref{tab:common_matched_full}.}
\label{tab:common_learned_trust}
\begin{tabular}{lrrrrr}
\toprule
Network & Common $\rho_s$ & HB mean $\rho$ & Common SHD & HB SHD & Initial no-KG \\
\midrule
Asia & $.464$ & $.638$ & $1.80$ & $1.93$ & $4.30$ \\
Sachs & $.714$ & $.767$ & $7.54$ & $7.43$ & $12.60$ \\
Child & $.564$ & $.684$ & $6.71$ & $7.61$ & $16.90$ \\
Insurance & $.219$ & $.602$ & $45.84$ & $35.59$ & $42.45$ \\
Alarm & $.427$ & $.657$ & $37.73$ & $28.73$ & $43.55$ \\
\bottomrule
\end{tabular}
\end{table}

The matched study reuses four canonical report batches and 20 data seeds on each of five networks, giving 400 conditions per variant. The accuracy study uses two networks, seven accuracy levels, and ten seeds, giving 140 conditions per variant. Report confidence is fixed at $.5$ in the accuracy study. Every variant uses warm-start strength $.5$, 20 Top-$K$ restarts and up to 60 retained DAGs per graph call, with one initial data-only fit followed by exactly three graph updates. This fixed total budget defines a separate controlled comparison from Table~\ref{tab:matched_reports_main}.

The primary contrast is full HB minus independently fitted common reliability in ordered-adjacency SHD, counting reversals twice. We average the four matched-study batches within each data seed before paired tests. Holm correction covers five network tests in the matched study and, separately, fourteen network--accuracy tests in the accuracy study. Secondary comparisons isolate pair-intercept versus common reliability and full HB versus the pair-intercept prior. Confidence intervals condition on the benchmark graphs and, for the matched study, the cached report batches. AUROC uses average ranks for tied scores, and all probability-based metrics exclude diagonal entries.

\begin{table}[htbp]
\centering
\small
\caption{Independent reliability fits on canonical LLM reports. Each entry averages four source batches and twenty data seeds. Common means one jointly learned source reliability; Pair-int uses separate pair reliabilities with an intercept-only prior.}
\label{tab:common_matched_full}
\begin{tabular}{llrrrr}
\toprule
Network & Model & SHD & E-SHD & AUROC & Brier \\
\midrule
Asia & Common & $1.800$ & $1.994$ & $.9124$ & $.0310$ \\
 & Pair-int & $3.763$ & $3.744$ & $.8509$ & $.0632$ \\
 & Full HB & $1.925$ & $1.957$ & $.9225$ & $.0329$ \\
Sachs & Common & $7.538$ & $7.746$ & $.9514$ & $.0593$ \\
 & Pair-int & $13.275$ & $13.559$ & $.8769$ & $.1139$ \\
 & Full HB & $7.425$ & $7.536$ & $.9442$ & $.0608$ \\
Child & Common & $6.713$ & $7.033$ & $.9170$ & $.0169$ \\
 & Pair-int & $12.263$ & $12.358$ & $.8968$ & $.0318$ \\
 & Full HB & $7.613$ & $7.919$ & $.9119$ & $.0195$ \\
Insurance & Common & $45.838$ & $45.845$ & $.8083$ & $.0652$ \\
 & Pair-int & $41.600$ & $41.564$ & $.8183$ & $.0587$ \\
 & Full HB & $35.588$ & $35.621$ & $.8431$ & $.0502$ \\
Alarm & Common & $37.725$ & $37.758$ & $.9402$ & $.0282$ \\
 & Pair-int & $46.188$ & $46.208$ & $.9214$ & $.0344$ \\
 & Full HB & $28.725$ & $28.774$ & $.9574$ & $.0214$ \\
\bottomrule
\end{tabular}
\end{table}

\begin{table}[htbp]
\centering
\small
\caption{Primary matched-study contrast: full HB minus common reliability in SHD. Intervals are unadjusted 95\% paired-bootstrap intervals over twenty data seeds after averaging source batches. Holm correction covers the five network tests.}
\label{tab:common_matched_tests}
\begin{tabular}{lrrr}
\toprule
Network & $\Delta$SHD & 95\% CI & $p_{\mathrm{Holm}}$ \\
\midrule
Asia & $.125$ & $[-.263,.513]$ & $1.000$ \\
Sachs & $-.113$ & $[-.750,.463]$ & $1.000$ \\
Child & $.900$ & $[.250,1.575]$ & $.066$ \\
Insurance & $-10.250$ & $[-13.338,-7.250]$ & $.000048$ \\
Alarm & $-9.000$ & $[-12.075,-5.862]$ & $.00027$ \\
\bottomrule
\end{tabular}
\end{table}

Full HB has lower mean SHD than independent common reliability on Insurance and Alarm after correction (Table~\ref{tab:common_matched_tests}). Common reliability has lower equal-network mean SHD than pair-intercept. Table~\ref{tab:common_accuracy_full} gives the synthetic comparison, where all pairs share one generating accuracy.

\begin{table}[htbp]
\centering
\small
\caption{Independent reliability fits with a common generating accuracy across all pairs. Entries average ten seeds. Data-only values are the stored initializations, used as descriptive references; each fitted reliability variant then makes three additional graph calls.}
\label{tab:common_accuracy_full}
\begin{tabular}{lrrrrrr}
\toprule
Network & $\alpha$ & Initial no-KG & Common & Pair-int & Full HB & $p_{\mathrm{Holm}}$ \\
\midrule
Asia & $0$ & $5.5$ & $.6$ & $5.1$ & $9.8$ & $.027$ \\
 & $1/3$ & $5.5$ & $3.5$ & $3.1$ & $4.1$ & $1.000$ \\
 & $.5$ & $5.5$ & $4.1$ & $3.2$ & $3.0$ & $1.000$ \\
 & $.6$ & $5.5$ & $1.9$ & $2.1$ & $1.2$ & $1.000$ \\
 & $.75$ & $5.5$ & $.8$ & $1.4$ & $1.4$ & $1.000$ \\
 & $.9$ & $5.5$ & $0$ & $1.1$ & $.8$ & $.094$ \\
 & $1$ & $5.5$ & $0$ & $1.1$ & $.8$ & $.094$ \\
\midrule
Sachs & $0$ & $9.1$ & $.9$ & $9.4$ & $19.5$ & $.027$ \\
 & $1/3$ & $9.1$ & $12.7$ & $11.0$ & $11.4$ & $1.000$ \\
 & $.5$ & $9.1$ & $9.1$ & $11.3$ & $7.5$ & $1.000$ \\
 & $.6$ & $9.1$ & $5.5$ & $6.6$ & $6.0$ & $1.000$ \\
 & $.75$ & $9.1$ & $1.3$ & $4.3$ & $1.6$ & $1.000$ \\
 & $.9$ & $9.1$ & $1.0$ & $1.9$ & $1.3$ & $1.000$ \\
 & $1$ & $9.1$ & $.3$ & $1.7$ & $.8$ & $1.000$ \\
\bottomrule
\end{tabular}
\end{table}

The last column compares full HB with common reliability, correcting fourteen network--accuracy tests by Holm. At $\alpha=0$, HB-minus-common SHD is $+9.2$ on Asia and $+18.6$ on Sachs; the other corrected tests are inconclusive. The generator independently draws labels from the two incorrect states at this endpoint, so repeated labels provide exclusion information about the truth. All arms share the data-only initialization, and observations are fixed across accuracy levels within each seed.

\subsection{Alternating inference and report-feature controls}
\label{app:feedback_controls}

This study uses Asia, Sachs, and Child with $N=1000$, ten data seeds, and generating accuracies $\alpha\in\{0,1/3,.75,1\}$. Report confidence is fixed at $.5$. Every arm receives the same observations, BDeu Top-$K$ backend, base graph prior, one data-only initialization, and three graph-update calls with 20 restarts each. The primary comparison uses $\lambda_{\mathrm{warm}}=.5$. One-time estimation freezes the observation model after its first update; report-only estimation fits the same observation channel to unconstrained pair states before graph search. Neither control uses the true graph for fitting. Table~\ref{tab:feedback_main} gives aggregate results, and Table~\ref{tab:feedback_accuracy} shows their dependence on generating accuracy.

\begin{table}[htbp]
\centering
\small
\caption{Feedback controls by report accuracy. Each row averages three networks and ten data seeds. Reliability MAE compares pairwise posterior means with generating accuracy. All held-out reports are excluded from fitting.}
\label{tab:feedback_accuracy}
\begin{tabular}{clrrr}
\toprule
$\alpha$ & Estimation & SHD & Reliability MAE & Held-out log loss \\
\midrule
$0$ & Report-only & $18.13$ & $.581$ & $1.749$ \\
 & One-time & $14.97$ & $.441$ & $1.227$ \\
 & Full & $13.40$ & $.259$ & $.958$ \\
$1/3$ & Report-only & $10.43$ & $.135$ & $1.177$ \\
 & One-time & $9.37$ & $.188$ & $1.188$ \\
 & Full & $9.40$ & $.127$ & $1.159$ \\
$.75$ & Report-only & $2.53$ & $.171$ & $.897$ \\
 & One-time & $2.20$ & $.083$ & $.777$ \\
 & Full & $2.40$ & $.091$ & $.777$ \\
$1$ & Report-only & $.23$ & $.097$ & $.108$ \\
 & One-time & $1.47$ & $.249$ & $.306$ \\
 & Full & $1.30$ & $.132$ & $.153$ \\
\bottomrule
\end{tabular}
\end{table}

Five likelihood reports are used in the primary comparison, and a further five independently generated reports evaluate predictive log loss. Full alternation has lower aggregate reliability MAE and held-out log loss than one-time estimation, but the difference varies with accuracy. At $\alpha=.75$, one-time estimation has lower reliability MAE, and both log losses round to $.777$. The twelve network-by-accuracy SHD comparisons use paired two-sided tests over ten data seeds and Holm correction; their adjusted values are all at least $.787$. Additional warm-strength comparisons at $.1$ and $1$ are retained in the complete result files.

\paragraph{Report features and information budget.}
The reused-feature arm computes features from the five likelihood reports. The intercept-only arm removes these pair summaries while continuing to estimate the intercept and prior concentration. The disjoint-feature arm uses five additional reports to compute features; its total report budget is ten, matched by the reused-$R=10$ arm. All use the same five held-out reports, which are excluded from both features and likelihood. Table~\ref{tab:feature_controls_accuracy} gives the complete accuracy profile.

\begin{table}[htbp]
\centering
\small
\caption{Report-feature controls: mean SHD over three networks and ten seeds. Disjoint $R=5$ has five likelihood reports and five auxiliary feature reports; reused $R=10$ uses the same total report budget. No-KG has mean SHD $11.10$ at every accuracy.}
\label{tab:feature_controls_accuracy}
\begin{tabular}{crrrr}
\toprule
$\alpha$ & Reused $R=5$ & Intercept $R=5$ & Disjoint $R=5$ & Reused $R=10$ \\
\midrule
$0$ & $13.40$ & $7.53$ & $12.07$ & $10.83$ \\
$1/3$ & $9.40$ & $8.57$ & $8.57$ & $7.53$ \\
$.75$ & $2.40$ & $3.83$ & $2.27$ & $2.50$ \\
$1$ & $1.30$ & $1.93$ & $1.30$ & $1.17$ \\
\bottomrule
\end{tabular}
\end{table}

The intercept-only prior has lower SHD at $\alpha=0$, and the feature prior has lower SHD at $\alpha=.75$. The disjoint-$R=5$ and reused-$R=10$ arms each use ten reports, allocated differently between feature construction and the likelihood. Table~\ref{tab:feature_controls_accuracy} shows the resulting accuracy profile.

\subsection{Using fully accurate reports}
\label{app:accurate_kg}

We provide error-free positive-edge reports for five rounds and treat unreported pairs as abstentions. Thirty paired data seeds are used on each of five networks. HB estimates reliability; the alternatives use the frozen channel at $w=.99$ or enforce the asserted edges directly. Table~\ref{tab:accurate_kg_comparison} reports SHD.

\begin{table}[htbp]
\centering
\small
\caption{Fully accurate positive-edge reports: mean SHD over thirty paired data seeds per network. The frozen channel uses $w=.99$, $\epsilon=.05$; hard use enforces the asserted edges. Unreported pairs abstain.}
\label{tab:accurate_kg_comparison}
\begin{tabular}{lcccc}
\toprule
Network & No-KG & HB & Fixed channel $w=.99$ & Hard use \\
\midrule
Alarm & $46.20$ & $2.50$ & $1.63$ & $0$ \\
Asia & $5.00$ & $.83$ & $0$ & $0$ \\
Child & $16.43$ & $3.17$ & $1.57$ & $0$ \\
Insurance & $42.30$ & $13.50$ & $9.67$ & $0$ \\
Sachs & $12.10$ & $1.07$ & $.10$ & $0$ \\
\bottomrule
\end{tabular}
\end{table}

HB lowers SHD on every network, with mean reduction $20.1933$ over 150 paired comparisons. The frozen channel at $w=.99$ lowers SHD by a further $1.62$, and hard use by $4.2133$. HB obtains approximately $93\%$ of the SHD gain achieved by the frozen channel. Table~\ref{tab:matched_reports_main}(a) gives the corresponding LLM-report comparison.

\subsection{Accuracy over the full range and the number of reports}
\label{app:accuracy_calibration}

We vary the generating accuracy over $\alpha\in\{0,1/3,0.5,0.6,0.75,0.9,1\}$ on Asia and Sachs, using three warm strengths and ten seeds. This gives 420 HB runs. Here $\alpha$ is the probability of a correct report, and $1/3$ is the accuracy of uniform guessing over the three states. In every network-by-warm setting, $\bar\rho$ increases strictly with $\alpha$ (Spearman correlation $1.0$; Pearson correlation $0.995$--$0.999$). Averaging over warm strengths gives the fitted relationships
\[
\bar\rho=0.276+0.561\alpha\quad\text{(Asia; }R^2=0.997\text{)},\qquad
\bar\rho=0.264+0.593\alpha\quad\text{(Sachs; }R^2=0.994\text{)}.
\]
Reliability is monotone in generating accuracy; the fitted slopes below one show shrinkage at this report count. Figure~\ref{fig:full_accuracy_calibration} shows the sweep at $\lambda_{\mathrm{warm}}=.5$ and the effect of adding reports.

\begin{figure}[htbp]
\centering
\includegraphics[width=\linewidth]{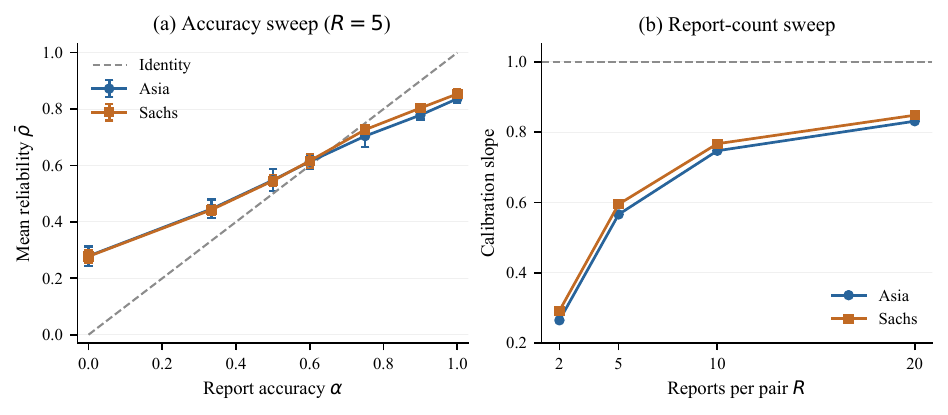}
\caption{Reliability across accuracy and report count. Left: the seven-level accuracy sweep at $R=5$ and $\lambda_{\mathrm{warm}}=0.5$; error bars are standard deviations across ten seeds. The dashed line marks exact calibration. Right: slopes of the fitted reliability--accuracy relationship in the report-count sweep.}
\label{fig:full_accuracy_calibration}
\end{figure}

\begin{table}[htbp]
\centering
\small
\caption{Report-count sweep. The first four columns give the slope of learned reliability against generating accuracy. The last two give calibration MAE at the smallest and largest report counts.}
\label{tab:report_count_calibration}
\begin{tabular}{lrrrrrr}
\toprule
 & \multicolumn{4}{c}{Slope} & \multicolumn{2}{c}{MAE} \\
\cmidrule(lr){2-5}\cmidrule(lr){6-7}
Network & $R=2$ & $R=5$ & $R=10$ & $R=20$ & $R=2$ & $R=20$ \\
\midrule
Asia & $.265$ & $.566$ & $.747$ & $.831$ & $.197$ & $.045$ \\
Sachs & $.292$ & $.595$ & $.767$ & $.848$ & $.187$ & $.043$ \\
\bottomrule
\end{tabular}
\end{table}

The calibration slopes move toward one and MAE decreases with report count (Table~\ref{tab:report_count_calibration}). At $\alpha=0$, the HB-minus-no-KG SHD gap falls from $+5.0$ to $+1.8$ on Asia and from $+10.0$ to $+1.2$ on Sachs as $R$ increases from two to twenty. Additional reports attenuate the excess error. The preceding corruption curves vary missing, spurious, flipped, and mixed reports, each changing the observation pattern differently.

\subsection{Heterogeneous and correlated errors}
\label{app:error_patterns}

At marginal report accuracy $0.75$, we cross homogeneous errors with two pair groups having opposing $90/10$ error biases, and independent reports with reports that copy the first report with probability $0.8$. Table~\ref{tab:heterogeneous_reports} gives mean paired SHD differences over ten seeds. The shared-$B$ model retains mean SHD gains in all four conditions, including combined heterogeneity and correlation.

\begin{table}[htbp]
\centering
\small
\caption{Shared-$B$ inference under heterogeneous and correlated reports. Differences are HB minus no-KG; negative SHD differences favor HB.}
\label{tab:heterogeneous_reports}
\begin{tabular}{lrr}
\toprule
Error model & Asia $\Delta$SHD & Sachs $\Delta$SHD \\
\midrule
Homogeneous, independent & $-4.5$ & $-6.8$ \\
Heterogeneous, independent & $-4.3$ & $-5.1$ \\
Homogeneous, correlated & $-3.4$ & $-6.3$ \\
Heterogeneous, correlated & $-2.6$ & $-5.9$ \\
\bottomrule
\end{tabular}
\end{table}

The shared error matrix $B$ pools the two groups' error counts. The remaining SHD gains show that this approximation still permits useful report integration under the tested biases and dependence.

\subsection{Abstention and selective refusal}
\label{app:abstention}

We vary the abstention rate over $\{0,0.25,0.5,0.75,0.9,1\}$ at report accuracy $0.75$ on Asia and Sachs, with ten paired seeds. Independent thinning is compared with two informative missingness mechanisms at the same marginal abstention rates: incorrect reports abstain more often (selective refusal), or correct reports abstain more often (adverse refusal). Both use an odds ratio of four. Table~\ref{tab:abstention_levels} gives the paired SHD differences at each abstention level.

\begin{figure}[htbp]
\centering
\includegraphics[width=\linewidth]{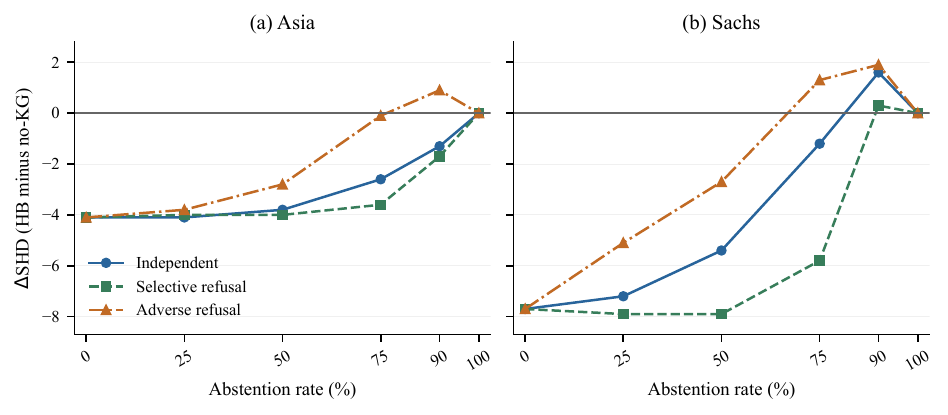}
\caption{Sensitivity to abstention under independent and informative refusal. Each point is the mean paired SHD difference over ten seeds. Negative values favor HB; zero is the no-KG reference. All three mechanisms recover no-KG at 100\% abstention.}
\label{fig:abstention_sensitivity}
\end{figure}

\begin{table}[htbp]
\centering
\small
\caption{Mean HB-minus-no-KG SHD under six abstention levels. Each entry gives Asia / Sachs. The curves are shown in Figure~\ref{fig:abstention_sensitivity}.}
\label{tab:abstention_levels}
\begin{tabular}{@{}lrrrrrr@{}}
\toprule
Mechanism & $0\%$ & $25\%$ & $50\%$ & $75\%$ & $90\%$ & $100\%$ \\
\midrule
Independent & $-4.1/-7.7$ & $-4.1/-7.2$ & $-3.8/-5.4$ & $-2.6/-1.2$ & $-1.3/+1.6$ & $0/0$ \\
Selective refusal & $-4.1/-7.7$ & $-4.0/-7.9$ & $-4.0/-7.9$ & $-3.6/-5.8$ & $-1.7/+.3$ & $0/0$ \\
Adverse refusal & $-4.1/-7.7$ & $-3.8/-5.1$ & $-2.8/-2.7$ & $-.1/+1.3$ & $+.9/+1.9$ & $0/0$ \\
\bottomrule
\end{tabular}
\end{table}

Selective refusal preserves gains over a wider range of coverage, whereas preferentially missing correct reports can reverse them. At $75\%$ abstention on Sachs, mean SHD differences are $-5.8$ under selective refusal and $+1.3$ under adverse refusal. At $100\%$, every run equals no-KG in SHD and E-SHD, as Proposition~\ref{prop:data_only_reduction} predicts. At partial coverage, surviving-report quality matters as well as report count. Abstentions add no report counts, while their rate remains a prior feature.

\subsection{Increasing the search budget}
\label{app:search_budget}

This comparison uses the five-network data and full HB fits from Appendix~\ref{app:reliability_pooling}: 20 data seeds, 1,000 observations per dataset, and four batches of five canonical-name reports. We fit no-KG once per data seed and the fixed channel once per data-seed--report-batch pair, at both 20 and 80 restarts. This gives 1,000 control fits. Their BDeu likelihood, base edge probability $.01$, maximum of four parents, search rules, and cap of 60 retained DAGs are held fixed. The fixed channel keeps the same report rewards with $w=.8$ and $\epsilon=.05$; no-KG has zero report reward. The HB references use warm strength $.5$, tolerance $.001$, and up to three updates after initialization. They use 60 total restarts in 137 fits and 80 in 263 fits.

Each control makes one search call at its specified budget. The 20-restart setting uses ten orders from the backend's randomized prior-based policy and ten unrestricted random permutations; the 80-restart setting uses forty of each. Order generation and local search share a random stream initialized by the data seed. The resulting graphs are scored and weighted under the same objective at both budgets. Table~\ref{tab:search_budget_main} gives all network means.

Paired comparisons first average four report batches within each data seed. Two-sided Wilcoxon tests use Holm correction across the ten comparisons between HB and the two 80-restart controls on five networks. Table~\ref{tab:search_budget_tests} gives paired mean differences and unadjusted 95\% bootstrap intervals from 10,000 resamples of the 20 seed-level pairs. Differences between each control's 80- and 20-restart results form a separate ten-test family. Increasing the fixed-channel budget lowers SHD on Insurance and Alarm by $4.36$ and $3.93$ ($p_{\mathrm{Holm}}=.00027$ and $.0056$). For no-KG, the corrected difference is significant on Child, with SHD reduced by $4.70$ ($p_{\mathrm{Holm}}=.027$).

\begin{table}[htbp]
\centering
\small
\caption{HB minus each 80-restart control in SHD. Brackets give unadjusted 95\% paired-bootstrap intervals; $p_{\mathrm{H}}$ adjusts the ten network--comparator Wilcoxon tests. Negative differences favor HB.}
\label{tab:search_budget_tests}
\begin{tabular}{lrrrr}
\toprule
 & \multicolumn{2}{c}{HB minus no-KG} & \multicolumn{2}{c}{HB minus fixed channel} \\
\cmidrule(lr){2-3}\cmidrule(lr){4-5}
Network & Difference [95\% CI] & $p_{\mathrm{H}}$ & Difference [95\% CI] & $p_{\mathrm{H}}$ \\
\midrule
Asia & $-1.33\;[-1.78,-.86]$ & $.0044$ & $-.75\;[-1.10,-.36]$ & $.0264$ \\
Sachs & $-3.96\;[-5.45,-2.51]$ & $.0039$ & $.56\;[.00,1.14]$ & $.0925$ \\
Child & $-4.74\;[-6.68,-2.84]$ & $.0054$ & $.64\;[.04,1.20]$ & $.0925$ \\
Insurance & $-5.49\;[-8.89,-2.16]$ & $.0280$ & $3.39\;[1.51,5.46]$ & $.0280$ \\
Alarm & $-16.14\;[-19.94,-11.89]$ & $.00019$ & $3.96\;[1.66,6.43]$ & $.0264$ \\
\bottomrule
\end{tabular}
\end{table}

\begin{table}[htbp]
\centering
\small
\caption{Secondary search-budget metrics, averaged equally over networks after averaging source batches within data seed. E-SHD uses ordered-adjacency error; AUROC gives half credit to tied scores.}
\label{tab:search_budget_secondary}
\begin{tabular}{lrrr}
\toprule
Method and restarts & E-SHD $\downarrow$ & AUROC $\uparrow$ & Brier $\downarrow$ \\
\midrule
No-KG, 20 & $24.1491$ & $.8965$ & $.06144$ \\
No-KG, 80 & $22.8228$ & $.9065$ & $.05088$ \\
Fixed channel, 20 & $16.4533$ & $.9269$ & $.03882$ \\
Fixed channel, 80 & $14.7927$ & $.9336$ & $.03552$ \\
HB, $\leq80$ total & $16.1730$ & $.9160$ & $.03642$ \\
\bottomrule
\end{tabular}
\end{table}

\section{Procedurally generated synthetic ER checks}
\label{app:continuous_synth}

\paragraph{Prior structure of the Harmonized-Prior baseline.}
The Harmonized-Prior~\citep{ban2025harmonized} (Algorithm~2, Eqs.~17--19) is a fixed-confidence path-existence soft prior: each LLM-asserted pair is read as a \emph{path-existence} claim and directed reachability in $G$ is rewarded via a shared confidence $c$. We sweep $c\in\{0.60,0.70,0.80,0.90,0.99,0.99999\}$ as in their Figure~6 and report the best setting ($c{=}0.99999$) in the main text and joint-panel rows. Even at its best sweep point, the Harmonized-Prior falls below the no-KG BGES-style baseline on the 11 LLM-asserted Sachs edges in the graph-only regime, and in the joint regimes it has nearly equal AUROC to DiBS on the linear SEM and lower AUROC on the nonlinear SEM. A wrong-direction pair claim can still be satisfied by a detouring chain of intermediate edges, and the reward then pulls those chains into $G$; the prior cannot adapt this tension on a per-pair basis.

The following ER studies examine graph-only and joint inference on procedurally generated structures. A graph-only spot check uses two $d=20$ families. The joint study uses five graph draws per SEM and dimension, five observational datasets per graph, and three report accuracies per dataset, giving 75 paired conditions. These generated structures avoid dependence on LLM benchmark familiarity. The separate accuracy and report-count studies use fixed reported confidence $.5$ and assess reliability against known generating accuracy (Section~\ref{sec:exp_mechanism}).

\paragraph{Graph-only spot check (ER2/ER4, $d{=}20$).}
We generate two $d{=}20$ ER DAG families, simulate linear-Gaussian SEM data, and construct controlled noisy KGs by retaining a subset of true edges while injecting spurious edges. We then compare data-only DiBS against HB-NoisyKG in Table~\ref{tab:continuous_synth_d20}. Across these two small-budget checks HB-NoisyKG remains competitive overall and is stronger on the denser \textsc{ER4-d20} family.
\begin{table}[h]
\caption{Small synthetic continuous graph-only check on two $d{=}20$ Erd\H{o}s--R\'enyi linear-Gaussian families with controlled noisy KGs. All values are mean $\pm$ std over 5 repeats.}
\label{tab:continuous_synth_d20}
\centering
\small
\begin{tabular}{l l c c c c}
\toprule
Graph family & Method & AUROC $\uparrow$ & E-SHD $\downarrow$ & SHD $\downarrow$ & $\bar{\rho}$ \\
\midrule
\textsc{ER2-d20} & DiBS & $0.956 \pm 0.032$ & $21.20 \pm 11.17$ & $21.2 \pm 11.2$ & -- \\
\textsc{ER2-d20} & HB-NoisyKG & $0.941 \pm 0.048$ & $20.00 \pm 15.28$ & $20.0 \pm 15.3$ & $0.400$ \\
\midrule
\textsc{ER4-d20} & DiBS & $0.834 \pm 0.071$ & $51.20 \pm 18.50$ & $51.2 \pm 18.5$ & -- \\
\textsc{ER4-d20} & HB-NoisyKG & $\mathbf{0.843 \pm 0.052}$ & $\mathbf{42.60 \pm 19.86}$ & $\mathbf{42.6 \pm 19.9}$ & $0.405$ \\
\bottomrule
\end{tabular}
\end{table}

\paragraph{Paired-test study ($d\in\{20,30\}$).}
The joint study uses ER graphs at $d\in\{20,30\}$, expected degree 2, $N=500$ observations per dataset, generating report accuracies $\alpha\in\{.60,.75,.90\}$, and $R=5$ rounds. HB and DiBS-joint share the observational datasets, likelihood, and DiBS backend~\citep{lorch2021dibs}. Each graph call uses 20 particles and 2,000 SVGD steps. HB uses three outer graph updates; no-KG uses one. HB uses base edge probability $.01$, whereas the no-KG configuration uses the backend default $0.5/(d-1)$. Thus total computation and the base prior differ as well as the knowledge term. We report the structural outcomes descriptively and use the matched studies in the main text for attribution to the report layer. Both linear-Gaussian and nonlinear additive-noise MLP SEMs use the same five-graph design; the nonlinear model follows \citet{lorch2021dibs}, Appendix~H.1 (one hidden layer of five ReLU units, $\mathcal{N}(0,1)$ weights, and noise variance $.1$). Table~\ref{tab:e2_synth} gives absolute outcomes; graph-level uncertainty is assessed below.

\begin{table}[tbp]
\centering
\small
\caption{Structural recovery in the synthetic ER comparison. Means cover five graphs, five data seeds per graph, and three report accuracies per dataset. Both metrics are lower-is-better. Repeated conditions are averaged within graphs for the analysis in Appendix~\ref{app:e2_graph_blocks}, with the base priors and total graph-update budgets specified below.}
\label{tab:e2_synth}
\begin{tabular}{lcrrrr}
\toprule
 & & \multicolumn{2}{c}{SHD $\downarrow$} & \multicolumn{2}{c}{E-SHD $\downarrow$} \\
\cmidrule(lr){3-4}\cmidrule(lr){5-6}
SEM & $d$ & DiBS & HB & DiBS & HB \\
\midrule
Linear & $20$ & $21.96$ & $20.57$ & $21.96$ & $20.57$ \\
Linear & $30$ & $48.59$ & $45.20$ & $48.59$ & $45.20$ \\
\midrule
Nonlinear & $20$ & $26.12$ & $21.41$ & $26.12$ & $21.42$ \\
Nonlinear & $30$ & $43.08$ & $41.00$ & $43.08$ & $40.99$ \\
\bottomrule
\end{tabular}
\end{table}

\subsection{Graph-level analysis and comparison methods}
\label{app:e2_graph_blocks}

The three report accuracies reuse each observational dataset. We average HB-minus-DiBS differences over accuracies within each data seed, then over seeds within each graph, assigning equal weight to the five graph draws. Table~\ref{tab:e2_graph_tests} reports exact two-sided sign-flip tests on structural effects, assuming independent symmetric graph-level effects under the null. Holm correction covers the four $d=20/30$ structural comparisons. With five graphs, the smallest attainable two-sided $p$ is $.0625$.

The comparisons below report structural outcomes; AUROC handling is specified in Appendix~\ref{app:implementation}. The Exact/Top-$K$ stability study separately uses tie-corrected edge scores. The ER base-prior and total-budget choices are given above.

\begin{table}[htbp]
\centering
\small
\caption{Graph-level E2 structural analysis. Differences are HB minus DiBS. Individual graph effects appear in Table~\ref{tab:e2_graph_effects}; $p$ is the exact sign-flip value and $p_H$ is adjusted across four structural tests.}
\label{tab:e2_graph_tests}
\begin{tabular}{lclrrr}
\toprule
SEM & $d$ & Metric & Mean difference & $p$ & $p_H$ \\
\midrule
Linear & 20 & E-SHD & $-1.39$ & $.375$ & $.500$ \\
Linear & 30 & E-SHD & $-3.39$ & $.0625$ & $.250$ \\
Nonlinear & 20 & E-SHD & $-4.70$ & $.0625$ & $.250$ \\
Nonlinear & 30 & E-SHD & $-2.09$ & $.250$ & $.500$ \\
\bottomrule
\end{tabular}
\end{table}

\begin{table}[htbp]
\centering
\small
\caption{Individual graph effects underlying Table~\ref{tab:e2_graph_tests}. Each graph effect averages five data seeds and three report accuracies.}
\label{tab:e2_graph_effects}
\begin{tabular}{lclrrrrr}
\toprule
SEM & $d$ & Metric & Graph 1 & Graph 2 & Graph 3 & Graph 4 & Graph 5 \\
\midrule
Linear & 20 & E-SHD & $2.53$ & $-4.53$ & $.73$ & $-2.33$ & $-3.33$ \\
Linear & 30 & E-SHD & $-1.20$ & $-4.53$ & $-2.67$ & $-1.60$ & $-6.93$ \\
Nonlinear & 20 & E-SHD & $-5.47$ & $-4.80$ & $-1.87$ & $-6.20$ & $-5.18$ \\
Nonlinear & 30 & E-SHD & $2.53$ & $-6.39$ & $-3.73$ & $-1.20$ & $-1.67$ \\
\bottomrule
\end{tabular}
\end{table}

The linear study also includes NOTEARS, GOLEM, and KG-SoftMAP (Table~\ref{tab:e2_linear_all_methods}). NOTEARS and GOLEM operate on continuous observations. KG-SoftMAP discretizes each variable at its median and performs greedy BDeu MAP search. NOTEARS and GOLEM achieve lower SHD on these linear generators than the tested HB and DiBS joint-inference configurations.

\begin{table}[htbp]
\centering
\small
\caption{Linear E2 structural comparisons across all evaluated methods. Each entry is mean SHD over 75 conditions based on 25 observational datasets and five graph draws. Method implementations are described in the text.}
\label{tab:e2_linear_all_methods}
\begin{tabular}{lcc}
\toprule
Method & $d=20$ & $d=30$ \\
\midrule
NOTEARS & $.60$ & $1.64$ \\
GOLEM & $3.68$ & $9.96$ \\
KG-SoftMAP (discretized BDeu) & $7.43$ & $14.80$ \\
HB-NoisyKG & $20.57$ & $45.20$ \\
DiBS-joint & $21.96$ & $48.59$ \\
\bottomrule
\end{tabular}
\end{table}

\paragraph{Reliability response in the nonlinear study.}
Table~\ref{tab:e2_synth_rho_calib} reports the mean learned reliability at each generating accuracy. It rises monotonically at both dimensions, approximately following $.71\alpha+.10$ at $d=20$ and $.74\alpha+.08$ at $d=30$. In this ER study, reported confidence equals $\alpha$, so both labels and confidence change across these conditions. The accuracy and report-count study in Appendix~\ref{app:accuracy_calibration} separately evaluates reliability with controlled reports. The concentration is initialized at $\tau=8$ and updated by empirical Bayes.

\begin{table}[h]
\caption{Learned reliability in the nonlinear E2 study. Each row averages five graphs and five data seeds. Report accuracy and reported confidence both equal $\alpha$ in this generator; Appendix~\ref{app:accuracy_calibration} supplies a separate controlled-report study.}
\label{tab:e2_synth_rho_calib}
\centering
\small
\begin{tabular}{l c c c}
\toprule
$d$ & $\alpha$ & $n$ & $\bar\rho$ (mean $\pm$ std) \\
\midrule
$20$ & $0.60$ & $25$ & $0.523 \pm 0.035$ \\
$20$ & $0.75$ & $25$ & $0.630 \pm 0.031$ \\
$20$ & $0.90$ & $25$ & $0.736 \pm 0.023$ \\
\midrule
$30$ & $0.60$ & $25$ & $0.529 \pm 0.010$ \\
$30$ & $0.75$ & $25$ & $0.636 \pm 0.015$ \\
$30$ & $0.90$ & $25$ & $0.750 \pm 0.018$ \\
\bottomrule
\end{tabular}
\end{table}

\paragraph{Per-$\alpha$ breakdown (linear, exploratory).}
At $d=20$, mean HB SHD is $19.36$, $20.96$, and $21.40$ for $\alpha=.60,.75,.90$. At $d=30$, it is $43.84$, $44.96$, and $46.80$. Reliability ordering and graph recovery capture different aspects of inference: the increase in learned reliability with accuracy is accompanied here by increasing mean SHD. Table~\ref{tab:e2_graph_tests} summarizes the repeated structural comparisons at the graph level.

\paragraph{Per-$\alpha$ breakdown (nonlinear).}
The mean nonlinear E-SHD difference favors HB at each tested accuracy, with reductions of $4.44$--$5.08$ at $d=20$ and $.80$--$2.76$ at $d=30$ (Table~\ref{tab:e2_synth_nonlinear_breakdown}). For each accuracy, the table averages seeds within each graph before computing the exact graph-level sign-flip value.

\begin{table}[h]
\caption{Nonlinear E2 results by report accuracy. Differences are HB minus DiBS. Each row includes 25 datasets from five graphs; $p$-values are raw two-sided graph-level sign-flip values after averaging the five seeds within each graph. $\bar\rho$ is mean learned reliability.}
\label{tab:e2_synth_nonlinear_breakdown}
\centering
\small
\begin{tabular}{c c c c c c}
\toprule
$d$ & $\alpha$ & $n$ & $\Delta$E-SHD $\downarrow$ & $p_{\Delta\mathrm{E-SHD}}$ & $\bar\rho$ \\
\midrule
$20$ & $0.60$ & $25$ & $-4.59$ & $.0625$ & $0.523$ \\
$20$ & $0.75$ & $25$ & $-4.44$ & $.0625$ & $0.630$ \\
$20$ & $0.90$ & $25$ & $-5.08$ & $.0625$ & $0.736$ \\
\midrule
$30$ & $0.60$ & $25$ & $-0.80$ & $.750$ & $0.529$ \\
$30$ & $0.75$ & $25$ & $-2.72$ & $.0625$ & $0.636$ \\
$30$ & $0.90$ & $25$ & $-2.76$ & $.125$ & $0.750$ \\
\bottomrule
\end{tabular}
\end{table}

\paragraph{Dimension extension to $d{=}50$.}
The linear $d=50$ extension uses five graphs, five data seeds per graph, and the same three report accuracies and joint backend. Mean E-SHD decreases by $3.57$ relative to DiBS, with graph-level sign-flip $p=.125$. Table~\ref{tab:e2_synth_d50} gives the accuracy-specific results. Mean reliability increases from approximately $.58$ to $.70$ and $.78$. Per-run HB wall time is approximately 15--17 minutes on A10 GPUs.

\begin{table}[h]
\caption{Linear $d=50$ extension. Differences are HB minus DiBS. There are 25 datasets per accuracy from five graphs. Raw $p$-values use the graph-level two-sided sign-flip test; the pooled row first averages accuracies and seeds within each graph.}
\label{tab:e2_synth_d50}
\centering
\small
\begin{tabular}{c c c c c c}
\toprule
$d$ & $\alpha$ & $n$ & $\Delta$E-SHD $\downarrow$ & $p_{\Delta\mathrm{E-SHD}}$ & $\bar\rho$ \\
\midrule
$50$ & $0.60$ & $25$ & $-3.16$ & $.1875$ & $0.576$ \\
$50$ & $0.75$ & $25$ & $-1.04$ & $.5625$ & $0.704$ \\
$50$ & $0.90$ & $25$ & $-6.51$ & $.250$ & $0.781$ \\
\midrule
\multicolumn{2}{l}{Pooled (75 conditions)} & $75$ & $-3.57$ & $.125$ & --- \\
\bottomrule
\end{tabular}
\end{table}

\paragraph{Computation.}
The nonlinear study uses two Alibaba Cloud \texttt{ecs.gn7i-4x.16xlarge} hosts, each with four A10 24\,GB GPUs, 48 vCPU, and 256\,GB RAM, running CUDA 12.8 and JAX 0.6.2. One worker uses each GPU. With twenty particles and 2,000 steps per update, HB takes $18$--$19$ minutes and DiBS $4$--$5$ minutes per condition, about $24$ minutes paired. The 300 method runs use approximately 56 GPU-hours. The linear $d=50$ extension uses one \texttt{ecs.gn7i-8x.32xlarge} host with eight A10 GPUs and eight workers. Data are indexed by graph and seed, reports also by accuracy, with seed base 7000.

\subsection{Joint inference with a larger computational budget}
\label{app:joint_budget}

The budget comparison uses one $d=20$ graph at $\alpha=.75$ and five paired seeds. The standard twenty-particle, 2,000-step results and the forty-particle, 4,000-step results share the graph, observations, reports, and likelihood. Table~\ref{tab:joint_budget_comparison} reports the two per-call budgets; the HB and no-KG update counts follow the study protocol above.

\begin{table}[htbp]
\centering
\small
\caption{Joint inference at two per-call SVGD budgets. Each result is mean E-SHD over five seeds. The last column gives HB minus DiBS differences. The two budgets use the same selected graph, observations, reports, and seeds.}
\label{tab:joint_budget_comparison}
\begin{tabular}{llccc}
\toprule
Family & Particles/steps & DiBS & HB & $\Delta$E-SHD \\
\midrule
Linear & $20/2{,}000$ & $20.0$ & $26.0$ & $+6.0$ \\
Linear & $40/4{,}000$ & $18.4$ & $15.0$ & $-3.4$ \\
Nonlinear & $20/2{,}000$ & $22.6$ & $18.4$ & $-4.2$ \\
Nonlinear & $40/4{,}000$ & $19.6$ & $14.6$ & $-5.0$ \\
\bottomrule
\end{tabular}
\end{table}

The larger budget changes the linear HB-minus-DiBS mean E-SHD difference from $+6.0$ to $-3.4$ and increases the nonlinear advantage from $4.2$ to $5.0$. Within HB, mean E-SHD falls from $26.0$ to $15.0$ on the linear example and from $18.4$ to $14.6$ on the nonlinear example. These single-graph comparisons show sensitivity to inference budget. The multi-graph ER comparison is described separately above, with its distinct priors, total budgets, and graph-level uncertainty.

\subsection{Accurate reports in joint inference}
\label{app:joint_perfect_kg}

With fully accurate reports and the stronger joint budget, we compare no-KG, HB, and the frozen channel at $w=.99$ over ten paired runs per family (Table~\ref{tab:joint_accurate_reports}). HB's E-SHD differences are $+.5$ and $-.3$ for the linear and nonlinear examples. The frozen channel reduces E-SHD by $3.9$ and $1.5$. These cases evaluate accurate-knowledge utilization in joint inference.

\begin{table}[htbp]
\centering
\small
\caption{Joint inference with accurate reports. Entries are mean E-SHD over ten paired runs. All methods use 40 particles and 4,000 steps per graph call.}
\label{tab:joint_accurate_reports}
\begin{tabular}{lccc}
\toprule
Family & No-KG & HB & Fixed channel $w=.99$ \\
\midrule
Linear & $20.9$ & $21.4$ & $17.0$ \\
Nonlinear & $22.7$ & $22.4$ & $21.2$ \\
\bottomrule
\end{tabular}
\end{table}

\section{Empirical recovery of \texorpdfstring{$\rho$ and $B$}{rho and B} under known \texorpdfstring{$G$}{G}}
\label{app:rho_b_recovery}

With the graph held at ground truth, we evaluate recovery of pair reliability and the shared error matrix as the number of reports increases. This simulation exercises the observation layer independently of graph estimation.

\paragraph{Setup.}
We fix an Erd\H{o}s--R\'enyi DAG $G^\star$ over $d{=}15$ nodes with expected degree $2$, yielding $105$ unordered pairs and a deterministic latent state $Z^\star_{ij}\in\{\textsc{none},\textsc{fwd},\textsc{rev}\}$ for each pair. True reliabilities $\rho^\star_{ij}$ are drawn once from a two-component Beta mixture ($0.5\,\mathrm{Beta}(5,2) + 0.5\,\mathrm{Beta}(2,5)$) so the population spans both trustworthy and unreliable pairs. We study two ground-truth confusion matrices $B^\star$: a symmetric off-diagonal row distribution (each row uniform on the two non-matching states) and an asymmetric one (rows $[0,0.7,0.3]$, $[0.6,0,0.4]$, $[0.7,0.3,0]$). For each pair we draw $R$ observations as $Y^{(r)}_{ij} = Z^\star_{ij}$ with probability $\rho^\star_{ij}$ and $Y^{(r)}_{ij}\sim B^\star[Z^\star_{ij},\cdot]$ otherwise, with $R\in\{3,5,10\}$ and $50$ Monte Carlo seeds per $(R, B^\star)$ setting ($300$ total runs). Inference uses the paper's closed-form updates from Section~\ref{sec:reliability_posterior} in the $G$-known mode: the graph posterior is replaced by the one-hot indicator $q_{ij}(s)=\mathbb{1}[s=Z^\star_{ij}]$, so the reliability-posterior update and the confusion-posterior update reduce to the pair-wise Beta and global Dirichlet closed forms. We use the paper's default hyperparameters ($\tau{=}8$, symmetric $\mathrm{Dirichlet}(1)$ off-diagonal prior on $B$).

\paragraph{Results.}
Table~\ref{tab:rho_b_recovery} reports recovery error over fifty seeds. Reliability MAE and $B$ Frobenius error decrease with $R$ under both error matrices. Figure~\ref{fig:rho_b_recovery} shows their 95\% confidence bands. Reliability errors are similar for the symmetric and asymmetric matrices. Holding the pair states known makes match counts available for the reliability update and off-diagonal counts available for the $B$ update. The decreasing errors are consistent with the conditional concentration in Proposition~\ref{prop:reliability_concentration}.

\begin{table}[h]
\caption{$G$-known recovery of $\rho$ and $B$ as a function of round budget $R$. All values are mean $\pm$ std over $50$ Monte Carlo seeds per $(R, B^\star)$ setting. The symmetric and asymmetric rows differ only in $B^\star$; the ground-truth DAG, node labels, and $\rho^\star$ population are identical across rows.}
\label{tab:rho_b_recovery}
\centering
\small
\begin{tabular}{l c c c}
\toprule
$B^\star$ structure & $R$ & $\mathrm{MAE}(\hat{\rho}-\rho^\star)$ $\downarrow$ & $\|\hat{B}-B^\star\|_F$ $\downarrow$ \\
\midrule
symmetric   & $3$  & $0.192 \pm 0.006$ & $0.280 \pm 0.121$ \\
symmetric   & $5$  & $0.167 \pm 0.006$ & $0.196 \pm 0.114$ \\
symmetric   & $10$ & $\mathbf{0.128 \pm 0.007}$ & $\mathbf{0.173 \pm 0.091}$ \\
\midrule
asymmetric & $3$  & $0.194 \pm 0.006$ & $0.249 \pm 0.137$ \\
asymmetric & $5$  & $0.168 \pm 0.007$ & $0.216 \pm 0.111$ \\
asymmetric & $10$ & $\mathbf{0.127 \pm 0.006}$ & $\mathbf{0.163 \pm 0.090}$ \\
\bottomrule
\end{tabular}
\end{table}

\begin{figure}[h]
\centering
\begin{minipage}{0.48\textwidth}
\centering
\includegraphics[width=\linewidth]{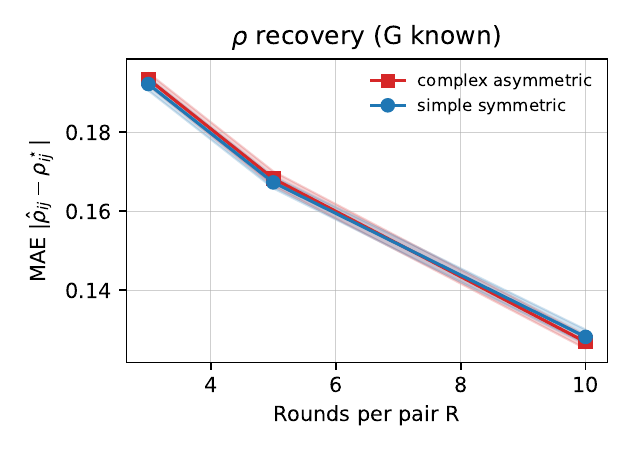}
\end{minipage}\hfill
\begin{minipage}{0.48\textwidth}
\centering
\includegraphics[width=\linewidth]{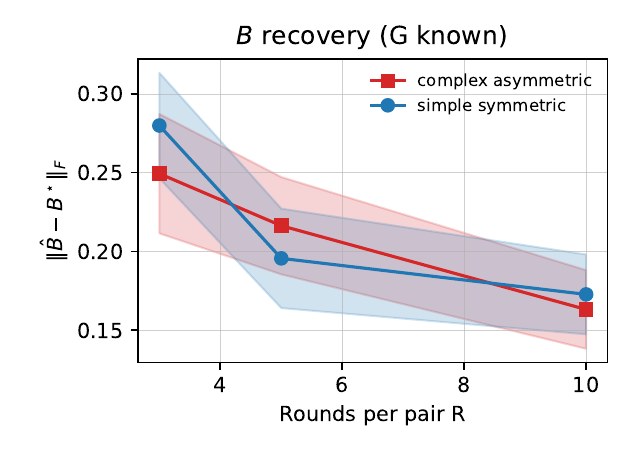}
\end{minipage}
\caption{$G$-known posterior recovery curves: $\mathrm{MAE}(\hat{\rho}-\rho^\star)$ (left) and $\|\hat{B}-B^\star\|_F$ (right) as a function of round budget $R$. Shaded bands are $95\%$ CI over $50$ Monte Carlo seeds per $(R, B^\star)$ setting.}
\label{fig:rho_b_recovery}
\end{figure}

\paragraph{Scope.}
This experiment evaluates recovery of pair reliability and shared error structure with the graph held known. It provides an empirical reference for Proposition~\ref{prop:reliability_concentration}. With uncertain graphs, the conditional finite-sample bound in Appendix~\ref{app:proofs} additionally accounts for pair-state marginal error.

\section{Hyperparameter sensitivity}
\label{app:sensitivity}

Table~\ref{tab:sensitivity} reports HB-NoisyKG sensitivity over a $3\times3\times3$ grid: $\beta_{\mathrm{var}}\in\{1,4,16\}$, $\tau_{\mathrm{shape}}\in\{1,2,4\}$, and $\lambda_{\mathrm{warm}}\in\{0.1,0.5,0.9\}$. Asia and Sachs each use ten seeds per configuration (540 total runs), with 1,000 freshly sampled observations and synthetic reports generated at retention $.7$ and spurious rate $.2$. The first two parameters specify hyperpriors for the estimated $(\beta,\tau)$; the third controls the one-time warm initialization described in Appendix~\ref{app:implementation}.

\begin{table}[h]
\caption{Hyperparameter sensitivity. Each row averages ten seeds. Most variation is associated with the warm strength $\lambda_{\mathrm{warm}}$; at a fixed strength, changes in the hyperprior parameters have smaller effects.}
\label{tab:sensitivity}
\centering
\small
\begin{tabular}{c c c c c}
\toprule
$\beta_{\mathrm{var}}$ & $\tau_{\mathrm{shape}}$ & $\lambda_{\mathrm{warm}}$ & Asia SHD $\downarrow$ & Sachs SHD $\downarrow$ \\
\midrule
1 & 1 & 0.1 & 6.3 & 11.0 \\
1 & 1 & 0.5 & 9.2 & 16.6 \\
1 & 1 & 0.9 & 10.9 & 17.3 \\
1 & 2 & 0.1 & 6.7 & 11.2 \\
1 & 2 & 0.5 & 9.2 & 16.6 \\
1 & 2 & 0.9 & 10.9 & 16.7 \\
1 & 4 & 0.1 & 7.0 & 11.4 \\
1 & 4 & 0.5 & 9.2 & 16.6 \\
1 & 4 & 0.9 & 11.3 & 16.7 \\
\midrule
4 & 1 & 0.1 & 6.3 & 11.0 \\
4 & 1 & 0.5 & 9.2 & 16.6 \\
4 & 1 & 0.9 & 10.9 & 17.3 \\
4 & 2 & 0.1 & 6.7 & 11.2 \\
4 & 2 & 0.5 & 9.2 & 16.6 \\
4 & 2 & 0.9 & 10.9 & 16.7 \\
4 & 4 & 0.1 & 7.0 & 11.9 \\
4 & 4 & 0.5 & 9.2 & 16.6 \\
4 & 4 & 0.9 & 11.3 & 16.7 \\
\midrule
16 & 1 & 0.1 & 6.3 & 11.0 \\
16 & 1 & 0.5 & 9.2 & 16.6 \\
16 & 1 & 0.9 & 10.8 & 17.3 \\
16 & 2 & 0.1 & 6.7 & 11.0 \\
16 & 2 & 0.5 & 9.2 & 16.6 \\
16 & 2 & 0.9 & 10.9 & 16.7 \\
16 & 4 & 0.1 & 7.0 & 11.6 \\
16 & 4 & 0.5 & 9.0 & 16.6 \\
16 & 4 & 0.9 & 11.0 & 16.3 \\
\bottomrule
\end{tabular}
\end{table}

\paragraph{Interpretation.}
Warm strength accounts for most variation: SHD ranges from $6.3$ to $11.3$ on Asia and from $11.0$ to $17.3$ on Sachs. Holding it fixed, variation over $\beta_{\mathrm{var}}$ and $\tau_{\mathrm{shape}}$ gives spreads of at most $.7$ and $1.0$. The warm parameter scales the initial pair-state blend described in Appendix~\ref{app:implementation}.

\paragraph{Warm strength across report accuracies.}
The additional factorial study crosses $\lambda_{\mathrm{warm}}\in\{.1,.5,.9\}$ with seven report accuracies, two networks, and ten seeds. Reliability updates respond to report quality across all tested strengths. At $\alpha=0$, mean reliability falls from $.373$--$.506$ after the first E-step to $.260$--$.326$ at the final update; at $\alpha=1$, it rises from $.728$--$.774$ to $.835$--$.872$.

At fixed accuracy, changing warm strength produces median final-$\bar\rho$ spreads of $.009$ and $.011$ on Asia and Sachs. Changing accuracy produces spreads of $.558$ and $.576$, corresponding to $59.5$ and $52.5$ times the warm-strength variation. Reliability ordering is therefore consistent across the tested initializations. Warm strength still affects graph recovery: its maximum SHD regret relative to the best tested strength is $2.2$, $1.6$, and $4.3$ for $.1$, $.5$, and $.9$. Thus $.5$ is an empirical compromise over this synthetic accuracy grid. Table~\ref{tab:sensitivity} uses a different synthetic-report mechanism with retention $.7$ and spurious rate $.2$, and favors $.1$ on both networks. The benchmark-suite configuration uses strength $1$, which is outside these grids; the grids describe sensitivity over the tested strengths.

\begin{table}[htbp]
\centering
\small
\caption{Accuracy and warm-strength study ($N=1000$, $R=5$). Each entry averages ten paired seeds. HB columns are indexed by $\lambda_{\mathrm{warm}}$; no-KG and fixed-channel SHD provide reference comparisons. The fixed channel uses $w=.8$, $\epsilon=.05$. Observations are held fixed across accuracies, so no-KG values repeat across rows.}
\label{tab:warm_accuracy_grid}
\setlength{\tabcolsep}{4pt}
\begin{tabular}{lcrrrrrrrr}
\toprule
 & & \multicolumn{3}{c}{Final $\bar\rho$ (HB)} & \multicolumn{5}{c}{SHD $\downarrow$} \\
\cmidrule(lr){3-5}\cmidrule(lr){6-10}
Network & $\alpha$ & $.1$ & $.5$ & $.9$ & No-KG & Fixed channel & HB $.1$ & HB $.5$ & HB $.9$ \\
\midrule
Asia & $0$ & $.260$ & $.279$ & $.326$ & $5.5$ & $21.3$ & $8.4$ & $10.0$ & $12.7$ \\
 & $1/3$ & $.442$ & $.446$ & $.449$ & $5.5$ & $7.3$ & $4.1$ & $4.4$ & $4.1$ \\
 & $.50$ & $.545$ & $.548$ & $.550$ & $5.5$ & $4.1$ & $3.0$ & $3.0$ & $3.9$ \\
 & $.60$ & $.614$ & $.614$ & $.621$ & $5.5$ & $.5$ & $1.7$ & $1.2$ & $.8$ \\
 & $.75$ & $.701$ & $.704$ & $.711$ & $5.5$ & $.9$ & $1.5$ & $1.4$ & $1.1$ \\
 & $.90$ & $.774$ & $.779$ & $.785$ & $5.5$ & $.0$ & $1.4$ & $.8$ & $.4$ \\
 & $1$ & $.846$ & $.837$ & $.835$ & $5.5$ & $.0$ & $1.2$ & $.8$ & $.0$ \\
\midrule
Sachs & $0$ & $.263$ & $.278$ & $.307$ & $9.1$ & $19.8$ & $19.6$ & $19.5$ & $19.3$ \\
 & $1/3$ & $.434$ & $.443$ & $.445$ & $9.1$ & $14.2$ & $11.4$ & $11.4$ & $12.2$ \\
 & $.50$ & $.541$ & $.546$ & $.550$ & $9.1$ & $6.4$ & $8.9$ & $7.1$ & $6.7$ \\
 & $.60$ & $.613$ & $.616$ & $.617$ & $9.1$ & $4.4$ & $5.8$ & $5.8$ & $5.6$ \\
 & $.75$ & $.727$ & $.727$ & $.726$ & $9.1$ & $1.4$ & $2.7$ & $1.6$ & $2.0$ \\
 & $.90$ & $.811$ & $.804$ & $.799$ & $9.1$ & $1.0$ & $1.8$ & $1.3$ & $1.3$ \\
 & $1$ & $.872$ & $.854$ & $.840$ & $9.1$ & $.5$ & $1.7$ & $.8$ & $.8$ \\
\bottomrule
\end{tabular}
\end{table}

Table~\ref{tab:warm_accuracy_grid} gives the full synthetic accuracy-by-strength comparison. Reliability increases with accuracy at every warm strength, while the SHD differences among strengths depend on the network and accuracy level. The fixed-channel control and HB share the generated reports in each network--accuracy--seed condition. For the comparison at $\lambda_{\mathrm{warm}}=.5$, two-sided paired signed-rank tests use exact sign permutations with midranks for ties, followed by Holm correction across the 14 network--accuracy comparisons. At $\alpha=0$ on Asia, $p=.00195$ and $p_{\mathrm{Holm}}=.027$.

At $\alpha=0$, HB and the fixed-channel control give SHD $10.0$ and $21.3$ on Asia, and $19.5$ and $19.8$ on Sachs. Both remain above no-KG, with SHD $5.5$ and $9.1$, respectively. At $\alpha\geq.6$, the fixed channel has $.2$--$1.4$ lower mean SHD, but none of these differences is significant after correction. These comparisons quantify the dependence on source quality without equating improvement over the fixed channel with improvement over no-KG.

\paragraph{Data-adaptive selection.}
In the synthetic report setting, we compare fixed $.5$ with a rule based on conflict between the data-only posterior and the reports, and with two-fold posterior-predictive selection over $\{.1,.5,.9\}$. Table~\ref{tab:adaptive_warm_comparison} reports differences relative to fixed $.5$. Neither rule consistently improves performance in these comparisons.

\begin{table}[htbp]
\centering
\small
\caption{Warm selection relative to fixed $\lambda_{\mathrm{warm}}=.5$. Entries are mean SHD differences; negative values favor the selector.}
\label{tab:adaptive_warm_comparison}
\begin{tabular}{lrr}
\toprule
Rule & Asia $\Delta$SHD & Sachs $\Delta$SHD \\
\midrule
Data/report conflict & $-.03$ & $+.83$ \\
Posterior predictive & $+.03$ & $+.14$ \\
\bottomrule
\end{tabular}
\end{table}

\clearpage

\section{Computational cost and inference stability}
\label{app:computational_stability}

\subsection{Runtime and larger graphs}

The observation update costs $O(d^2R)$, and graph calls dominate runtime. Figure~\ref{fig:runtime_scaling} gives no-KG/HB times of $0.14/0.34$, $0.39/1.17$, $3.49/8.80$, $15.51/48.00$, and $55.10/253.49$ seconds on Asia, Sachs, Child, Insurance, and Alarm. Log-log slopes are $3.94$ and $4.22$ ($R^2>.98$), summarizing these five benchmark measurements.

\begin{figure}[htbp]
\centering
\includegraphics[width=.9\linewidth]{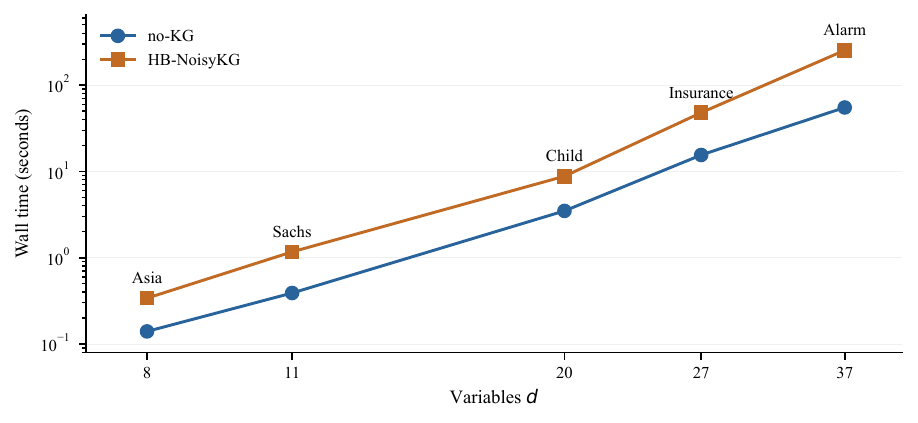}
\caption{Empirical no-KG and HB runtime on the five discrete benchmarks. Axes are logarithmic, and each point is labeled by its network. The curves use the same graph-search protocol within each comparison.}
\label{fig:runtime_scaling}
\end{figure}

The sparse binary $d=100/125$ experiments use $N=1000$, $\alpha=.75$, $R=5$, twenty restarts per graph call, and up to three HB updates over ten paired seeds. Table~\ref{tab:large_discrete_main} reports the means and computational cost. SHD reductions are $47.9$ and $55.4$ (unadjusted two-sided Wilcoxon $p=.002$), with 95\% paired-mean intervals $[-61.3,-34.5]$ and $[-60.7,-50.1]$. The intervals use $\bar d\pm t_{.975,9}s_d/\sqrt{10}$ for the HB-minus-no-KG differences.

\subsection{Stability under weak data and inaccurate reports}

The stability study extends inference to fifty updates over 5,508 settings (Table~\ref{tab:inference_stability}). Exact four-node inference uses three motifs, $N\in\{20,100\}$, and ten seeds. Top-$K$ uses Asia and Sachs at $N\in\{50,1000\}$ with ten seeds. DiBS-SVGD uses a linear Gaussian eight-node ER graph at $N\in\{50,500\}$ with three seeds. Each backend crosses accuracies $\{0,1/3,.75\}$, warm strengths $\{.1,.5,.9\}$, and independent homogeneous or heterogeneous correlated reports. Exact and Top-$K$ vary $R\in\{2,5,20\}$; DiBS uses five reports.

\paragraph{Failure criteria.}
We distinguish five outcomes. A \emph{numerical failure} is any non-finite inference quantity. \emph{Persistent period-two behavior} requires two consecutive updates whose joint state is within $10^{-3}$ of the state two iterations earlier but at least $10^{-2}$ from the immediately preceding state. \emph{Tolerance attainment} requires three consecutive joint-state changes of at most $10^{-3}$ within the 50-update budget. A \emph{boundary flag} records at least 90\% of pair reliabilities or states within $.01$ of a boundary, or a near-empty/near-maximal expected graph; a correctly concentrated posterior can trigger such a flag. Finally, \emph{statistical harm} is defined relative to paired no-KG by $\Delta\mathrm{SHD}\geq\max(5,0.5\,\mathrm{SHD}_{\mathrm{no\text{-}KG}})$ or $\Delta\mathrm{AUROC}\leq-0.10$. The primary dynamical endpoint is numerical failure or persistent period-two behavior. Non-stabilization, boundary concentration, and statistical harm are reported separately.

\begin{table}[htbp]
\centering
\small
\caption{Inference dynamics over fifty updates. Non-finite values and period-two events measure numerical failure and oscillation. Boundary flags measure concentration. Joint tolerance records attainment of the strict joint-state criterion and is sensitive to Monte Carlo variation in DiBS-SVGD.}
\label{tab:inference_stability}
\begin{tabular}{lrrrrr}
\toprule
Backend & Cells & Non-finite & Period-two & Joint tolerance & Boundary flags \\
\midrule
Exact posterior & 3,240 & 0 & 0 & $96.6\%$ & 2,143 \\
Top-$K$ & 2,160 & 0 & 2 & $81.8\%$ & 2,140 \\
DiBS-SVGD & 108 & 0 & 0 & $9/108$ & 108 \\
\bottomrule
\end{tabular}
\end{table}

\begin{table}[htbp]
\centering
\small
\caption{Performance-loss flags for Exact and Top-$K$ inference by report accuracy. ``Either'' counts the union of the unchanged SHD and AUROC criteria. The final columns give mean HB-minus-no-KG AUROC and Brier differences. AUROC uses standard handling of tied scores. Counts refer to this stress-test grid.}
\label{tab:stability_statistical_harm}
\begin{tabular}{@{}lcrrrrrr@{}}
\toprule
Backend & $\alpha$ & Cells & Either & SHD & AUROC & $\Delta$AUROC & $\Delta$Brier \\
\midrule
Exact & $0$ & 1,080 & 610 & 12 & 610 & $-.1499$ & $+.1027$ \\
 & $1/3$ & 1,080 & 561 & 0 & 561 & $-.1239$ & $+.0335$ \\
 & $.75$ & 1,080 & 485 & 0 & 485 & $-.0840$ & $-.0200$ \\
\midrule
Top-$K$ & $0$ & 720 & 474 & 289 & 457 & $-.1468$ & $+.0852$ \\
 & $1/3$ & 720 & 323 & 153 & 275 & $-.0743$ & $+.0361$ \\
 & $.75$ & 720 & 90 & 8 & 89 & $+.0084$ & $-.0210$ \\
\bottomrule
\end{tabular}
\end{table}

Table~\ref{tab:stability_statistical_harm} reports performance-loss flags for $1656/3240$ Exact and $887/2160$ Top-$K$ runs; approximately $77\%$ occur at $\alpha\leq1/3$. At $.75$, both backends improve mean Brier score, but Exact AUROC decreases. Probability error and edge ranking therefore need separate evaluation. The table uses tie-aware AUROC from the baseline and final edge marginals. DiBS is included in the numerical, cycle, and tolerance analysis; performance-loss counts are reported only for Exact and Top-$K$, whose baseline and final probability arrays are available in comparable form.

Residual damping reduces period-two events from two to one and slows tolerance attainment for Exact and Top-$K$, giving a trade-off for an optional safeguard. Practical monitoring uses an iteration cap and checks for non-finite values and persistent cycles, with data-only inference available as a fallback. SHD and AUROC flags evaluate recovery against known ground truth.

\clearpage
\section{Observation models for other graph classes}
\label{app:graph_classes}

\paragraph{Common observation protocol.}
We index the four variables by $0,\ldots,3$ and generate five independent reports for each of the six unordered pairs. A report equals the target state with probability $\alpha$ and otherwise selects uniformly among the other states; all reports are observed. The finite-candidate HB fits use $\rho_{ij}\sim\operatorname{Beta}(1,1)$ and independent $\operatorname{Dirichlet}(\mathbf{1})$ priors on the off-diagonal entries of each row of the shared error matrix $B$, whose diagonal is zero. They alternate the reliability and confusion updates with normalization over the full candidate set, using the data score plus the expected report log likelihood. The warm strength is $.5$, with at most 20 updates and a stopping tolerance of $10^{-6}$ on the largest pair-marginal change. No-KG omits report terms; the fixed comparator uses $\rho=.8$ and uniform errors over incorrect states.

\subsection{Markov equivalence and CPDAGs}

Observational data generally leave multiple Markov-equivalent DAGs plausible. Here we use reports about class-level pair states, including undirected edges, to update uncertainty over equivalence classes. To represent equivalence classes directly, we extend the pair states to include undirected edges and use a compatible CPDAG posterior.

\paragraph{CPDAG support and score.}
We enumerate all four-node DAGs and group them by skeleton and unshielded colliders, obtaining 185 Markov equivalence classes. For a class $C$, the pair states are $\{\text{none},i\to j,j\to i,i\mathbin{-}j\}$: an edge is directed only when every member DAG agrees on its orientation, and is undirected otherwise. Reports target these class states, with a $4\times4$ error matrix. Writing $\ell_{\mathrm{BDeu}}(G)$ for the binary BDeu log marginal likelihood with equivalent sample size one, the class score is
\[
 s_C=\log\!\left[\frac{1}{|C|}\sum_{G\in C}
 \exp\!\left\{\ell_{\mathrm{BDeu}}(G)
 +|E(G)|\log(.01)+(12-|E(G)|)\log(.99)\right\}\right].
\]
Class weights average member-DAG data-and-prior weights. The class prior is proportional to $.01^{e_C}.99^{12-e_C}$, with common edge count $e_C$ and uniform conditional weighting of members. Pair-state error averages $1-q(Z_{ij}=Z_{ij}^{\mathrm{true}})$ over the six pairs.

\paragraph{CPDAG data and reports.}
The generating motifs are $0\to1\to2\to3$, $0\to2\leftarrow1$ with $2\to3$, and $0\to1\leftarrow2$ with $2\to3$. Binary variables are sampled in topological order with conditional log odds $-.2+1.8\sum_r(-1)^{r+j}(X_{p_r}-.5)$ for node $j$, where $p_r$ lists its parents in increasing order starting at $r=0$. We use $N\in\{500,1000\}$, seeds $0,\ldots,19$, and $\alpha\in\{.60,.75,.90\}$. No-KG, fixed-$\rho$, and HB give $3\times2\times20\times3\times3=1{,}080$ method cells.

Table~\ref{tab:cpdag_observation_results} reports posterior mass on the true equivalence class and expected pair-state error. HB increases class mass from $.889$ to $.933$, $.925$, and $.967$ at the three accuracies, and reduces pair-state error from $.045$ to $.028$, $.031$, and $.015$.

\begin{table}[htbp]
\centering
\small
\caption{Controlled CPDAG study. Higher posterior class mass and lower expected pair-state error indicate better recovery. Values average the tested motifs, sample sizes, and seeds.}
\label{tab:cpdag_observation_results}
\begin{tabular}{lrr}
\toprule
Method & True-class mass $\uparrow$ & Pair-state error $\downarrow$ \\
\midrule
No-KG & $.889$ & $.045$ \\
HB, $\alpha=.60$ & $.933$ & $.028$ \\
HB, $\alpha=.75$ & $.925$ & $.031$ \\
HB, $\alpha=.90$ & $.967$ & $.015$ \\
\bottomrule
\end{tabular}
\end{table}

A fixed-$\rho=.8$ comparator attains true-class mass close to one in this small setting with accurate reports. The separate DAG accuracy sweep in Table~\ref{tab:warm_accuracy_grid} examines how fixed and learned trust perform as report accuracy varies.

\subsection{Hidden confounding and cyclic graphs}

\paragraph{Gaussian candidate scores.}
Both Gaussian studies use $N=1000$, seeds $0,\ldots,19$, and centered, standardized observations. With $W_{ij}$ denoting the coefficient of $i\to j$, their SEM likelihood has covariance $\Sigma=(I-W)^{-\top}\Omega(I-W)^{-1}$, where $\Omega$ is the error covariance matrix and $W_{ij}=0$ outside the candidate's directed edges. The data score is $s_G=\ell_G(\widehat W,\widehat\Omega)-\frac{k_G}{2}\log N$, where $\ell_G$ is the Gaussian log likelihood and $k_G=|E_{\to}(G)|+4+|E_{\leftrightarrow}(G)|$. Candidate priors are uniform, so no-KG weights are proportional to $\exp(s_G)$, a BIC approximation. Coefficients and error variances are fitted separately for each candidate using two L-BFGS-B starts with at most 250 iterations each, $W_{ij}\in[-1.8,1.8]$, and $\log\Omega_{ii}\in[-6,5]$. For an allowed bidirected pair, $\Omega_{ij}=.9\tanh(\eta_{ij})\sqrt{\Omega_{ii}\Omega_{jj}}$ with $|\eta_{ij}|\leq2.5$; all other off-diagonal entries are zero. Cyclic fits additionally require the spectral radius of $W$ to be below $.995$.

\paragraph{Hidden confounding.}
Before standardization, data follow $X_0=sU+\epsilon_0$, $X_1=sU+\epsilon_1$, $X_2=.8X_0+.6X_1+\epsilon_2$, and $X_3=.7X_2+\epsilon_3$, with independent standard Gaussian $U,\epsilon_0,\ldots,\epsilon_3$ and $s\in\{.4,.8,1.2\}$. Thus $s$ is the latent loading, giving error covariance $s^2$ between nodes 0 and 1 before standardization. The true ADMG has $0\leftrightarrow1$ and directed edges $0\to2,1\to2,2\to3$. Its 13 candidates comprise this graph and separate modifications: removing the bidirected edge; replacing it by either directed orientation; deleting or reversing one of the three directed edges while retaining $0\leftrightarrow1$; or relocating the bidirected edge to $\{0,3\}$, $\{1,3\}$, or $\{0,2\}$, removing any directed edge on that pair. Each pair has at most one relation. The states are $\{\text{none},i\to j,j\to i,i\leftrightarrow j\}$, with a $4\times4$ error matrix and $\alpha\in\{.75,.90,1\}$.

\paragraph{Feedback.}
The cyclic study retains the three states $\{\text{none},i\to j,j\to i\}$ and a $3\times3$ error matrix. Its 81 candidates independently assign these states to $\{0,1\}$, $\{1,2\}$, $\{0,2\}$, and $\{2,3\}$; the other pairs remain absent. This support allows directed cycles but not reciprocal edges. Before standardization, the true SEM has $W_{01}=W_{12}=W_{20}=c$, $W_{23}=.6$, all other coefficients zero, and $\Omega=I$, with $c\in\{.2,.5,.8\}$ and report accuracies $\alpha\in\{.75,1\}$. The separately reported DAG control uses the same data and reports with BGe scoring, the $.01$ base edge prior, and Top-$K$ search with 20 restarts and up to three HB updates. Cyclic SHD thresholds edge marginals at $>.5$ and counts ordered adjacency disagreements, so a reversal costs two.

Table~\ref{tab:other_graph_classes} reports results pooled over the tested strengths at $\alpha=.75$.

\begin{table}[htbp]
\centering
\small
\caption{Controlled tests with compatible graph support at $\alpha=.75$. Higher posterior masses and lower SHD are better. These tests concern finite four-node candidate sets.}
\label{tab:other_graph_classes}
\begin{tabular}{llrr}
\toprule
Graph class & Quantity & No-KG & HB \\
\midrule
ADMG & True-candidate mass $\uparrow$ & $.197$ & $.812$ \\
ADMG & Mass on $0\leftrightarrow1$ $\uparrow$ & $.591$ & $.926$ \\
Cyclic & SHD $\downarrow$ & $3.03$ & $.57$ \\
\bottomrule
\end{tabular}
\end{table}

With the compatible cyclic backend, HB attains SHD zero at $\alpha=1$. The DAG-only BGe comparison gives E-SHD $2.02$; any DAG must omit at least one edge of the cycle. The observation model can therefore use reports about broader graph classes when the pair states and posterior support represent those relations. The main DAG implementation retains its causally sufficient, acyclic scope.

\section{Optical-system evaluation}
\label{app:optical}

\subsection{Data and report elicitation}

We use the \texttt{uniform\_reference} observations in the public Causal Chambers dataset \texttt{lt\_interventions\_standard\_v1}~\citep{gamella2025chambers}, whose data are distributed under CC BY 4.0. Following its independent-observation causal-discovery example, we select 20 variables: the red, green, and blue source settings; electrical current; three infrared and three visible-light readings; two polarizer motor settings and two angle readings; and six auxiliary-light settings. The induced reference graph has 39 directed edges. The standard configuration fixes the remaining sensor parameters, and the reference graph is used only to evaluate the inferred structures.

We shuffle the 10,000 observations with seed 8467 and divide them into ten disjoint blocks of 1,000 rows. The first 250 rows of each shuffled block form its smaller-sample condition. Each training subset is centered and scaled using its own mean and population standard deviation. All methods receive the same subset and preprocessing. The two sample sizes are therefore paired, and all blocks describe one physical apparatus.

The LLM receives a generic optical-apparatus description and the variables' physical descriptions, requesting immediate direct parents. Natural labels retain dataset names; a permutation at seed 8468 assigns random IDs with the same descriptions. The dataset name, paper title, and reference graph are excluded from the prompt. GPT-4.1-mini produces four batches of five reports per condition at temperatures $.65,.70,.75,.80,.85$ with a common JSON schema. All forty calls return valid reports. Omitted pairs are encoded as abstentions.

For source evaluation, aggregation retains directed edges reported in at least two of five rounds. Table~\ref{tab:optical_sources} gives precision, recall, directed-edge F1, and coverage over 190 unordered pairs. Single-round means average twenty rounds, and aggregate means average four batches. Graph inference uses the individual rounds, with the same batches at both sample sizes.

\begin{table}[tbp]
\centering
\small
\caption{LLM report quality on the optical task. Coverage concerns unordered pairs; precision, recall, and F1 concern directed edges.}
\label{tab:optical_sources}
\begin{tabular}{llrrrrr}
\toprule
Labels & Reports & Precision & Recall & F1 & Coverage & Edges \\
\midrule
Natural & Single round & $.628$ & $.267$ & $.374$ & $.086$ & $16.40$ \\
Natural & At least $2/5$ rounds & $.629$ & $.301$ & $.407$ & $.099$ & $18.75$ \\
Random IDs & Single round & $.669$ & $.282$ & $.397$ & $.087$ & $16.45$ \\
Random IDs & At least $2/5$ rounds & $.657$ & $.282$ & $.395$ & $.088$ & $16.75$ \\
\bottomrule
\end{tabular}
\end{table}

\subsection{Matched inference and evaluation}

All four methods use BGe with the same base edge prior $\theta_0=.01$, allow up to 19 parents per node, and impose no known-root constraints. Each fit makes four graph calls: one data-only initialization and three updates, with 20 search restarts and at most 60 retained DAGs per call. The report-based methods use warm blend $\lambda_{\mathrm{warm}}=.5$ and report-reward scale one. No-KG uses a zero report reward in all four calls. Search seeds are $8467+b$ for block $b=0,\ldots,9$.

Full HB uses the feature-conditioned pair reliability model. The common-reliability control fits a single Beta reliability independently, counting its prior once as in Appendix~\ref{app:common_reliability}. Fixed-reliability HB (Fixed-$\rho$ HB in tables) uses the same observation channel as HB, with
\[
 P(Y=z\mid Z=z)=.8,\qquad
 P(Y=y\mid Z=z)=.2B_{zy}\quad(y\ne z).
\]
Its error matrix has zero diagonal and initial off-diagonal Dirichlet parameters one, and is updated from that control's own graph posterior at each step. Thus this comparison holds the likelihood form fixed while replacing inferred reliability by a fixed value. The three report-based methods each have $2$ sample sizes $\times\,2$ label conditions $\times\,10$ blocks $\times\,4$ batches, or 160 fits. No-KG has 20 fits shared across label conditions and batches, giving 500 fits in total.

SHD thresholds directed edge probabilities at $>.5$ and counts a reversal as two errors. E-SHD sums expected directed adjacency disagreements under the weighted graph samples. AUROC assigns half credit to tied scores, and Brier score averages squared error over the 380 off-diagonal directed entries. Table~\ref{tab:optical_metrics} gives all four metrics; Table~\ref{tab:optical_paired} reports paired SHD and AUROC differences.

Exploratory 95\% intervals use 10,000 percentile bootstrap replicates at seed 8469. We resample the ten data blocks and four whole report batches separately, following \citet{owen2007pigeonhole}, and preserve method pairing. No-KG uncertainty uses blocks alone. Intervals describe variation among the observed blocks and batches from this apparatus and have no multiplicity adjustment.

\begin{table}[tbp]
\centering
\small
\caption{All optical-task metrics. Fixed-$\rho$ HB uses $\rho=.8$ with learned $B$. Report-based means average ten blocks and four report batches; the repeated no-KG entries denote the same ten fits at each sample size. SHD, E-SHD, and Brier are lower-is-better; AUROC is higher-is-better.}
\label{tab:optical_metrics}
\begin{tabular}{rllrrrr}
\toprule
$N$ & Labels & Model & SHD & E-SHD & AUROC & Brier \\
\midrule
1000 & Natural & No-KG & $39.000$ & $39.9274$ & $.7383$ & $.0945$ \\
 & & HB & $29.550$ & $29.5899$ & $.7579$ & $.0765$ \\
 & & Fixed-$\rho$ HB & $31.200$ & $31.3108$ & $.7872$ & $.0814$ \\
 & & Common reliability & $44.800$ & $44.8140$ & $.6961$ & $.1141$ \\
 & Random IDs & No-KG & $39.000$ & $39.9274$ & $.7383$ & $.0945$ \\
 & & HB & $27.300$ & $27.2371$ & $.7643$ & $.0702$ \\
 & & Fixed-$\rho$ HB & $28.300$ & $28.3484$ & $.8182$ & $.0740$ \\
 & & Common reliability & $42.500$ & $43.8825$ & $.6992$ & $.1106$ \\
\midrule
250 & Natural & No-KG & $47.900$ & $48.3628$ & $.6938$ & $.1189$ \\
 & & HB & $36.250$ & $36.5519$ & $.7310$ & $.0927$ \\
 & & Fixed-$\rho$ HB & $31.775$ & $31.7803$ & $.7856$ & $.0807$ \\
 & & Common reliability & $55.025$ & $55.2041$ & $.6347$ & $.1393$ \\
 & Random IDs & No-KG & $47.900$ & $48.3628$ & $.6938$ & $.1189$ \\
 & & HB & $37.600$ & $37.5203$ & $.7306$ & $.0966$ \\
 & & Fixed-$\rho$ HB & $33.350$ & $33.1677$ & $.7994$ & $.0851$ \\
 & & Common reliability & $55.700$ & $55.2433$ & $.6328$ & $.1391$ \\
\bottomrule
\end{tabular}
\end{table}

\begin{table}[tbp]
\centering
\small
\caption{Optical-task paired contrasts: HB minus the comparator, with exploratory 95\% crossed-bootstrap intervals. Fixed-$\rho$ HB uses $\rho=.8$ with learned $B$. Negative SHD and positive AUROC differences favor HB.}
\label{tab:optical_paired}
\begin{tabular}{rllrr}
\toprule
$N$ & Labels & Comparator & $\Delta$SHD [95\% interval] & $\Delta$AUROC [95\% interval] \\
\midrule
1000 & Natural & No-KG & $-9.45\;[-16.30,-1.38]$ & $.0195\;[.0044,.0336]$ \\
 & & Fixed-$\rho$ HB & $-1.65\;[-5.48,1.73]$ & $-.0294\;[-.0573,-.0026]$ \\
 & & Common reliability & $-15.25\;[-28.60,-3.40]$ & $.0617\;[.0260,.0968]$ \\
 & Random IDs & No-KG & $-11.70\;[-16.68,-6.43]$ & $.0260\;[.0088,.0419]$ \\
 & & Fixed-$\rho$ HB & $-1.00\;[-6.73,4.40]$ & $-.0538\;[-.0896,-.0159]$ \\
 & & Common reliability & $-15.20\;[-28.13,-4.13]$ & $.0651\;[.0213,.1095]$ \\
\midrule
250 & Natural & No-KG & $-11.65\;[-16.08,-6.45]$ & $.0372\;[.0210,.0502]$ \\
 & & Fixed-$\rho$ HB & $4.48\;[.20,9.08]$ & $-.0546\;[-.0885,-.0169]$ \\
 & & Common reliability & $-18.78\;[-27.03,-10.45]$ & $.0964\;[.0767,.1130]$ \\
 & Random IDs & No-KG & $-10.30\;[-14.78,-5.53]$ & $.0368\;[.0285,.0460]$ \\
 & & Fixed-$\rho$ HB & $4.25\;[1.08,7.30]$ & $-.0688\;[-.0909,-.0392]$ \\
 & & Common reliability & $-18.10\;[-26.68,-9.65]$ & $.0978\;[.0769,.1155]$ \\
\bottomrule
\end{tabular}
\end{table}

HB improves all four means over no-KG and independent common reliability in each condition. These gains persist under random IDs with preserved descriptions. At $N=1000$, HB has lower SHD, E-SHD, and Brier than fixed-reliability HB. Fixed-reliability HB performs better at $N=250$ and has higher AUROC at both sizes. The experiment demonstrates useful integration of sparse, imperfect reports in a physical domain using semantic descriptions of the measured variables.

\end{document}